%% file: FifthParallel.tex
\documentclass{article}
\usepackage{graphicx} 
\usepackage{amssymb, latexsym, amsmath, amscd}

\usepackage{enumitem}
\usepackage{pdfpages}

\newcommand{\Prob}{\rm Prob}

\newcommand{\Expect}{\bf\rm \mathbb{E}}

\newcommand{\upperbound}{1+\frac{1+\ln(1+p)}{p}+\frac{1}{p^2}\left(2\left({\ln(1+p)}\right)^2+{\ln(1+p})\right)}

\newcommand{\lowerbound}{\left( 1+\frac{1+\ln (1+p)}{p}-\frac{d\ln\ln p}{p}\right)}

\begin{document}

\include{mac90}

\title{
Adaptivity via a Parallel Architecture for Stochastic Gradient Methods
}

\author{Bin Fu\\\\
 University of Texas Rio Grande Valley
 \\
 bin.fu@utrgv.edu
 }
\date{} 
\maketitle


\begin{abstract}
Let $\mathrm{A}(x_0,y)$ be an algorithm with two inputs: an initial point $x_0$ and an integer parameter $y$, which specifies that $\mathrm{A}(.,.)$ executes at most $y$ iterations or steps. Given an integer $p\ge 1$, the $p$ processors in the proposed parallel framework search over a geometric sequence of iteration budgets to identify an appropriate value of $T$ for which has the desired convergence guarantees for $A(.)$. Each processor executes an infinite sequence of stages indexed by $i=0,1,2,\ldots$. At stage $i$, processor $j$ is assigned
$T_{j,i}=h(j,i),$
where $h:\mathbb{N}\times\mathbb{N}\rightarrow\mathbb{R}^{+}$ is a prescribed function. Processor $j$ $(j=0,1,\ldots,p-1)$ then executes $\mathrm{A}(x_0,T_{j,i})$.

The efficiency of the parallel framework is characterized by its $(p,\alpha_p)$-approximation guarantee. Specifically, for every integer $T\ge T_0$, there exist a processor $j$ and a stage $i$ such that
$T\le T_{j,i}\le T_{j,i}^*<\alpha_p T,$
where
$T_{j,i}^*=\sum_{t=0}^{i}T_{j,t}$
denotes the cumulative number of iterations executed by processor $j$ from the beginning to  stage $i$. Thus, $T_{j,i}^*$ represents the total computational effort expended by processor $j$ before completing stage $i$. We prove that this framework achieves a $(p,\alpha_p)$-approximation, where
$
\alpha_p
\le \upperbound$ for all $p\ge 1$, 
and a lower bound showing that for any  fixed $d>1$, 
$\alpha_p\ge \lowerbound$
for all sufficiently large $p$.

We develop arithmetically simple stochastic gradient methods in which every division is of the form $x/2^t$ for some integer $t$, and integrate them into the proposed parallel framework. This yields a simpler convergence analysis for stochastic gradient descent on a nonconvex objective function $F(x)$ while preserving adaptivity to unknown problem parameters, including the Lipschitz smoothness constant and stochastic gradient characteristics such as the variance and noise level.

\end{abstract}

\input{FifthContent.tex}

\end{document}

\section{Introduction}

Stochastic Gradient Descent (SGD)~\cite{RobbinsAndMonro51} is one of the most widely used optimization methods in deep learning because of its efficiency and scalability in training large-scale neural networks. Unlike batch gradient descent, which computes the gradient using the entire training dataset at every iteration, SGD updates the model parameters using a single training example or a small mini-batch. Consequently, SGD requires significantly less memory and has a much lower computational cost per iteration. By processing only a small subset of the data at each step, SGD often converges more quickly in practice, particularly for large-scale datasets.



Gradient descent with diminishing step sizes has a long history. Classical stochastic approximation theory shows that the step sizes ${\eta_i}$ should satisfy
$
\sum_{i=1}^{\infty}\eta_i=+\infty
\quad\text{and}\quad
\sum_{i=1}^{\infty}\eta_i^2<+\infty
$
to guarantee convergence to a stationary point~\cite{RobbinsAndMonro51}. 
For stochastic optimization of smooth nonconvex functions, gradient descent with either a constant step size or a diminishing step size $\eta_i=O(1/\sqrt{i})$ achieves an $O(1/\sqrt{T})$ convergence rate to a stationary point~\cite{GhadimiAndLan13}. In particular, the analysis in~\cite{GhadimiAndLan13} selects the step size as
$\eta_i=\min \left(\frac{1}{L},
\sqrt{\frac{2(F(x_1)-F(x^*))}{L\sigma_0^2N}}
\right),
$
which depends on the Lipschitz smoothness constant $L$, the stochastic gradient variance parameter $\sigma_0$, and the optimality gap $F(x_1)-F(x^*)$, where $F(x^*)=\inf_x(F(x))$. Since these problem-dependent parameters are typically unknown in advance, the resulting static gradient method is non-adaptive. Moreover, the convergence rate of $O(1/\sqrt{T})$ is known to be optimal, matching the corresponding lower bound~\cite{BottouCurtisNocedal18,ArjevaniCDFSW23}.


Adaptive gradient descent methods have become widely used in deep learning in recent years. Unlike static gradient methods, adaptive methods dynamically adjust the learning rate during training according to the historical gradients of individual parameters. This adaptive mechanism reduces the need for manual tuning of learning rates and often improves optimization efficiency and robustness across a wide range of machine learning tasks. A large body of research has established convergence guarantees and convergence rates for adaptive gradient methods under various assumptions and optimization settings~\cite{DuchiHazanSinger2011,DBLP:journals/corr/abs-1002-4908,NemirovskiJuditskyLanShapiro2009,BottouCurtisNocedal2018,WardWuBottou19,XieWuWard2020,DBLP:journals/corr/abs-1212-5701}.


Since the introduction of AdaGrad~\cite{DuchiHazanSinger2011}, numerous adaptive gradient methods have been proposed, including AdaDelta~\cite{DBLP:journals/corr/abs-1212-5701}, Adam~\cite{KingmaBa2015}, AdamW~\cite{Loshchilov2019Decoupled}, AdaFTRL~\cite{OrabonaPal2015}, SGD-BB~\cite{TanMaDaiQian2016}, AdaBatch~\cite{DBLP:journals/corr/abs-1711-01761}, SC-AdaGrad~\cite{DBLP:conf/icml/MukkamalaH17}, AMSGrad~\cite{DBLP:conf/iclr/ReddiKK18}, and Padam~~\cite{DBLP:conf/ijcai/ChenZTYCG20}. These developments reflect the continuing effort to improve adaptive gradient methods by enhancing their efficiency, robustness, theoretical guarantees, and ease of use for large-scale machine learning applications.


Adaptive stochastic gradient descent methods dynamically adjust the step size according to predefined update rules. For example, AdaGrad-Norm~\cite{WardWuBottou19} updates the accumulated scaling factor and the model parameters as
$
s_{i+1}=s_i+|G(\xi,x_i)|^2,$
and
$x_{i+1}=x_i-\frac{\eta}{\sqrt{s_{i+1}}}\cdot G(\xi,x_i),
$
where $G(\xi,x_i)$ denotes the stochastic gradient evaluated at $x_i$. The convergence properties of adaptive stochastic gradient methods have been extensively studied in~\cite{WardWuBottou19,XieWuWard2020,FawRoutCaramanisShakkottai23,WangZhangMaChen023}. Under suitable assumptions, these methods are proven to converge to a stationary point with the optimal convergence rate of $O(1/\sqrt{N})$.


Parallel gradient descent has become an important optimization framework for large-scale machine learning and scientific computing because it enables gradient computations to be distributed across multiple processors, thereby significantly reducing training time and improving scalability. Early theoretical foundations for parallel and asynchronous iterative optimization were established by Dimitri P. Bertsekas and John N. Tsitsiklis~\cite{BertsekasTsitsiklis89}, who analyzed convergence properties under delayed and distributed updates.
Large-scale machine learning later motivated parallel SGD algorithms such as Hogwild \cite{recht2011hogwild}, parameter-server architectures \cite{li2014scaling}, and distributed deep learning systems \cite{dean2012large}. Recent adaptive parallel methods further combine distributed computation with adaptive learning-rate mechanisms for improved convergence behavior \cite{reddi2018convergence}.  More recent research has focused on adaptive and communication-efficient distributed optimization, including adaptive SGD methods \cite{cutkosky2018distributed} and multi-timescale distributed adaptive optimization frameworks \cite{iacob2025mtdao}.

\subsection{Our Contributions}

Our goal is to endow static gradient methods with adaptivity through a parallel framework. A static gradient method, denoted by $\mathrm{GD}(x_0,T)$, takes as input an initial point $x_0$ and an integer $T$ specifying the number of iterations. The step size is determined by
$s=S(T),$
where $S(\cdot)$ is a prescribed function of $T$. The method then performs the iterations
$
x_{i+1}=x_i-\frac{\eta}{s}\cdot g_i,
$
where $g_i$ is a stochastic gradient evaluated at $x_i$, and $\eta$ is a scaling factor. 
A fundamental challenge in applying a static gradient method is selecting an appropriate value of $T$. The parameter $T$ must be sufficiently large to guarantee the desired convergence, yet the required number of iterations typically depends on unknown problem characteristics, such as the Lipschitz smoothness constant and the stochastic gradient parameters. Our parallel framework addresses this challenge by searching for a suitable value of $T$ through parallel execution.


We develop a parallel framework for gradient descent that searches for an appropriate iteration budget $T$ in parallel according to a carefully designed geometric sequence. The framework assembles multiple static gradient methods into an adaptive gradient descent method. It consists of $p$ processors running in parallel. The number of iterations assigned to processor $j$ at stage $i$ is determined by $
T_{j,i}=h(j,i).
$
Processor $j$ $(j=0,1,\ldots,p-1)$ executes $\mathrm{GD}(x_0,T_{j,i})$ at stage $i$ ($i=0,1,2,\ldots$). We choose
$
h(j,i)=b_p^{jp+i}T_0,
$
where
$
b_p=(p+1)^{1/p},
$
and $T_0$ is the minimum number of iterations assigned to any stage.

We prove that, for every $T\ge T_0$, there exist a processor $j$ and a stage $i$ such that
\[T\le T_{j,i}\le T_{j,i}^*<\alpha_pT,\]
where
$T_{j,i}^*=\sum_{t=0}^{i}T_{j,t}$ 
is the total number of iterations executed by processor $j$ through stage $i$, and
\[
\alpha_p=\left(1+\frac{1}{p}\right)(p+1)^{1/p}\le\upperbound.
\]
The approximation factor $\alpha_p$ measures the computational overhead incurred before a processor reaches an iteration budget that is sufficient to satisfy the desired convergence guarantee. A smaller value of $\alpha_p$ indicates that fewer iterations are wasted in the preceding stages.

We further establish a nearly matching lower bound by proving that, for any scheduling function $h(j,i)$ and any constant $d>1$, every $(p,\alpha_p)$-approximation must satisfy
\[
\alpha_p\ge\lowerbound
\]
for all sufficiently large $p$.

The convergence behavior of the resulting parallel algorithm is therefore essentially the same as that of the underlying static gradient method $\mathrm{GD}(\cdot)$, whose convergence is determined by the iteration budget $T$. Consequently, our framework provides the adaptivity of parallel search while preserving the relatively simple convergence analysis of static gradient methods. Theoretical analysis establishes nearly matching upper and lower bounds on $\alpha_p$, revealing an intrinsic tradeoff between parallelism, adaptivity, and computational overhead.

We develop a static gradient descent method under the following $(\lambda,\sigma_0,\sigma_1)$-stochastic model. Let $\xi$ be a random variable, and let $G(\xi,x)$ denote a stochastic gradient of $F(x)$. We assume that

\[
\left\langle \mathbb{E}_{\xi}[G(\xi,x)],,\nabla F(x)\right\rangle
\ge
\lambda|\nabla F(x)|^2
\]
for some $\lambda\in(0,\infty)$; and

\[
\mathbb{E}_{\xi}\left[|\nabla F(x)-G(\xi,x)|^2\right]
\le
\sigma_0^2+\sigma_1^2|\nabla F(x)|^2
\]
for some $\sigma_0,\sigma_1\in[0,\infty)$.

This model generalizes the standard stochastic gradient model, which assumes
$
\mathbb{E}_{\xi}[G(\xi,x)]=\nabla F(x),
$
and
$
\mathbb{E}_{\xi}\left[|\nabla F(x)-G(\xi,x)|^2\right]
\le
\sigma_0^2
$
for some $\sigma_0\in[0,\infty)$.

Rigorous convergence analysis of stochastic and adaptive gradient methods is essential for understanding their theoretical behavior, improving their performance, and ensuring their reliability across a broad range of machine learning tasks. Such analyses also reveal how convergence depends on problem characteristics and algorithmic hyperparameters, thereby guiding the design of more robust optimization algorithms.

To the best of our knowledge, convergence guarantees under the above $(\lambda,\sigma_0,\sigma_1)$-stochastic model, in which all three parameters $\lambda$, $\sigma_0$, and $\sigma_1$ are allowed to be positive, have not been established in the existing literature. We establish the following convergence results under this model.


We develop a new gradient descent method under the proposed $(\lambda,\sigma_0,\sigma_1)$-stochastic model. Given an iteration budget of $T$ steps, the method performs the updates
\[
x_{j+1}=x_j-\frac{\eta}{s_T}G(\xi,x_j),
\]
where
$s_T=2^{\left\lceil \frac{\lceil\log T\rceil}{2}\right\rceil},$
and $\eta>0$ is an arbitrary input parameter.

Assuming that the objective function satisfies the standard $L$-Lipschitz smoothness condition,
\[
|\nabla F(x)-\nabla F(y)|\le L|x-y|,
\]
we prove that the proposed method converges to a stationary point for nonconvex optimization under the $(\lambda,\sigma_0,\sigma_1)$-stochastic model. Moreover, it achieves the optimal convergence rate of $O(1/\sqrt{T})$.

In the gradient descent methods proposed in this paper, every denominator is of the form $2^t$ for some integer $t$. As a result, division and square-root operations, which are commonly used in gradient descent algorithms, are eliminated and replaced by binary shift operations. This simplification makes the proposed methods more suitable for efficient hardware implementation and chip design.

The convergence analysis of the proposed stochastic model provides a theoretical explanation for why a parallel search over the iteration budget $T$ is necessary. Our algorithm is parameter-adaptive: it automatically adapts to the unknown Lipschitz smoothness constant $L$ and the stochastic gradient parameters $\lambda$, $\sigma_0$, and $\sigma_1$. Moreover, the parallel architecture is constructed independently of these unknown parameters, making the framework broadly applicable without prior knowledge of the optimization problem.


\subsection{Organization of This Paper}

The remainder of this paper is organized as follows. In Section~\ref{overview-sec}, we present an overview of the proposed parallel framework for static gradient descent. 
Section~\ref{parallle-sec} formally introduces the parallel model and the notion of a $(p,\alpha_p)$-approximation. In Section~\ref{Upper-sec}, we derive upper bounds on $\alpha_p$, while Section~\ref{lower-sec} establishes corresponding lower bounds. 
Section~\ref{alg-sec} 
presents the convergence analysis of the proposed static stochastic gradient method and demonstrates how it fits into the parallel framework. In Section~\ref{parallle2-sec}, we introduce a refined parallel model that avoids repeatedly restarting from the same initial point $x_0$. 
Instead, it progressively replaces $x_0$ 
with an improved starting point 
$x_0^*$ satisfying 
$F(x_0^*)\le F(x_0)$. 
Finally, we conclude that adaptivity can be achieved through parallelization while preserving the simplicity of convergence analysis for static gradient methods.


\section{Overview of Our Method}\label{overview-sec}

In the parallel framework developed in this paper, we assume that $p$ processors execute concurrently. The processors cooperatively search for a suitable iteration budget $T$ for the given static gradient method $\mathrm{GD}(x_0,T)$ by exploring a geometric sequence of candidate values. Each processor $j$ $(j=0,1,\ldots,p-1)$ proceeds through an infinite sequence of stages. At stage $i$, processor $j$ is assigned an iteration budget $T_{j,i}$ and executes $\mathrm{GD}(x_0,T_{j,i})$.

The candidate iteration budgets are selected from the geometric sequence
\[
T_0,\ (1+\epsilon)T_0,\ (1+\epsilon)^2T_0,\ \ldots,\ (1+\epsilon)^kT_0,\ \ldots.
\]
Specifically, we define
\[
T_{j,i}=h(j,i)=(1+\epsilon)^{ip+j}T_0,
\]
for $j=0,1,\ldots,p-1$. Consequently, for any desired iteration budget $T$ that is sufficiently large to satisfy the convergence guarantee, there always exists a value $T_{j,i}$ in the sequence such that $T_{j,i}$ is only slightly larger than $T$.

The scheduling function $h(j,i)$ and the parameter $\epsilon$ are determined by the number of processors $p$. As $p$ increases, the value of $\epsilon$ decreases, yielding a denser geometric sequence and thereby reducing the gap between the selected iteration budget and the desired value $T$.



\begin{figure}[htbp]
    \centering
\includegraphics[page=1,width=360pt]{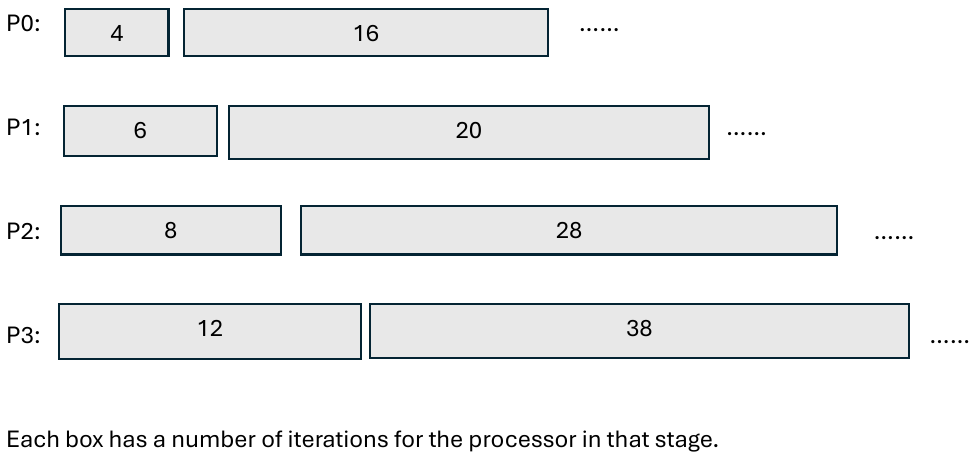}\caption{Four Parallel Processors with Infinitely Many Stages.}
    \label{Fig}
\end{figure}

We show that, for every target iteration budget $T$, there exist a processor $j$ and a stage $i$ such that
\[
T\le T_{j,i}\le \sum_{t=0}^{i}T_{j,t}<\alpha_pT.
\]
We derive both upper and lower bounds for $\alpha_p$, and show that these bounds are nearly tight in the proposed parallel model. The scheduling function $h(j,i)$ and the parameter $\epsilon$ are designed according to the number of processors $p$. As $p$ increases, $\alpha_p$ approaches $1$, implying that only a small amount of computation is wasted before reaching an iteration budget that satisfies the desired convergence guarantee.

Figure~\ref{Fig} illustrates the parallel framework with four processors. Each rectangle represents one stage of a processor, and the integer inside the rectangle denotes the number of iterations assigned to that stage. For example, when $T=26$, processor $P_2$ reaches Stage~1 with
\[
T\le T_{2,1}=28,
\]
and the cumulative number of iterations executed by that processor is
\[
T_{2,1}^*=T_{2,0}+T_{2,1}=36.
\]

We design the parameter $\epsilon$ to balance two competing objectives: efficiently locating a suitable iteration budget $T_{j,i}$ and keeping the approximation factor $\alpha_p$ close to $1$. Once a sufficiently large iteration budget $T_{j,i}$ is identified, the corresponding step size, determined by the function $S(T_{j,i})$, satisfies the conditions required for the convergence guarantee of the underlying static gradient method $\mathrm{GD}(\cdot)$.





\section{A Parallel Framework for Gradient Methods}\label{parallle-sec}

In this section, we introduce a parallel architecture that enables a gradient method to adapt automatically to unknown problem parameters. By running multiple instances of a static gradient method in parallel, each with a different fixed step size, our framework transforms a static gradient method into an adaptive one.



Let $\mathbb{R}=(-\infty,+\infty)$ denote the set of real numbers, and let $\mathbb{R}^{+}=(0,+\infty)$ denote the set of positive real numbers. Let $\mathbb{N}=\{0,1,2,\ldots\}$ denote the set of nonnegative integers. For a real number $x$, let $\lceil x\rceil$ denote the smallest integer greater than or equal to $x$, and $\floor{x}$ denote the largest integer less than or equal to $x$.


\subsection{Parallel Frameowrk}

\begin{definition}
A function $h:\mathbb{N}\times\mathbb{N}\rightarrow\mathbb{R}$ is called \emph{geometric} if there exist constants $h_0>0$, $a_0>1$, and an integer $p\ge1$ such that
\[
h(j,i)=h_0a_0^{jp+i}
\]
for all $j,i\in\mathbb{N}$.
Equivalently, the values of $h(j,i)$ are given by the geometric sequence
\[
h_0,\ h_0a_0,\ h_0a_0^2,\ \ldots.
\]
\end{definition}



We have the following parallel framework that calls a static gradient descent method $GD(x_0, T)$. The parallel executions of GD$(x_0,T_{j,i})$ finds a $T_{j,i}$ that will satisfy the condition of convergence. 

\vskip 20pt
{\bf Algorithm} Parallel-Framework$(A(.,.),  x_0, h(.,.), T_0, p)$

Related Parameters: 

\begin{itemize}
\item
$A(x_0, T)$ is an algorithm  with input point $x_0$, and an integer $T$ for the number of  iterations or steps to execute $A(.)$.

    \item $T_0$ is the least number of steps to execute

    \item $h(j, i): \mathbb{N}\times \mathbb{N}\rightarrow \mathbb{N}$ is a function to assign the number of iterations when calling a existing gradient descent method.


    \item $x_0\in \mathbb{R}^m$ is the start point,

    \item $p$ is the number of processors.
\end{itemize}

Processor $j$ ($j=0,1,\ldots, p-1$):

\begin{enumerate}[label=\arabic*.]

   \item $i=0$

\item Repeat

   \item $\{$ 

     \item \qquad Let $T_{j,i}=h(j,i)$

\item \qquad $A(x_0, T_{j,i})$


    \item \qquad  $i=i+1$

\item $\}$


\end{enumerate}

{\bf End of Algorithm}

\begin{definition}\label{App-def}
Let $\mathrm{Parallel\mbox{-}Framework}(\cdot)$ denote the parallel framework defined by the algorithm.

\begin{enumerate}
\item
We say that $\mathrm{Parallel\mbox{-}Framework}(\cdot)$ has a {\it $(p,\alpha_p)$-approximation} if it consists of $p$ processors indexed by $0,1,\ldots,p-1$, and for every integer $T\ge T_0$, there exist a processor $j<p$ and a stage $i$ such that
\begin{eqnarray}
T\le T_{j,i}\le T_{j,i}^*<\alpha_pT,\label{basic-ineqn}    
\end{eqnarray}

where $
T_{j,i}^*=\sum_{t=0}^{i}T_{j,t}.$

\item
A {\it geometric parallel framework of $p$ processor} is a parallel framework in which the scheduling function $h(j,i)$ for $p$ processors generates the iteration budgets according to a geometric progression, and can be expressed as $h(j,i)=b_p^{ip+j}T_0$ for some $b_p>1$ and $T_0\ge 1$.
\end{enumerate}
\end{definition}

In Definition~\ref{App-def}, the condition (\ref{basic-ineqn})
measures the computational overhead incurred before reaching an iteration budget $T_{j,i}$ that is at least the target value $T$. We derive both upper and lower bounds for the approximation factor $\alpha_p$. Furthermore, the proposed parallel framework guarantees that $\alpha_p$ can be made arbitrarily close to $1$ as the number of processors $p$ increases.

Lemma~\ref{monotonic-lemma} establishes a monotonicity property of $T_{j,i}$ and $T_{j,i}^*$ in the geometric parallel framework with $p$ processors. This property will be used to derive a lower bound that matches the corresponding upper bound for $\alpha_p$ in a geometric parallel framework.

\begin{lemma}\label{monotonic-lemma}
For the geometric parallel framework with $p$ processors, if $0 \le j < k \le p-1$, then
\[
T_{j,i}<T_{k,i}
\quad\text{and}\quad
T_{j,i}^*<T_{k,i}^*
\]
for every stage $i$.
\end{lemma}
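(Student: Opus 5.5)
We argue directly from the closed form given in Definition~\ref{App-def}: in a geometric parallel framework with $p$ processors, $T_{j,i}=h(j,i)=b_p^{ip+j}T_0$ with $b_p>1$ and $T_0\ge 1$. The whole statement reduces to the strict monotonicity of $x\mapsto b_p^{x}$ for $b_p>1$ and to summing such inequalities term by term.

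\textbf{Step 1 (the first inequality).} Fix $i$ and $0\le j<k\le p-1$. The exponents satisfy $ip+j<ip+k$. Since $b_p>1$, this gives $b_p^{ip+j}<b_p^{ip+k}$. Multiplying by $T_0>0$ yields $T_{j,i}<T_{k,i}$. One can also write this as the identity
\[
T_{k,i}=b_p^{\,k-j}\,T_{j,i}, \qquad b_p^{\,k-j}>1,
\]
which will be convenient below.

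\textbf{Step 2 (the cumulative sums).} By definition $T_{j,i}^*=\sum_{t=0}^{i}T_{j,t}$ and $T_{k,i}^*=\sum_{t=0}^{i}T_{k,t}$. Applying Step~1 to each stage $t=0,1,\ldots,i$ gives $T_{j,t}<T_{k,t}$. Summing these $i+1\ge 1$ strict inequalities gives $T_{j,i}^*<T_{k,i}^*$. Alternatively, the identity from Step~1 gives
\[
T_{k,i}^*=b_p^{\,k-j}\,T_{j,i}^*>T_{j,i}^*,
\]
using $T_{j,i}^*>0$.

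\textbf{Expected obstacle.} There is no substantive obstacle; the only point needing care is that the inequalities stay strict. This requires $b_p>1$ strictly and $T_0>0$, both guaranteed by the definition, and a sum with at least one term, which holds because $t$ ranges over $0,\ldots,i$.
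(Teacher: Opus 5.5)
Your proof is correct and takes the same route as the paper, which simply says the claim follows directly from the closed form $T_{j,i}=b_p^{ip+j}T_0$ together with $b_p>1$ and $T_0\ge 1$. Your term-by-term summation, or equivalently the scaling identity $T_{k,i}^*=b_p^{k-j}T_{j,i}^*$, just writes out the details the paper leaves implicit.
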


\begin{proof}
The result follows directly from Definition~\ref{App-def}, which defines $T_{j,i}$, $T_{j,i}^*$, and the geometric parallel framework, together with the assumptions $b_p>1$ and $T_0\ge 1$.
\end{proof}





\subsection{An Example for A(.)}

We first describe a static gradient method whose step size is determined by a function $S(T)$ of the prescribed number of iterations $T$. For example, let $S(T)=\sqrt{T}$. Instead of computing $\sqrt{T}$ exactly, we seek an integer $m$ such that
\[
2^m\in[\sqrt{T},\,4\sqrt{T}].
\]
This approximation eliminates square-root and division operations while preserving the desired asymptotic behavior.

We give a description of a static gradient descent. Its step size is determined by a function $S(T)$. For example, $S(T)=\sqrt{T}$. We tend to find an integer $m$ such that $S(T)=2^m\in [\sqrt{T}, 4\sqrt{T}]$.  This can remove division and square root operations.
\vskip 20pt
{\bf Algorithm} Static-SGD$(x_0, T)$

Related Parameters: 
\begin{itemize}
    \item 
$x_0$ is the start point
\item 
$T\in [1,+\infty)$ controls the number of iterations

\item 
$\eta$ is a scaling factor

\item 
$S(T):\mathbb{R^+}\rightarrow\mathbb{R^+}$ is a function to determine the stepsize based on $T$

\item 
$G(\xi, x)$ is a stochastic (approximate) gradient for $x$.
\end{itemize}

Steps:

\begin{enumerate}[label=\arabic*.]
    \item $i=1$

\item $x_1=x_0$

\item $s=S(T)$

    \item while $i\le T$ 

    \item $\{$

    \item \hskip 20pt $g_i=G(\xi, x_i)$

\item \hskip 20pt $x_{i+1}=x_i-\frac{\eta}{s} \cdot g_i$

    \item \hskip 20pt $i=i+1$

    \item $\}$

\end{enumerate}

{\bf End of Algorithm}

\vskip 20pt

\subsection{A Matrix Construction}

We transform the parallel framework into a $p\times \infty$ matrix $\Gamma_{p\times \infty}=(T_{j,i})_{p\times  \infty}$ for $j=0,1,\ldots$ and $i=0,1,\ldots$ (Both row and column indices start from $0$). The entry at row $j$ and column $i$ is denoted by $T_{j,i}$. $T_{j,i}^*$ and $(p,\alpha_p)$-approximation are defined as in Definition~\ref{App-def}.  A row $j$ in $\Gamma$ repents processor $j$ for its stages, and column $j$ in $\Gamma$ is for stage $j$. Matrix $\Gamma_{p\times \infty}=(T_{j,i})_{p\times  \infty}$ is generated by $h(j,i)$ if $T_{j,i}=h(j,i)$ for $j=0,1,\ldots,p-1$, and $j=0,1,\ldots$.
 
\section{Upper Bound for $\alpha_p$ in Parallel Model}\label{Upper-sec}
In this section, we show a $(p,\alpha_p)$-approximation for the parallel model. An upper bound for the parameter $\alpha_p$ will be derived.



\begin{lemma}\label{Taylor-lemma}
   For $x\in [0,1]$, $e^x\le 1+x+x^2$. 
\end{lemma}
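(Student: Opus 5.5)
We prove $e^x\le 1+x+x^2$ on $[0,1]$ by expanding $e^x$ in its Taylor series and bounding the tail by a geometric series. For $x\in[0,1]$,
\[
e^x = 1 + x + \sum_{k=2}^{\infty}\frac{x^k}{k!} = 1+x+x^2\sum_{k=2}^{\infty}\frac{x^{k-2}}{k!}.
\]
Since $0\le x\le 1$, each factor satisfies $x^{k-2}\le 1$. Hence the remaining sum is at most
\[
\sum_{k=2}^{\infty}\frac{1}{k!}=e-2.
\]
Because $e<3$, we have $e-2<1$. Multiplying by $x^2\ge 0$ gives $e^x\le 1+x+(e-2)x^2\le 1+x+x^2$.

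If we prefer to avoid the numerical value of $e$, we can use $k!\ge 2^{k-1}$ for $k\ge 2$ instead. This gives
\[
\sum_{k\ge 2}\frac{1}{k!}\le \sum_{k\ge 2}2^{-(k-1)}=1,
\]
which yields the same conclusion.

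The only point needing care is that the bound $x^{k-2}\le 1$ uses $x\le 1$. This is exactly where the restriction to $[0,1]$ enters; the inequality fails for large $x$.

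An alternative route is calculus. Let $f(x)=1+x+x^2-e^x$, so $f(0)=0$, $f'(0)=0$, and $f''(x)=2-e^x$. We have $f''>0$ on $[0,\ln 2)$ and $f''<0$ beyond it. So $f$ is convex and then concave, and $f(1)=3-e>0$. Nonnegativity follows from checking the endpoints of the concave part. The series argument is shorter, so we use it.
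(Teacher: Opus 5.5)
Your proof is correct and takes essentially the paper's route: expand $e^x$ as a Taylor series, use $x\le 1$, and bound the tail geometrically via $k!\ge 2^{k-1}$. The only difference is bookkeeping: the paper keeps $x^2/2$ separate and bounds the tail from $k=3$ by $x^3/2\le x^2/2$, whereas you factor $x^2$ out of all terms from $k=2$ onward.
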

\begin{proof}
  It follows from the Taylor expansion of $e^x$: $e^x=1+x+\frac{x^2}{2!}+\frac{x^3}{3!}+\ldots\le 1+x+\frac{x^2}{2!}+x^3(\frac{1}{3!}+\frac{1}{4!}+\ldots)\le 1+x+\frac{x^2}{2!}+x^3(\frac{1}{2^2}+\frac{1}{2^3}+\ldots)\le 1+x+x^2$.  
\end{proof}

\begin{lemma}\label{help-lemma}
If $p$ is an integer with $p\ge 1$,     then
\begin{eqnarray*}
\left(1+\frac{1}{p}\right)(1+p)^{1/p}\le 1+\frac{1+\ln(1+p)}{p}+\frac{1}{p^2}\left(2\left({\ln(1+p)}\right)^2+{\ln(1+p})\right).    
\end{eqnarray*}
\end{lemma}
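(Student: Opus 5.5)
Write $(1+p)^{1/p}=e^{x}$ with $x=\frac{\ln(1+p)}{p}$. The plan is to bound $e^{x}$ by Lemma~\ref{Taylor-lemma}, multiply out by $1+\frac1p$, and compare with the right-hand side term by term. Lemma~\ref{Taylor-lemma} needs $x\in[0,1]$. Since $\ln(1+p)\le p$ for every $p\ge 1$ (from $\ln(1+t)\le t$), we have $0\le x\le 1$, and therefore
\[
(1+p)^{1/p}=e^{x}\le 1+\frac{\ln(1+p)}{p}+\frac{(\ln(1+p))^2}{p^2}.
\]

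Next multiply by $1+\frac{1}{p}$ and expand:
\[
\left(1+\frac1p\right)(1+p)^{1/p}\le 1+\frac{1+\ln(1+p)}{p}+\frac{(\ln(1+p))^2+\ln(1+p)}{p^2}+\frac{(\ln(1+p))^2}{p^3}.
\]
Every term except the last one already matches the right-hand side of the lemma, apart from the coefficient of $(\ln(1+p))^2/p^2$.

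To absorb the cubic term, use $p\ge 1$:
\[
\frac{(\ln(1+p))^2}{p^3}\le \frac{(\ln(1+p))^2}{p^2}.
\]
The two $(\ln(1+p))^2/p^2$ contributions then combine to $2(\ln(1+p))^2/p^2$, which is exactly the stated bound.

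The only delicate point is checking the hypothesis $x\le 1$ of Lemma~\ref{Taylor-lemma}; everything else is routine expansion. The slack in the coefficient $2$ is precisely what absorbs the $1/p^3$ term, so no sharper estimate is needed.
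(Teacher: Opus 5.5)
Your proof is correct and follows the paper's argument essentially line by line. You write $(1+p)^{1/p}=e^{\ln(1+p)/p}$, apply Lemma~\ref{Taylor-lemma} after checking $\ln(1+p)/p\le 1$, expand the product with $1+\frac{1}{p}$, and absorb the $(\ln(1+p))^2/p^3$ term into $(\ln(1+p))^2/p^2$ using $p\ge 1$; the only cosmetic difference is that you justify $\ln(1+p)\le p$ via the standard inequality $\ln(1+t)\le t$, whereas the paper proves it by a short induction on $p$.
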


\begin{proof}
It is easy to verify that $\frac{\ln(1+p)}{p}<1$ for all integers $p\ge 1$. A simple induction shows $\ln (1+p)< p$. It is true at   $p=1$  as $e\approx 2.71828$. Assume $\ln (1+p)\le p$. We have $\ln(1+(p+1))< \ln e(1+p)= 1+\ln(1+p)< 1+p$.
By Lemma~\ref{Taylor-lemma}, we have
  \begin{eqnarray*}
(1+p)^{1/p}=e^{\frac{\ln(1+p)}{p}}\le 1+\left(\frac{\ln(1+p)}{p}\right)+\left(\frac{\ln(1+p)}{p}\right)^2.    
  \end{eqnarray*}

Therefore, 

\begin{eqnarray*}
   &&\left(1+\frac{1}{p}\right)(1+p)^{1/p}\le \left(1+\frac{1}{p}\right) \left(1+\left(\frac{\ln(1+p)}{p}\right)+\left(\frac{\ln(1+p)}{p}\right)^2\right)\\
   &=&1+\frac{1}{p}+\left(\frac{\ln(1+p)}{p}\right)+\left(\frac{\ln(1+p)}{p}\right)^2+\frac{1}{p}\left(\left(\frac{\ln(1+p)}{p}\right)+\left(\frac{\ln(1+p)}{p}\right)^2\right)\\
   &\le&1+\frac{1+\ln(1+p)}{p}+\frac{1}{p^2}\left(2\left({\ln(1+p)}\right)^2+{\ln(1+p})\right).
\end{eqnarray*}
\end{proof}

Theorem~\ref{main2b-thm} shows an upper bound for $\alpha_p$ for $(p,\alpha_p)$-approximation. It covers all the cases for $p\ge 1$. Its proof shows how to select function $h(.,.)$.

\begin{theorem}\label{main2b-thm}
Let function $h(j, i)=b_p^{ip+j}T_0$ and  $b_p=(p+1)^{\frac{1}{p}}$.
For any integer $T\ge T_0$, the Parallel-Framework(.) has $(p,\alpha_p)$-approximation   with $\alpha_p= (1+\frac{1}{p})(1+p)^{\frac{1}{p}}
\le \upperbound.$
\end{theorem}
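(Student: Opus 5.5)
Set $b=b_p=(p+1)^{1/p}$. The schedule $T_{j,i}=b^{ip+j}T_0$ assigns the exponent $k=ip+j$ to processor $j=k \bmod p$ at stage $i=\lfloor k/p\rfloor$. As $(j,i)$ ranges over $\{0,\dots,p-1\}\times\mathbb{N}$, $k$ ranges over every nonnegative integer exactly once, so the budgets form the full geometric sequence $T_0,bT_0,b^2T_0,\ldots$ with no gaps.

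Given an integer $T\ge T_0$, choose the least $k\ge 0$ with $b^kT_0\ge T$. This $k$ exists because $b>1$. Write $k=ip+j$ with $0\le j<p$, so that $T_{j,i}=b^kT_0\ge T$. There are two cases.
\begin{itemize}
\item If $k\ge 1$, minimality gives $b^{k-1}T_0<T$, hence $T_{j,i}<bT$.
\item If $k=0$, then $T_{j,i}=T_0\le T$. Combined with $T_{j,i}\ge T$ this forces $T_{j,i}=T$, and in particular $T_{j,i}<bT$ again.
\end{itemize}
The chain $T\le T_{j,i}\le T^*_{j,i}$ is immediate, since $T^*_{j,i}$ is a sum of positive terms containing $T_{j,i}$.

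The main step is bounding the cumulative cost of processor $j$ by a geometric series with ratio $b^p=p+1$:
\[
T^*_{j,i}=\sum_{t=0}^{i}b^{tp+j}T_0=T_{j,i}\sum_{s=0}^{i}(p+1)^{-s}<T_{j,i}\cdot\frac{1}{1-\frac{1}{p+1}}=\left(1+\frac1p\right)T_{j,i}.
\]
The inequality is strict because the series is truncated. Combining this with $T_{j,i}\le bT$ gives
\[
T^*_{j,i}<\left(1+\frac1p\right)(p+1)^{1/p}\,T=\alpha_pT,
\]
which is the required $(p,\alpha_p)$-approximation.

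The closed-form upper bound on $\alpha_p$ is then exactly Lemma~\ref{help-lemma}. The only real subtlety is the boundary case $k=0$ and keeping the final inequality strict. Strictness is secured by the truncated geometric sum being strictly less than $1+\frac1p$, so a non-strict bound $T_{j,i}\le bT$ would already suffice.
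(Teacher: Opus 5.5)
Your proof is correct and follows the paper's argument. You take the least budget $T_{j,i}\ge T$ so that $T_{j,i}<b_pT$, bound $T^*_{j,i}<\frac{b_p^p}{b_p^p-1}T_{j,i}=(1+\frac1p)T_{j,i}$ by the truncated geometric series, and finish with Lemma~\ref{help-lemma}. You differ only in three small ways:
\begin{itemize}
\item You substitute $b_p=(p+1)^{1/p}$ directly instead of deriving it by minimizing $x^{p+1}/(x^p-1)$; the statement fixes $b_p$, so that step is not needed.
\item You make explicit that every exponent $ip+j$ is covered exactly once, which the paper uses without comment.
\item You handle the boundary case $k=0$, which the paper leaves implicit.
\end{itemize}
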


\begin{proof} The processor $j$ will use the steps $b_p^{j}\cdot T_0, b_p^{p+j}\cdot T_0, b_p^{2p+j}\cdot T_0,\ldots, b_p^{ip+j}\cdot T_0,\cdots$. At phase $i$, processor $j$ uses $T_{j, i}=h(j,i)=b_p^{ip+j}T_0$ to control the number of  of iterations in GD($x_0, T_{j,i}$). The proof also shows how $b_p$ is computed to get a minimal $\alpha_p$.

Define
\begin{eqnarray*}
T_{j, i}^*=\sum_{t=0}^{i} b_p^{tp+j}T_0=b_p^jT_0\sum_{t=0}^{i} b_p^{tp}=b_p^jT_0\cdot \frac{b_p^{(i+1)p}-1}{b_p^p-1}< \frac{b_p^{(i+1)p+j}T_0}{b_p^p-1}=\frac{b_p^pT_{j,i}}{b_p^p-1}.  
\end{eqnarray*}

We note that $T_{j,t}=b_p^{tp+j}T_0$ is the number of steps in the $t$-th iteration. 
Let $T_{j,t}=b_p^{tp+j}\cdot T_0$ be the least with $T\le T_{j,t}$. We have $T\le T_{j,t}<b_pT$. 






\begin{eqnarray*}
T\le T_{j,t}\le T_{j,t}^*&<& 
 \frac{b_p^p}{b_p^p-1}\cdot T_{j,t}\\
&\le& \frac{b_p^{p+1}}{b_p^p-1}\cdot T
\end{eqnarray*}

Define $f(x)$ by 
    \begin{eqnarray}
   f(x)=\frac{x^{p+1}}{x^p-1}. 
\end{eqnarray}

Take derivative for $f(x)$.

    \begin{eqnarray}  f(x)'=\frac{(p+1)x^p(x^p-1)-px^{2p}}{(x^p-1)^2}. 
\end{eqnarray}

Let 
    \begin{eqnarray}
   (p+1)x^p(x^p-1)-px^{2p}=0. 
\end{eqnarray}
It transformed into

    \begin{eqnarray}
   x^p-(p+1)=0. 
\end{eqnarray}
So, we can let $b_p=(p+1)^{1/p}$ to have least $f(b_p)$.

So,
    \begin{eqnarray*}  f(b_p)&=&\frac{b_p^{p+1}}{b_p^p-1}=\frac{(1+p)b_p}{p}\\
   &=&\frac{(1+p)(1+p)^{1/p}}{p}   =\left(1+\frac{1}{p}\right){(1+p)^{1/p}}\\  &\le&\upperbound (by~Lemma~\ref{help-lemma}). 
\end{eqnarray*}

Therefore, if $b_p=(p+1)^{1/p}$,  we have $T\le T_{j,i}\le T_{j,i}^*\le \alpha_pT$ with  $\alpha_p= (1+\frac{1}{p})(1+p)^{\frac{1}{p}}$.

\end{proof}




    
We have Corollary~\ref{main2b-cor} for the cases $p=1,2$. They correspond to the cases for one processor, and two processors, respectively.

\begin{corollary}\label{main2b-cor}Let $p$ be the number of processors in Parallel-Framework(.). 
We have 
\begin{enumerate}
    \item 
For $p=1$,  
  Parallel-Framework(.) has $(1,4)$-approximation with  $b_1=2$.
\item 
For $p=2$, Parallel-Framework(.) has $(2,2.5981)$-approximation
 with  $b_2=\sqrt{3}$.
\end{enumerate}
\begin{proof}
   It follows from Theorem~\ref{main2b-thm} with $\alpha_p= (1+\frac{1}{p})(1+p)^{\frac{1}{p}}$. 
\end{proof}

\end{corollary}

Using the numerical solutions for the expression of $\alpha_p$ in Theorem~\ref{main2b-thm}, we have upper bounds below:
\begin{eqnarray*}
    \alpha_3&\le& 2.11654,
    \alpha_4\le 1.86919,
    \alpha_5\le 1.71717,
    \alpha_6\le 1.61229,
    \alpha_7\le 1.53459\\
    \alpha_8&\le& 1.474397,
    \alpha_{9}\le 1.42615,
    \alpha_{10}\le 1.38644,
    \alpha_{11}\le 1.35309,
    \alpha_{12}\le 1.32459,\\
    \alpha_{13}&\le& 1.29994,
    \alpha_{14}\le 1.27844,
    \alpha_{15}\le 1.25966,
    \alpha_{16}\le 1.24329.
\end{eqnarray*}

\section{Lower Bounds for $\alpha_p$ with Arbitrary $h(.,.)$}\label{lower-sec}

In this section, we show a lower bound in the parallel model. The lower bound of this section has a small gap with the upper bound of Section~\ref{parallle-sec}. Our lower bound almost matches the upper bound.

\begin{lemma}\label{lowerbd-lemma} For any function  $h(j, i)$, 
 if  Parallel-Framework(.) has   $(p,\alpha_p)$-approximation,  then  we have 
 \begin{enumerate}
     \item\label{first-case-alphap} for any positive integer $z$,  $\alpha_p\ge 1+\frac{1}{\alpha_p^p}+\frac{1}{\alpha_p^{2p}}+\ldots+\frac{1}{\alpha_p^{zp}}$, and   
     \item $\alpha_p\ge r_0$, where $r_0>1$ is a root of $x^p-x^{p-1}-1=0$.
 \end{enumerate}

\end{lemma}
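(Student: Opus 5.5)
The plan is to prove part~1 by an adversarial choice of the target $T$, and then obtain part~2 from part~1 by letting $z\to\infty$. Throughout, write $\alpha=\alpha_p$ and fix an arbitrary $h$ for which Parallel-Framework(.) has a $(p,\alpha)$-approximation.

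\textbf{Preliminary facts.}
\begin{itemize}
\item For every $t\ge T_0$ there is an entry of $\Gamma_{p\times\infty}$ in $[t,\alpha t)$. This holds because the witness satisfies $t\le T_{j,i}\le T^*_{j,i}<\alpha t$.
\item We may assume the witness for $t$ is the first entry of its row that is $\ge t$. Moving to an earlier qualifying stage only decreases $T^*_{j,i}$, so an earlier qualifying stage is again a witness.
\item Hence, if the witness is $(j,i)$, then $T_{j,i-1}<t$. The cumulative sum satisfies
\[
T^*_{j,i}=T_{j,i}+T_{j,i-1}+T_{j,i-2}+\cdots\ge t+T_{j,i-1}+T_{j,i-2}+\cdots .
\]
\end{itemize}
So part~1 reduces to choosing $t$ so that, in whichever row the witness lies, the predecessors satisfy
\[
T_{j,i-k}\ge t/\alpha^{kp}\qquad\text{for }k=1,\dots,z.
\]
Then $\alpha t>T^*_{j,i}\ge t\left(1+\alpha^{-p}+\cdots+\alpha^{-zp}\right)$, and dividing by $t$ gives part~1.

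\textbf{Choosing the target.} Fix a large threshold $X$. For each row $r$, let $c_r$ be the largest entry of row $r$ below $X$. Take $t$ just above $\max_r c_r$, say $t=\max_r c_r+\delta$ with $\delta$ tiny (or the next integer). Then every row's first entry $\ge t$ is its first entry $\ge X$, and the predecessor of the witness in row $j$ is exactly $c_j$.

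To show $c_j\ge t/\alpha^{p}$, apply the covering fact to the targets just above $c_{(2)},c_{(3)},\dots$, where $c_{(1)}\ge c_{(2)}\ge\cdots\ge c_{(p)}$ is the decreasing order of the $c_r$.
\begin{itemize}
\item The entries lying in $(c_{(k+1)},t)$ come only from rows $(1),\dots,(k)$, since each row's entries below $t$ are at most its $c_r$.
\item A covering entry for the target just above $c_{(k+1)}$ must lie in $[c_{(k+1)},\alpha c_{(k+1)})$, and it comes from those rows.
\item Iterating over $k=1,\dots,p-1$ forces the whole set $\{c_r\}$ into a window of ratio roughly $\alpha^{p}$ below $t$, i.e. $c_j>t/\alpha^p$ for every $j$ up to $\delta$-slack.
\end{itemize}
I expect this pigeonhole-over-$p$-rows step to be the main obstacle. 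A row may have several entries between consecutive $c$'s, so the bookkeeping must track "the largest entry of each row below the current target" rather than single entries.

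\textbf{Iterating the window.} The same argument, repeated with $X$ replaced by $\min_r c_r$, gives the next layer of predecessors within a further factor $\alpha^p$. Induction on $k$ then yields $T_{j,i-k}\ge t/\alpha^{kp}$ up to slack that vanishes as $\delta\to0$. Because we need strict versus non-strict inequalities only in the limit, the conclusion $\alpha\ge 1+\alpha^{-p}+\cdots+\alpha^{-zp}$ follows. The threshold $X$ is taken large enough that all the targets involved exceed $T_0$.

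\textbf{Part 2.} Note first that $\alpha>1$, since already $z=1$ gives $\alpha\ge 1+\alpha^{-p}$. Letting $z\to\infty$ in part~1 gives
\[
\alpha\ge \frac{1}{1-\alpha^{-p}} .
\]
This rearranges to $\alpha^{p+1}-\alpha\ge\alpha^{p}$, i.e.\ $\alpha^p-\alpha^{p-1}-1\ge 0$.

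Let $g(x)=x^p-x^{p-1}-1$. Then $g(1)=-1<0$ and $g'(x)=x^{p-2}(px-(p-1))>0$ for $x\ge1$. So $g$ has a unique root $r_0>1$ and is negative on $[1,r_0)$, which forces $\alpha\ge r_0$.
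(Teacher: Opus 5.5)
\textbf{There is a genuine gap in part 1.} The step you flagged as the main obstacle is where the proof breaks, and the problem is not bookkeeping: the claim that $c_j>t/\alpha^p$ for \emph{every} row $j$ is false. The covering fact only says that the union of all rows meets every interval $[s,\alpha s)$. One dense row can fill all such intervals while another row has arbitrarily large gaps.

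For instance, take $p=2$, $T_{0,i}=2^iT_0$ and $T_{1,i}=100^iT_0$. Row $0$ alone gives $T^*_{0,i}<2T_{0,i}<4T$ for its first entry $\ge T$, so this is a $(2,4)$-approximation. With $X=100^{n+1}T_0$ you get $c_0>X/2$, $c_1=X/100$ and $t=c_0+1$, so $c_1<t/50<t/\alpha^2$.

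Restricting to the witness row does not rescue the argument. The pair $(1,n+1)$ is a legitimate witness for $t$, since $t\le X<2c_0<2t$ and $T^*_{1,n+1}<\frac{100}{99}X<4t$, yet its predecessor is $X/100<t/16$. So the reduction ``in whichever row the witness lies'' fails. You would have to show that \emph{some} witness has large predecessors, and your sketch gives no mechanism for choosing one. (Even granting the window, re-anchoring at $\min_r c_r$ costs an extra factor $\alpha$ per layer, because covering only yields $\min_r c_r<\alpha(\max_r c'_r+\delta)$. That gives exponents $kp+k-1$ rather than $kp$, and this loss does not vanish as $\delta\to0$.)

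The missing idea is to average over a whole geometric family of targets instead of analysing a single target; this is the paper's argument.
\begin{itemize}
\item Set $V_k=\alpha^kT_0$ for $0\le k\le m$. An entry witnessing $V_k$ satisfies $V_k\le T_{j,i}\le T^*_{j,i}<V_{k+1}$, so it witnesses no other $V_{k'}$.
\item Hence the chosen witnesses in row $q$ are distinct entries at stages up to the last one, $i_q$. The targets they serve therefore sum to at most $T^*_{q,i_q}<\alpha V_{k_q}$.
\item The indices $k_q$ are distinct across at most $p$ rows, so they are dominated by the top $p$ indices.
\end{itemize}
Summing over rows gives
\[
\sum_{k=0}^{m}V_k<\alpha\sum_{k=m-p+1}^{m}V_k ,
\]
and choosing $m=(z+1)p-1$ yields exactly $\alpha>1+\alpha^{-p}+\cdots+\alpha^{-zp}$. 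No structural claim about individual rows is needed.

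Your part~2 is correct as written, and somewhat cleaner than the paper's. The function $x^p-x^{p-1}-1$ equals $-1$ at $x=1$ and is increasing on $[1,\infty)$, which gives uniqueness of $r_0$ and $\alpha\ge r_0$ directly.
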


\begin{proof} We fix $p$ and $\alpha_p>1$  (by its definition).  Define $T_{j,i}^*=\sum_{1\le i\le t}T_{j,t}$
Let consider the sequence $V_0=T_0, V_1=\beta V_0,\ldots, V_k=\beta^k T_0,\ldots$. By the condition of $(a,\alpha_p)$-approximation (Definition~\ref{App-def}), for each $V_k$, we have a $T_{j,i}$ such that $V_k\le T_{j, i}\le T_{j, i}^*\le \alpha_p V_k$. Let 
\begin{eqnarray}
H=\sum_{k=0}^mV_k=T_0(1+\beta+\beta^2+\ldots+\beta^m)=\frac{\beta^{m+1}-1}{\beta-1}T_0.  \label{H-eqn}    
\end{eqnarray}

We have that for each $k\le m$, $V_k\le T_{j, i_j}^*< \alpha_p V_k$. We will select $\beta=\alpha_p$. This makes the case for each $T_{j,i}$, there is at most one $V_k$ to have $V_k\le T_{j,i}\le T_{j,i}^*< \alpha_p V_k$. This is because $V_{k+1}=V_k\beta=V_k\alpha_p> T_{j,i}^*$. Thus, $V_{k+1}$ does not satisfy the inequality $V_{k+1}\le T_{j,i}\le T_{j,i}^*< \alpha_p V_{k+1}$. 

 Let $Q\subseteq \{0,1,\ldots, p-1\}$ such that for each $q\in Q$, there is a $V_k$ with $0\le k\le m$ and $V_k\le T_{q, i}\le T_{q, i}^*\le \alpha_p V_k$ for some $i$. For a $q\in Q$, let $T_{q, i_q}$ be the largest with $V_k\le T_{q, i_q}\le T_{q, i_q}^*< \alpha_p V_k$ for some $V_k$ ($0\le k\le m$).

Among the series $V_1,V_2,\ldots, V_m$, the largest $p$ items are $V_{m-p+1}, V_{m-p+2},\ldots, V_m$. For each $q\in Q$, there is only one $T_{q,i_q}$ according to its definition.
As each $V_k$ has at most one $T_{j,i}$ with $V_k\le T_{j, i}\le T_{j, i}^*< \alpha_p V_k$, we have inequality
\begin{eqnarray*}
    \sum_{j\in Q} T_{j, i_j}^*< \alpha_p\cdot \sum_{(m-p+1)\le k\le m} V_k.
\end{eqnarray*}

By equation (\ref{H-eqn}), we have 

\begin{eqnarray*}
\frac{\beta^{m+1}-1}{\beta-1}T_0=H&\le& \sum_{j\in Q} T_{j, i_j}^*< \alpha_p\cdot \sum_{(m-p+1)\le k\le m} V_k\\   &=& \alpha_p\cdot \sum_{(m-p+1)\le k\le m} \beta^kT_0\\
&=& \alpha_p\cdot\beta^{m-p+1} T_0(1+\beta+\ldots+\beta^{p-1})\\
&=& \alpha_p\cdot\beta^{m-p+1} \cdot \frac{\beta^p-1}{\beta-1}\cdot T_0.
\end{eqnarray*}

Therefore, 
\begin{eqnarray*}
   \alpha_p&\ge&  \frac{(\beta^{m+1}-1)}{\beta^{m-p+1} \cdot (\beta^p-1)}=\frac{\beta^{m+1}-1}{\beta^{m+1}-\beta^{m-p+1}}\\
   &=&\frac{1-\frac{1}{\beta^{m+1}}}{1-\frac{1}{\beta^p}}
\end{eqnarray*}

Let $m=(z+1)p-1$. We have
\begin{eqnarray*}
   \alpha_p\ge  \frac{1-\frac{1}{\beta^{(z+1)p}}}{1-\frac{1}{\beta^p}}=1+\frac{1}{\beta^p}+\frac{1}{\beta^{2p}}+\ldots+\frac{1}{\beta^{zp}}.
\end{eqnarray*}

As $\beta=\alpha_p$, this proves (\ref{first-case-alphap}) of the lemma. 
We have
\begin{eqnarray*}
   1\ge \frac{1}{\alpha_p}++\frac{1}{\alpha_p^{p+1}}+\frac{1}{\alpha_p^{2p+1}}+\ldots+\frac{1}{\alpha_p^{zp+1}}.
\end{eqnarray*}

The number $\alpha_p$ is fixed in the beginning of this proof. Taking limit for $z\rightarrow+\infty$, we have 

\begin{eqnarray*}
   1\ge \frac{1}{\alpha_p}++\frac{1}{\alpha_p^{p+1}}+\frac{1}{\alpha_p^{2p+1}}+\ldots+\frac{1}{\alpha_p^{zp+1}}+\ldots.
\end{eqnarray*}

We consider the equation, 
\begin{eqnarray}
1&=&\frac{1}{x}+\frac{1}{x^{p+1}}+\frac{1}{x^{2p+1}}+\ldots+\frac{1}{x^{zp+1}}+\ldots\label{x-eqn}  \\
&=&\frac{1}{x}\cdot \frac{1}{1-\frac{1}{x^p}}= \frac{x^{p-1}}{x^p-1}
\end{eqnarray}

Thus, we have equation $x^p-x^{p-1}-1=0$. If $r_0>1$ is a root, then $r_0$ is also the root of equation~(\ref{x-eqn}).
The right side of equation~(\ref{x-eqn}) is strictly decreasing. 
We have $\alpha_p\ge r_0$.
\end{proof}

\subsection{The Case for Large Number of Processors $p$}

We derive a lower bound for the case $p$ is large. A special analysis for be given for the case $p=1$ in the next section.

\begin{theorem} For any function  $h(j, i)$, 
 if   Parallel-Framework(.)  has   $(p,\alpha_p)$-approximation,  then for any fixed $d>1$, $\alpha_p\ge 1+\frac{1+\ln (1+p)}{p}-\frac{d\ln\ln p}{p}$  for all large $p$.   
\end{theorem}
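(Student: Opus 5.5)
The plan is to derive the bound from the second part of Lemma~\ref{lowerbd-lemma}. That lemma gives $\alpha_p\ge r_0$, where $r_0>1$ is a root of $g(x)=x^p-x^{p-1}-1$. It therefore suffices to show that $r_0\ge x_1$, where
\[
x_1=1+y_1,\qquad y_1=\frac{1+\ln(1+p)-d\ln\ln p}{p}.
\]

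\textbf{Step 1: $g$ has a unique root above $1$.} We have $g(1)=-1<0$ and $g(x)\to+\infty$ as $x\to\infty$. The derivative is
\[
g'(x)=x^{p-2}\bigl(px-(p-1)\bigr),
\]
which is positive for all $x>1$. Hence $g$ is strictly increasing on $(1,\infty)$ and has exactly one root $r_0>1$. For any $x\in(1,\infty)$ with $g(x)<0$ we then get $x<r_0$. So it is enough to check two things for all large $p$:
\begin{enumerate}
\item $y_1>0$, which is clear because $\ln(1+p)$ dominates $d\ln\ln p$;
\item $g(1+y_1)<0$.
\end{enumerate}

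\textbf{Step 2: reduce $g(1+y_1)<0$ to an elementary inequality.} Writing $x=1+y$, we have
\[
g(x)=x^{p-1}(x-1)-1=y(1+y)^{p-1}-1.
\]
So the goal is $y_1(1+y_1)^{p-1}<1$. Using $(1+y)^{p-1}\le e^{(p-1)y}\le e^{py}$ for $y>0$, it suffices to prove $y_1e^{py_1}<1$. By the choice of $y_1$,
\[
e^{py_1}=e\,(1+p)\,(\ln p)^{-d},
\]
and therefore
\[
y_1e^{py_1}=\frac{e(1+p)}{p}\cdot\frac{1+\ln(1+p)-d\ln\ln p}{(\ln p)^{d}}
\le \frac{e(1+p)}{p}\cdot\frac{1+\ln(1+p)}{(\ln p)^{d}}.
\]
This tends to $0$ as $p\to\infty$, because $d>1$ means $(\ln p)^d$ grows faster than $\ln(1+p)$. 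Hence it is below $1$ for all sufficiently large $p$. This gives $g(x_1)<0$, so $x_1<r_0\le\alpha_p$, which is the claimed bound.

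\textbf{Main difficulty.} The only delicate point is choosing the test point so that the factor $e^{py}$ exactly cancels the $e(1+p)$ and leaves a surplus factor $(\ln p)^{-d}$ that beats the linear-in-$\ln p$ size of $y_1$. This is precisely where the hypothesis $d>1$ is used; for $d=1$ the product $y_1e^{py_1}$ would tend to the positive constant $e$ rather than to $0$.

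For context, the true root satisfies $py\,e^{py}\approx p$, so $py\approx \ln p-\ln\ln p$, i.e.\ $pr_0-p$ is roughly the Lambert $W$-function $W(p)$. The claimed bound is weaker than this by about $(d-1)\ln\ln p-1$, so there is slack. The steps above need to be written with the explicit limit argument, with no further estimates required.
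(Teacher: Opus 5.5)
Your proposal is correct and is essentially the paper's argument: both reduce to the single estimate $y_1e^{py_1}<1$ (equivalently $e^{-py_1}>y_1$) for $y_1=\frac{1+\ln(1+p)-d\ln\ln p}{p}$, using $(1+y)^p\le e^{py}$ and the cancellation $e^{py_1}=e(1+p)(\ln p)^{-d}$. The differences are cosmetic: you invoke part~2 of Lemma~\ref{lowerbd-lemma} (the root of $x^p-x^{p-1}-1$) and argue directly from the monotonicity of $g$, whereas the paper uses part~1 with $z=1$, i.e.\ $\alpha_p\ge 1+\alpha_p^{-p}$, and argues by contradiction.
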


\begin{proof} 
By (\ref{first-case-alphap}) of Lemma~\ref{lowerbd-lemma}, 
we have
\begin{eqnarray*}
   \alpha_p\ge  1+\frac{1}{\alpha_p^p}.
\end{eqnarray*}



We will use the classical fact that $(1+\frac{1}{x})^{x}$ is increasing for all $x>1$, and $\lim_{x\rightarrow+\infty}(1+\frac{1}{x})^{x}=e\approx 2.71828$ (Euler's number). It can be found in most calculus textbooks.

Assume that $\alpha_p< 1+\frac{1+\ln (1+p)-d\ln\ln p}{p}$ with a fixed $d\in (1,+\infty)$. We have 
\begin{eqnarray*}
1+\frac{1}{\alpha_p^p}&\ge& 1+\frac{1}{(1+\frac{1+\ln (1+p)-d\ln\ln p}{p})^p}\\
&=& 1+\frac{1}{\left(1+\frac{1+\ln (1+p)-d\ln\ln p}{p}\right)^{\frac{p}{(1+\ln (1+p)-d\ln\ln p)}\cdot (1+\ln (1+p)-d\ln\ln p)}}\\
&>& 1+\frac{1}{e^{1+\ln (1+p)-d\ln\ln p}}\\
&=& 1+\frac{(\ln p)^d}{e(1+p)}>1+\frac{1+\ln (1+p)-d\ln\ln p}{p}>\alpha_p\ \ (for ~a~large~p).
\end{eqnarray*}
This brings a contradiction when $p$ is large.

\end{proof}

\subsection{The Case for Small Number of Processors $p$}

In this section, we give a lower for the case $p=1,2$. The case $p=1$ is important as it is related to single processor computation.  The case $p=2$ is the simplest parallel computation with two processors.

\begin{theorem}\label{lower-1-2-theorem} In the Parallel-Framework(.) model, for any function  $h(j, i)$, we have
\begin{enumerate}
    \item  
 if the parallel model has   $(1,\alpha_1)$-approximation,  then $\alpha_1\ge 2$.   
 \item 
 if the parallel model has   $(2,\alpha_3)$-approximation,  then  $\alpha_2\ge \frac{\sqrt{5}+1}{2}$.   
\end{enumerate}

\end{theorem}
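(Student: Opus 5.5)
Both parts follow from part~2 of Lemma~\ref{lowerbd-lemma}: any $(p,\alpha_p)$-approximation, whatever the scheduling function $h(j,i)$, satisfies $\alpha_p\ge r_0$, where $r_0>1$ is a root of $x^p-x^{p-1}-1=0$. So the plan is to specialize the lemma to $p=1$ and $p=2$ and identify $r_0$ in each case. (The statement writes ``$(2,\alpha_3)$'' in the second item; I read it as a $(2,\alpha_2)$-approximation, which is what the conclusion concerns.)

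\emph{Case $p=1$.} The equation becomes $x-1-1=0$, so $r_0=2$ and the lemma gives $\alpha_1\ge 2$ directly. As a cross-check, part~\ref{first-case-alphap} of Lemma~\ref{lowerbd-lemma} with $p=1$ reads $\alpha_1\ge 1+\alpha_1^{-1}+\cdots+\alpha_1^{-z}$ for every $z$. If $\alpha_1<2$, the right side tends to $\frac{\alpha_1}{\alpha_1-1}>2>\alpha_1$, a contradiction. This argument also covers $\alpha_1\le 1$, where the geometric sum diverges.

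\emph{Case $p=2$.} The equation becomes $x^2-x-1=0$. Its unique root greater than $1$ is $r_0=\frac{1+\sqrt5}{2}$, so the lemma gives $\alpha_2\ge\frac{\sqrt5+1}{2}$. For robustness I would also rerun the limit argument directly. Dividing part~\ref{first-case-alphap} of the lemma by $\alpha_2$ and letting $z\to\infty$ gives $1\ge \frac{1}{\alpha_2}\cdot\frac{1}{1-\alpha_2^{-2}}=\frac{\alpha_2}{\alpha_2^2-1}$. Hence $\alpha_2^2-\alpha_2-1\ge 0$, and since $\alpha_2>1$ this means $\alpha_2\ge\frac{1+\sqrt5}{2}$.

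\emph{Main obstacle.} The only delicate point is justifying the limit step in Lemma~\ref{lowerbd-lemma} when $\alpha_p$ is close to $1$, and confirming that $x^{p-1}/(x^p-1)$ is strictly decreasing on $(1,\infty)$, so that ``$\le 1$'' really forces $x\ge r_0$. Both checks are elementary for $p=1,2$: the functions involved are $1/(x-1)$ and $x/(x^2-1)$, whose monotonicity is immediate. I therefore expect the proof to be a short corollary of Lemma~\ref{lowerbd-lemma}.
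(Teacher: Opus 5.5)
Your proposal is correct and matches the paper's proof: the paper also specializes part~2 of Lemma~\ref{lowerbd-lemma} to $p=1$ and $p=2$, and reads off the roots of $x^p-x^{p-1}-1=0$ that exceed $1$, namely $2$ and $\frac{1+\sqrt{5}}{2}$. Your direct limit argument from part~1 of the lemma is a valid but redundant cross-check, and you are right to read ``$(2,\alpha_3)$'' as a typo for $(2,\alpha_2)$.
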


\begin{proof} 
By Lemma~\ref{lowerbd-lemma}, we have the equation $x^p-x^{p-1}-1=0$ for the cases $p=1,2$. For $p=1$, $x=2$ is the only root. For $p=2$, $x=\frac{\sqrt{5}+1}{2}$ is the root greater than $1$. Therefore, we have $\alpha_1\ge 2$, and $\alpha_2\ge \frac{\sqrt{5}+1}{2}$.






\end{proof}









\begin{theorem} 
For any function  $h(j, i)$, 
 if   Parallel-Framework(.)  has   $(3,\alpha_3)$-approximation,  then we have $\alpha_3\ge r_3$, where $r_3=\frac{1}{3}+\sqrt[3]{\frac{29}{54}+\sqrt{\frac{31}{108}}} +\sqrt[3]{\frac{29}{54}-\sqrt{\frac{31}{108}}}\ge  1.46557$.   
\end{theorem}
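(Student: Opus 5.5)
The plan is to invoke part~2 of Lemma~\ref{lowerbd-lemma} with $p=3$. That lemma gives $\alpha_3\ge r_0$, where $r_0>1$ is a root of $x^3-x^2-1=0$, exactly as Theorem~\ref{lower-1-2-theorem} handled $p=1,2$. What remains is to show that this cubic has a unique root greater than $1$, that this root equals the stated Cardano expression $r_3$, and that $r_3\ge 1.46557$.

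\textbf{Uniqueness of the root.} Let $g(x)=x^3-x^2-1$. Then $g(1)=-1<0$ and $g(2)=3>0$, so a root exists in $(1,2)$. Moreover $g'(x)=3x^2-2x=x(3x-2)>0$ for $x\ge 1$, so $g$ is strictly increasing on $[1,\infty)$ and the root $r_0>1$ is unique. This matches the monotonicity remark at the end of the proof of Lemma~\ref{lowerbd-lemma}, so the bound $\alpha_3\ge r_0$ is unambiguous.

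\textbf{Identifying $r_0$ with $r_3$.} Substitute $x=y+\frac13$ to depress the cubic:
\[
g\!\left(y+\tfrac13\right)=y^3-\tfrac13 y-\tfrac{29}{27},
\]
which has the form $y^3+Py+Q$ with $P=-\frac13$ and $Q=-\frac{29}{27}$. The discriminant quantity is
\[
\frac{Q^2}{4}+\frac{P^3}{27}=\frac{841}{2916}-\frac{1}{729}=\frac{837}{2916}=\frac{31}{108}>0 .
\]
So there is exactly one real root, and Cardano's formula gives it as
\[
y=\sqrt[3]{\tfrac{29}{54}+\sqrt{\tfrac{31}{108}}}+\sqrt[3]{\tfrac{29}{54}-\sqrt{\tfrac{31}{108}}}.
\]
The formula is valid here because the two real cube roots have product $-P/3=\frac19$, which one checks via $\left(\frac{29}{54}\right)^2-\frac{31}{108}=\frac{1}{729}$. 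Since the cubic has only one real root, that root is $r_0$, and therefore $r_0=\frac13+y=r_3$.

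\textbf{Numerical bound.} It suffices to check $g(1.46557)<0$, because $g$ is increasing on $[1,\infty)$. We have $1.46557^2\approx 2.14790$ and $1.46557^3\approx 3.14789$, so $g(1.46557)\approx -0.00001<0$. Hence $r_3>1.46557$.

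The main obstacle is this last step: the margin is about $10^{-5}$, so the arithmetic has to be carried to enough digits to certify the sign. Everything else is routine algebra on the depressed cubic.
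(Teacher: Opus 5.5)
Your proposal is correct and follows the paper's proof: you apply part 2 of Lemma~\ref{lowerbd-lemma} with $p=3$, shift $x=y+\frac13$ to get the depressed cubic $y^3-\frac13 y-\frac{29}{27}=0$, and read off its real root by Cardano's formula, while your uniqueness, discriminant, and cube-root-product checks make explicit what the paper leaves implicit. One small numerical remark: to more digits $g(1.46557)\approx -4.3\times 10^{-6}$ (the root is $\approx 1.4655712$), so your five-digit values alone do not certify the sign, but the conclusion $r_3>1.46557$ is correct.
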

\begin{proof}
By Lemma~\ref{lowerbd-lemma}, we have the equation $x^3-x^{2}-1=0$ for the cases $p=3$. With the transformation $x=y+\frac{1}{3}$,   it removes the quadratic term, and  becomes the Cardano's form:
\begin{eqnarray*}
    y^3-\frac{1}{3}y-\frac{29}{27}=0.
\end{eqnarray*}
We have root $y=\sqrt[3]{\frac{29}{54}+\sqrt{\frac{31}{108}}} +\sqrt[3]{\frac{29}{54}-\sqrt{\frac{31}{108}}}$ to satisfy that  $x$ is real number  greater than $1$.
It has root for $x$:
\begin{eqnarray*}
r_3=\frac{1}{3}+\sqrt[3]{\frac{29}{54}+\sqrt{\frac{31}{108}}} +\sqrt[3]{\frac{29}{54}-\sqrt{\frac{31}{108}}}\ge 1.46557.   
\end{eqnarray*}

\end{proof}

Using the numerical solutions, we have lower bounds when  $p$ goes from $4$ to $16$ below:
\begin{eqnarray*}
    \alpha_4&\ge& 1.38027,
    \alpha_5\ge 1.32471,
    \alpha_6\ge 1.28519,
    \alpha_7\ge 1.25542,
    \alpha_8\ge 1.23205,\\
    \alpha_9&\ge& 1.21314,
    \alpha_{10}\ge 1.19749,
    \alpha_{11}\ge 1.18427,
    \alpha_{12}\ge 1.17295,
    \alpha_{13}\ge 1.16311,\\
    \alpha_{14}&\ge& 1.15449,
    \alpha_{15}\ge 1.14685,
    \alpha_{16}\ge 1.14003.
\end{eqnarray*}

\section{Tight Lower Bounds  for $\alpha_p$ with Geometric $h(.,.)$}

In this section, we derive lower bound for $\alpha_p$ when $h(j,i)=T_0b_p^{ip+j}$ for some $b_p>1$. It matches the upper bound for each integer $p\ge 1$.

\begin{theorem}\label{lower-Geometric-thm}
Let function $h(j, i)=b_p^{ip+j}T_0$ for some $b_p>1$. If Parallel-Framework(.) has $(p,\alpha_p)$-approximation, then $\alpha_p\ge (1+\frac{1}{p})(p+1)^{1/p}$.
\end{theorem}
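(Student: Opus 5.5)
The plan is to fix an arbitrary $b=b_p>1$ and exhibit target budgets $T$ for which every admissible pair $(j,i)$ forces $T_{j,i}^*/T$ to be essentially at least $f(b)=\frac{b^{p+1}}{b^p-1}$, the function from the proof of Theorem~\ref{main2b-thm}. The result then follows by minimizing $f$ over $b>1$. The key observation is that with $h(j,i)=b^{ip+j}T_0$, the budgets are exactly $T_0b^n$, each exponent $n=ip+j\ge 0$ occurring once. Writing $n=ip+j$ with $0\le j<p$, the cumulative cost of the corresponding slot is
\[
T_{j,i}^*=T_0b^{n}\cdot\frac{1-b^{-(i+1)p}}{1-b^{-p}},\qquad i=\lfloor n/p\rfloor .
\]

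First, I will show that $T^*$, viewed as a function of the single index $n$, is strictly increasing. Within a row, passing from $j$ to $j+1$ multiplies $T^*$ by $b$, as in Lemma~\ref{monotonic-lemma}. At a row boundary, passing from $(p-1,i)$ to $(0,i+1)$, a direct comparison gives
\[
T_0\,\frac{b^{(i+2)p}-1}{b^p-1}>T_0\,b^{p-1}\,\frac{b^{(i+1)p}-1}{b^p-1},
\]
since $b^{(i+2)p}-1>b^{(i+2)p-1}-b^{p-1}$. Consequently, if the target $T$ satisfies $T>T_0b^{n-1}$, then every slot with $T_{j,i}\ge T$ has index at least $n$. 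Hence every such slot has $T_{j,i}^*$ at least the value at index $n$.

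Next, I handle integrality by choosing $T_n=\lfloor T_0b^{n-1}\rfloor+1$. This is an integer with $T_0b^{n-1}<T_n\le T_0b^{n-1}+1$, and $T_n\ge T_0$ for $n\ge 1$. The $(p,\alpha_p)$-approximation property gives some slot with $T_n\le T_{j,i}$ and $T^*_{j,i}<\alpha_pT_n$. By the previous step, this yields
\[
\alpha_p>\frac{T_0b^n\,(1-b^{-(\lfloor n/p\rfloor+1)p})}{(T_0b^{n-1}+1)(1-b^{-p})}.
\]
Letting $n\to\infty$, the factor $1-b^{-(\lfloor n/p\rfloor+1)p}\to1$ and $T_0b^n/(T_0b^{n-1}+1)\to b$. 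This gives $\alpha_p\ge \frac{b}{1-b^{-p}}=f(b)$.

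Finally, I minimize $f$ on $(1,\infty)$. The derivative computed in the proof of Theorem~\ref{main2b-thm} has numerator $x^p\bigl(x^p-(p+1)\bigr)$, which is negative for $x^p<p+1$ and positive for $x^p>p+1$. So $f$ attains its global minimum at $x=(p+1)^{1/p}$, where $f=(1+\frac1p)(p+1)^{1/p}$. Thus $\alpha_p\ge f(b)\ge(1+\frac1p)(p+1)^{1/p}$ for every $b>1$.

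I expect the main obstacle to be the careful monotonicity of $T^*$ across row boundaries. This step is what ensures that an overshooting slot at a larger index cannot be cheaper than the slot at index $n$. The integrality and strict-versus-nonstrict inequality issues are handled by the limiting argument above.
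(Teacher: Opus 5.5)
Your proposal is correct and follows essentially the same route as the paper: pick $T$ just above the budget at index $n-1$, so that the cheapest admissible slot is the one at index $n$; bound $\alpha_p \ge T^*/T$; let $n\to\infty$ to get $\alpha_p\ge f(b_p)$ with $f(x)=\frac{x^{p+1}}{x^p-1}$; and minimize $f$. Your version is tighter in two places. You prove that $T^*$ is monotone across stage boundaries, whereas the paper only cites Lemma~\ref{monotonic-lemma}, which compares processors within a single stage. You also make the target $T$ an integer, whereas the paper's choice $T=b_p^{tp+j-1}T_0+1$ generally is not.
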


\begin{proof} We fix $p$ and $\alpha_p>1$ (by its definition). At phase $i$, processor $j$ uses $T_{j, i}=h(j,i)=b_p^{ip+j}T_0$ to control the number of  of iterations in GD($x_0, T_{j,i}$).

Define
\begin{eqnarray*}
T_{j, i}^*=\sum_{t=0}^{i} b_p^{tp+j}T_0=b_p^jT_0\sum_{t=0}^{i} b_p^{tp}=b_p^jT_0\cdot \frac{b_p^{(i+1)p}-1}{b_p^p-1}. 
\end{eqnarray*}

We note that $T_{j,t}=b_p^{tp+j}T_0$ is the number of steps in the $t$-th iteration. 
Let $T=b_p^{tp+j-1}+1$ with a large $t$. So, $T_{j,t}=b_p^{tp+j}$ is the least with $T\le T_{j,t}$. We have $T\le T_{j,t}<b_pT$. 

By Definition~\ref{App-def} and 
Lemma~\ref{monotonic-lemma}, 
we have 

\begin{eqnarray*}
\alpha_p T\ge &&T_{j,t}^*=b_p^jT_0\cdot \frac{b_p^{(t+1)p}-1}{b_p^p-1}\\
&=& \frac{b_p^{(t+1)p+j}T_0-b_p^jT_0}{b_p^p-1}\\
&=& \frac{b_p^{p+1}(b_p^{tp+j-1}T_0)-b_p^jT_0}{b_p^p-1}\\
&=& \frac{b_p^{p+1}(T-1)-b_p^jT_0}{b_p^p-1}\\
&=& \frac{b_p^{p+1}T-b_p^{p+1}-b_p^jT_0}{b_p^p-1}\\
&\ge&\frac{b_p^{p+1}T}{b_p^p-1}-\frac{b_p^{p+1}+b_p^jT_0}{b_p^p-1}
\end{eqnarray*}

We have

\begin{eqnarray*}
\alpha_p \ge \frac{b_p^{p+1}}{b_p^p-1}-\frac{b_p^jT_0}{T(b_p^p-1)}
\end{eqnarray*}

Let $f(x)=\frac{x^{p+1}}{x^p-1}$. 
Taking derivative, we have 

\begin{eqnarray*}
f(x)'=\frac{(p+1)x^p(x^p-1)-px^{p-1}\cdot x^{p+1}}{(x^p-1)^2} =\frac{x^{2p}-(p+1)x^p}{(x^p-1)^2}.   
\end{eqnarray*}

So, we let $x^p=p+1$ to have minimal $f(x)=\frac{(p+1)(p+1)^{1/p}}{p}=(1+\frac{1}{p})(p+1)^{1/p}$.

Therefore, 
\begin{eqnarray*}
\alpha_p &\ge& (1+\frac{1}{p})(p+1)^{1/p}-\frac{b_p^jT_0}{T(b_p^p-1)}\\
&\ge& (1+\frac{1}{p})(p+1)^{1/p}-\epsilon \ \ \ (for\ large\ T).
\end{eqnarray*}
Since both $p$ and $\alpha_p$ are fixed in the beginning of this proof and $\epsilon$ is arbitrarily close to zero, we have $\alpha\ge (1+\frac{1}{p})(p+1)^{1/p}$.
\end{proof}

\section{Arithmetically Simple Gradient Descent for Nonconvex Optimization}\label{alg-sec}

In this section, we present an arithmetically simple static gradient descent method for nonconvex optimization. The algorithm takes the iteration budget $T$ as input, which determines the total number of gradient descent iterations. The step size is computed using a denominator of the form $2^t$, where the integer $t$ is determined from $T$.

This design makes the algorithm particularly suitable for hardware implementation. Since every denominator is a power of two, division operations can be replaced by binary shift operations, eliminating expensive floating-point division. Moreover, the algorithm avoids square-root computations altogether. These arithmetic simplifications make the proposed method attractive for hardware accelerators and chip implementations.

The convergence analysis in this section also explains the motivation for the proposed parallel framework. Because the convergence guarantee of the static gradient method depends on selecting an appropriate iteration budget $T$, the parallel framework searches for a suitable value of $T$ adaptively while preserving the simplicity of the underlying static algorithm.


\subsection{Notations for Gradient Descent}

 A vector in $\mathbb{R}^m$ is $(a_1, a_2,\ldots, a_m)$ with $a_i\in \mathbb{R}$ for $i=1,2,\cdots, m$. The inner product between two vectors $V=(v_1,v_2,\ldots, v_m)$ and $U=(u_1,u_2,\ldots, u_m)$ is denoted by $\langle U, V\rangle=\sum_{i=1}^m u_iv_i$. The length of a vector $V=(v_1,v_2,\ldots, v_m)$  is denoted by $\Vert V\Vert=\sqrt{v_1^2+v_2^2+\ldots+v_m^2}$. For a differentiable function $F(x_1,x_2,\cdots, x_m):\mathbb{R}^m\rightarrow \mathbb{R}$, its gradient at a point $(x_1,x_2,\cdots, x_m)$ is $\bigtriangledown F(x_1,x_2,\cdots, x_m)=\left(\frac{\partial F(x_1,x_2,\cdots, x_m)}{\partial x_1}, \frac{\partial F(x_1,x_2,\cdots, x_m)}{\partial x_2},\ldots,\frac{\partial F(x_1,x_2,\cdots, x_m)}{\partial x_m}\right)$. In the rest of this paper, let $x^*\in \mathbb{R}^m$ be a point with $F(x^*)=\inf_x\{F(x)\}$ if $\inf_x\{F(x)\}>-\infty$.  The expectation on a random variable $\xi$ is expressed $\Expect_{\xi}(.)$. For example, a stochastic gradient $G(\xi, x)$ for function $F(x)$ may satisfy the condition $\Expect_{\xi}(\xi, x)=\bigtriangledown F(x)$, which is often assumed in many SGD algorithms. 
In this section, we give some theoretical results about the rate of convergence. The following two conditions are often assumed for non-convex optimization.

\begin{itemize}
    \item 
    {\bf $L$-Lipschitz} smoothness: $\Vert \bigtriangledown(F(x))-\bigtriangledown(F(y))\Vert\le L\Vert x-y\Vert$.

\item $F^*=\inf_{x} F(x)>-\infty$

\end{itemize}
Let $C_L^1$ be the class of functions that are $L$-Lipschitz smooth.
The following Lemma~\ref{basic-lemma}, which is often mentioned in existing publications, can be easily proven by $L$-Lipschitz condition and Taylor expansion (See~\cite{Lan2020}). 

\begin{lemma}\label{basic-lemma}
    Let $F(x_1,\cdots, x_d)$ be a function $\mathbb{R}^d\rightarrow \mathbb{R}$ in $C_L^1$, we have $F(x)\le F(y)+(\bigtriangledown F(y), x-y)+\frac{L}{2}\Vert x-y\Vert^2$.
\end{lemma}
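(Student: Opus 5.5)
The statement to prove is Lemma~\ref{basic-lemma}, the descent lemma. I would derive it from the fundamental theorem of calculus along the segment from $y$ to $x$, followed by Cauchy--Schwarz and the $L$-Lipschitz condition.

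\textbf{Step 1 (integral representation).} Set $\phi(t)=F(y+t(x-y))$ for $t\in[0,1]$. Since $F\in C_L^1$, $\phi$ is continuously differentiable with
\[
\phi'(t)=\langle \bigtriangledown F(y+t(x-y)),\,x-y\rangle .
\]
Hence $F(x)-F(y)=\int_0^1\phi'(t)\,dt$.

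\textbf{Step 2 (isolate the linear term).} Subtracting $\langle\bigtriangledown F(y),x-y\rangle=\int_0^1\langle\bigtriangledown F(y),x-y\rangle\,dt$ gives
\[
F(x)-F(y)-\langle \bigtriangledown F(y),x-y\rangle=\int_0^1\langle \bigtriangledown F(y+t(x-y))-\bigtriangledown F(y),\,x-y\rangle\,dt .
\]

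\textbf{Step 3 (bound the integrand).} By Cauchy--Schwarz, the integrand is at most $\Vert\bigtriangledown F(y+t(x-y))-\bigtriangledown F(y)\Vert\,\Vert x-y\Vert$. The $L$-Lipschitz smoothness assumption bounds the first factor by $L\,t\Vert x-y\Vert$, so the integrand is at most $Lt\Vert x-y\Vert^2$.

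\textbf{Step 4 (integrate).} Integrating, $\int_0^1 Lt\Vert x-y\Vert^2\,dt=\frac{L}{2}\Vert x-y\Vert^2$. Rearranging gives
\[
F(x)\le F(y)+\langle\bigtriangledown F(y),x-y\rangle+\frac{L}{2}\Vert x-y\Vert^2 .
\]

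The argument is routine. The only point needing care is justifying Step 1, i.e.\ that $\phi$ is $C^1$ so the fundamental theorem of calculus applies. This holds because the gradient is Lipschitz, hence continuous, and the chain rule then gives $\phi'$ as above.
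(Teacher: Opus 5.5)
Your proof is correct and takes the same approach as the paper. The paper gives no proof of its own: it only says the lemma follows from the $L$-Lipschitz condition and a Taylor expansion, citing Lan's textbook, and your integral-remainder argument along the segment from $y$ to $x$ is exactly that standard derivation. Your justification that Lipschitz continuity of the gradient makes $\phi$ continuously differentiable, so the fundamental theorem of calculus applies, handles the only technical point.
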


The stochastic gradient is controlled by three parameters $\lambda, \sigma_0$, and $\sigma_1$. It is given in Definition~\ref{condition}.

\begin{definition}\label{condition} A $(\lambda,\sigma_0,\sigma_1)$-stochastic gradient $G(\xi,x)$ for $F(x):\mathbb{R}^m\rightarrow\mathbb{R}$ is that $\xi$ is a random variable and $G(\xi, x)$ is an approximation for $\bigtriangledown F(x)$ satisfying the conditions:
 
\begin{enumerate}
    \item\label{condition1-def} $\langle\Expect_{\xi}(G(\xi, x)),\bigtriangledown F(x)\rangle \ge \lambda\Vert \bigtriangledown F(x)\Vert^2$ for some $\lambda\in (0,+\infty)$, and
    \item\label{condition2-def} 
    $\Expect_{\xi}(\Vert \bigtriangledown F(x)-G(\xi, x)\Vert^2)\le\sigma_0^2+\sigma_1^2\Vert \bigtriangledown F(x)\Vert^2$ for some $\sigma_0,\sigma_1\in [0,+\infty)$.
\end{enumerate}

\end{definition}

A standard stochastic model, which is broadly used in stochastic gradient descent,  is the special case with $\lambda=1$ and $\sigma_1=0$. Our stochastic model is more general, and fits the convergence analysis for our  algorithm.

\begin{lemma}\label{stochastic-bound-lemma}
    Assume $F(x)$ and $G(\xi, x)$ satisfy 
        $\Expect_{\xi}(\Vert \bigtriangledown F(x)-G(\xi, x)\Vert^2)\le\sigma_0^2+\sigma_1^2\Vert \bigtriangledown F(x)\Vert^2$ for some $\sigma_0,\sigma_1\in [0,+\infty)$.
    Then $\Expect_{\xi}(\Vert G(\xi, x)\Vert^2)\le 2\sigma_0^2+ (2+2\sigma_1^2)\Vert \bigtriangledown F(x)\Vert^2$.
\end{lemma}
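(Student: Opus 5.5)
The plan is to decompose the stochastic gradient into the true gradient plus a noise term, and then expand the squared norm. Write
\[
G(\xi,x)=\bigtriangledown F(x)+\bigl(G(\xi,x)-\bigtriangledown F(x)\bigr).
\]
I will apply the elementary inequality $\Vert a+b\Vert^2\le 2\Vert a\Vert^2+2\Vert b\Vert^2$. This inequality follows from $0\le \Vert a-b\Vert^2$, which gives $2\langle a,b\rangle\le \Vert a\Vert^2+\Vert b\Vert^2$, and expanding $\Vert a+b\Vert^2=\Vert a\Vert^2+2\langle a,b\rangle+\Vert b\Vert^2$.

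Taking $a=\bigtriangledown F(x)$ and $b=G(\xi,x)-\bigtriangledown F(x)$ gives, pointwise in $\xi$,
\[
\Vert G(\xi,x)\Vert^2\le 2\Vert \bigtriangledown F(x)\Vert^2+2\Vert \bigtriangledown F(x)-G(\xi,x)\Vert^2 .
\]
Next I take $\Expect_{\xi}$ of both sides, using monotonicity and linearity of expectation. The first term is deterministic in $\xi$ for fixed $x$. The second term is bounded by the hypothesis (condition~\ref{condition2-def} of Definition~\ref{condition}), giving
\[
\Expect_{\xi}\bigl(\Vert G(\xi,x)\Vert^2\bigr)\le 2\Vert \bigtriangledown F(x)\Vert^2+2\sigma_0^2+2\sigma_1^2\Vert \bigtriangledown F(x)\Vert^2,
\]
which rearranges to the claimed $2\sigma_0^2+(2+2\sigma_1^2)\Vert \bigtriangledown F(x)\Vert^2$.

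There is no real obstacle here. The only point needing care is to note that no unbiasedness assumption is available, since condition~\ref{condition1-def} is not assumed in this lemma. The cross term $\langle \bigtriangledown F(x), G-\bigtriangledown F(x)\rangle$ therefore cannot be dropped in expectation. This is exactly why the factor $2$ appears, and I avoid the cross term by using the crude inequality rather than an exact variance decomposition.
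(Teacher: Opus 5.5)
Your proof is correct and follows the paper's argument. The paper also splits $G(\xi,x)$ into $\bigtriangledown F(x)$ plus the noise term, bounds the cross term $2\langle G(\xi,x)-\bigtriangledown F(x),\bigtriangledown F(x)\rangle$ by the sum of the two squared norms to get the pointwise bound $2\Vert G(\xi,x)-\bigtriangledown F(x)\Vert^2+2\Vert \bigtriangledown F(x)\Vert^2$, and then takes expectations and applies the hypothesis; your derivation of that step from $\Vert a-b\Vert^2\ge 0$ is a cleaner justification than the paper's appeal to Cauchy--Schwarz.
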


\begin{proof} We have inequalities
\begin{eqnarray*}
    &&\Vert G(\xi, x)\Vert^2=\langle G(\xi, x),G(\xi, x)\rangle\\
    &=&\langle G(\xi, x)-\bigtriangledown F(x)+\bigtriangledown F(x),G(\xi, x)-\bigtriangledown F(x)+\bigtriangledown F(x)\rangle\\
    &\le& \Vert G(\xi, x)-\bigtriangledown F(x)\Vert^2+\Vert \bigtriangledown F(x)\Vert^2+2\langle G(\xi, x)-\bigtriangledown F(x), \bigtriangledown F(x)\rangle\\
    &\le& \Vert G(\xi, x)-\bigtriangledown F(x)\Vert^2+\Vert \bigtriangledown F(x)\Vert^2+\Vert G(\xi, x)-\bigtriangledown F(x)\Vert^2+ \Vert\bigtriangledown F(x)\Vert^2\\
    &&(by~Cauchy-Schwarz~inequality)\\    
    &=& 2\Vert G(\xi, x)-\bigtriangledown F(x)\Vert^2+2\Vert \bigtriangledown F(x)\Vert^2.
\end{eqnarray*}
Therefore,

\begin{eqnarray*}
    \Expect_{\xi}(\Vert G(\xi, x)\Vert^2)&\le&\Expect( 2\Vert G(\xi, x)-\bigtriangledown F(x)\Vert^2+2\Vert \bigtriangledown F(x)\Vert^2)\\
&=&2\Expect( \Vert G(\xi, x)-\bigtriangledown F(x)\Vert^2)+2\Vert \bigtriangledown F(x)\Vert^2\\
&\le&2(\sigma_0^2+\sigma_1^2\Vert\bigtriangledown F(x)\Vert^2)+2\Vert \bigtriangledown F(x)\Vert^2\\
&=&2\sigma_0^2+(2+2\sigma_1^2)\Vert\bigtriangledown F(x)\Vert^2.
\end{eqnarray*}

\end{proof}

\subsection{A Static Gradient Method}

We give a static gradient descent algorithm in this section. The learning rate is computed based on  one of the parameters.

\begin{definition}
  A gradient descent method is {\it arithmetically simple} if the operations are limited to  $+,-, \times$, and  division $x/y$ with $y=2^t$ for some integer $t$.
\end{definition}

We present a version of SGD that is arithmetically simple. When the stochastic gradient oracle $G(\cdot)$ is treated as a black box, the algorithm requires neither floating-point division nor square-root computations.

\vskip 20pt
{\bf Algorithm} SGD$(G(.,.), \eta,  x_0, t, T)$

Input: 

\begin{itemize}
\item
$G(\xi, x):\mathbb{R}^m\rightarrow \mathbb{R}$ is an stochastic approximation for $\bigtriangledown F(x)$,

    \item $\eta\in (0,+\infty)$, 



    \item $x_0\in \mathbb{R}^m$ is the start point, 

    \item $t$ is an integer to control rate, 

    \item $T$ is for the number of steps 
\end{itemize}

Steps:

\begin{enumerate}[label=\arabic*.]


 \item Let $x_{1}=x_0$



    \item Let $s_t=2^t$
   
    \item Let $j=1$

\item while $i\le T$

   \item $\{$

\item\qquad $g_i=G(\xi_{j},x_{j})$
       
    \item \qquad
     $x_{j+1}=x_{j}-\frac{\eta}{ s_t}\cdot g_i$ 

    \item \qquad  $j=j+1$

\item $\}$


\end{enumerate}

{\bf End of Algorithm}

\subsection{Convergence at $(\lambda,\sigma_0,\sigma_1)$-Stochastic Model}

The convergence of the algorithm at $(\lambda,\sigma_0,\sigma_1)$-Stochastic Model is proven in this section. With $t=\ceiling{\ceiling{\log_2T}/2}$, it converges to a stationary point with rate $\Omega\left(\frac{1}{ \sqrt{T}}\right)$.



Lemma~\ref{up-adjustment1-lemma} derives an upper bound by summing the inequalities in Lemma~\ref{basic-lemma} for the gradient method. It then follows that at least one iterate generated by SGD(.) has a gradient whose expected norm is close to zero. As the step size $s_t$ does not change with a given $T$, the convergence analysis turns out to be simple.

\vskip 20pt

\begin{lemma}\label{up-adjustment1-lemma}
Assume $F(x)$ is $L$-Lipschitz  smooth and $G(\xi, x)$ satifies the condition in Definition~\ref{condition}.
Then
  \begin{eqnarray*}
&&\sum_{j=1}^T\Expect\left(\frac{\eta\lambda}{ s_t}\Vert \bigtriangledown F(x_{j})\Vert^2\right)-\frac{\eta^2 L}{ 2s_t^{2}}\sum_{j=1}^T \Expect (\Vert G(\xi_{j}, x_{j})\Vert^2)\le F(x_1)-F(x^*)
   \end{eqnarray*}   
\end{lemma}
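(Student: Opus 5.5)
The plan is to apply the descent inequality of Lemma~\ref{basic-lemma} to a single step of SGD$(\cdot)$, take conditional expectations, and telescope over $j=1,\ldots,T$. Fix $j$ and put $x=x_{j+1}$ and $y=x_j$ in Lemma~\ref{basic-lemma}. Since $x_{j+1}-x_j=-\frac{\eta}{s_t}G(\xi_j,x_j)$, this gives
\[
F(x_{j+1})\le F(x_j)-\frac{\eta}{s_t}\langle \bigtriangledown F(x_j), G(\xi_j,x_j)\rangle+\frac{\eta^2L}{2s_t^2}\Vert G(\xi_j,x_j)\Vert^2 .
\]
The step size $\eta/s_t$ is a deterministic constant once $T$ (hence $t$) is fixed. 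This is what keeps the argument simple: no random step size correlates with $G(\xi_j,x_j)$.

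Next, take the expectation over $\xi_j$ conditioned on $\xi_1,\ldots,\xi_{j-1}$. Under this conditioning $x_j$ is fixed, because it depends only on earlier samples. I will treat $\xi_j$ as a fresh sample independent of the past; the algorithm implicitly assumes this. By condition~\ref{condition1-def} of Definition~\ref{condition},
\[
\Expect_{\xi_j}\langle \bigtriangledown F(x_j),G(\xi_j,x_j)\rangle=\langle \bigtriangledown F(x_j),\Expect_{\xi_j}G(\xi_j,x_j)\rangle\ge\lambda\Vert\bigtriangledown F(x_j)\Vert^2 .
\]
Substituting this bound and then taking total expectation via the tower property yields
\[
\Expect F(x_{j+1})\le \Expect F(x_j)-\frac{\eta\lambda}{s_t}\Expect\Vert\bigtriangledown F(x_j)\Vert^2+\frac{\eta^2L}{2s_t^2}\Expect\Vert G(\xi_j,x_j)\Vert^2 .
\]

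Summing over $j=1,\ldots,T$ telescopes the left side to $\Expect F(x_{T+1})-F(x_1)$, since $x_1$ is deterministic. Rearranging and using $\Expect F(x_{T+1})\ge F(x^*)$, which follows from $F^*=\inf F>-\infty$, gives exactly the claimed inequality. Condition~\ref{condition2-def} is not needed here. It only enters afterwards, through Lemma~\ref{stochastic-bound-lemma}, to bound the second sum.

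The main obstacle is the measure-theoretic care in the conditional expectation step. Condition~\ref{condition1-def} holds pointwise in $x$, so it must be applied with $x_j$ frozen, and this requires $\xi_j$ to be independent of $x_j$. I would state that independence assumption explicitly, together with the integrability of $F(x_j)$ and $\Vert G\Vert^2$ needed for the expectations to be finite. I would also note that $F(x^*)$ may be replaced by $F^*$ if the infimum is not attained.
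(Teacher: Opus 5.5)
Your proposal is correct and follows the paper's route: apply the descent inequality of Lemma~\ref{basic-lemma} to consecutive iterates, bound the inner-product term with Condition~\ref{condition1-def} of Definition~\ref{condition}, and telescope down to $F(x_1)-F(x^*)$. The one difference is in your favor: the paper applies Condition~\ref{condition1-def} pathwise to the random quantity $\langle \bigtriangledown F(x_j), G(\xi_j,x_j)\rangle$ before taking expectations, which that condition does not justify, whereas you apply it only after conditioning on $\xi_1,\ldots,\xi_{j-1}$ with $\xi_j$ independent of the past, which makes the same step rigorous.
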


\begin{proof}
   As $F(x)$ is $L$-Lipschitz smooth, by Lemma~\ref{basic-lemma}, we have 
   \begin{eqnarray*}
   F(x_{j+1})&\le& F(x_{j})+\langle\bigtriangledown F(x_{j}), x_{j+1}-x_{j}\rangle+\frac{L}{ 2}\Vert x_{j+1}-x_{j}\Vert^2\\
   &=& F(x_{j})-\frac{\eta}{s_t }\langle \bigtriangledown F(x_{j}), G(\xi_j,x_{j})\rangle+\frac{L}{2}\Vert x_{j+1}-x_{j}\Vert^2\\
   &\le & F(x_{j})-\frac{\eta}{ s_t}\langle \bigtriangledown F(x_{j}), G(\xi_{j},x_{j})\rangle+\frac{\eta^2 L}{ 2s_t^2}\Vert G(\xi_{j},x_{j})\rangle\Vert^2\\
   &\le & F(x_{j})-\frac{\eta\lambda}{ s_t}\cdot \Vert\bigtriangledown F(x_{j})\Vert^2+\frac{\eta^2 L}{2s_t^2}\Vert G(\xi_j,x_{j})\Vert^2\ \ \ (by\ Condition~\ref{condition1-def}\ in\  Definition~\ref{condition})\\
   \end{eqnarray*}

Thus,
   \begin{eqnarray*}
&&\left(\frac{\eta\lambda}{s_t}(\Vert\bigtriangledown F(x_{j})\Vert^2\right)-\frac{\eta^2 L}{ 2s_t^{2}}\Vert G(\xi_{j},x_{j})\Vert^2\le F(x_{j})-F(x_{j+1}). \end{eqnarray*}

We have    \begin{eqnarray*}
&&\sum_{j=1}^T\Expect\left(\frac{\eta\lambda}{ s_t}\Vert \bigtriangledown F(x_{j})\Vert^2\right)-\sum_{j=1}^T \Expect (\frac{\eta^2 L}{ 2s_t^{2}}(\Vert G(\xi_{j}, x_{j})\Vert^2)\le F(x_1)-F(x^*).
   \end{eqnarray*}   
Thus,
   \begin{eqnarray*}
&&\sum_{j=1}^T\Expect\left(\frac{\eta\lambda}{ s_t}\Vert \bigtriangledown F(x_{j})\Vert^2\right)-\frac{\eta^2 L}{2s_t^{2}}\sum_{j=1}^T  \Expect (\Vert G(\xi_{j}, x_{j})\Vert^2)\le F(x_1)-F(x^*).
   \end{eqnarray*}   
\end{proof}

Lemma~\ref{up-adjustment-lemma} shows that one of the iterates generated by SGD(.) has a gradient whose expected norm is close to zero. Consequently, SGD(.) converges to a stationary point.

\begin{lemma}\label{up-adjustment-lemma}
Assume $F(x)$ is $L$-Lipschitz-smooth and $G(\xi, x)$ satisfies the condition in Definition~\ref{condition}. Assume that $s_t\in[\sqrt{T}, 4\sqrt{T}]$ and $T$ satisfy the conditions:

\begin{eqnarray}
\frac{\eta L(1+\sigma_1^2)}{s_t \lambda}&\le& \frac{1}{2}\label{first0-ineqn}
\end{eqnarray}
Then $\min_i\Expect(\Vert \bigtriangledown F(x_i)\Vert^2)\le \frac{U(\eta,\sigma_0, \lambda, L)}{\sqrt{T}}$, 
where 
\begin{eqnarray}
U(\eta,\sigma_0, \lambda, L)=\left(\frac{4}{\eta \lambda}(F(x_0)-F(x^*))+\frac{2\eta\sigma_0^2 L}{ \lambda}\right).\label{def-U-eqn}  
\end{eqnarray}
\end{lemma}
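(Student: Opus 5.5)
Starting from Lemma~\ref{up-adjustment1-lemma}, I will bound the noise term with Lemma~\ref{stochastic-bound-lemma}. Taking total expectation (conditioning on $x_j$ first) gives
\[
\Expect\bigl(\Vert G(\xi_j,x_j)\Vert^2\bigr)\le 2\sigma_0^2+(2+2\sigma_1^2)\,\Expect\bigl(\Vert\bigtriangledown F(x_j)\Vert^2\bigr).
\]
Substituting this into Lemma~\ref{up-adjustment1-lemma} and collecting the gradient-norm terms yields
\[
\sum_{j=1}^T\left(\frac{\eta\lambda}{s_t}-\frac{\eta^2L(1+\sigma_1^2)}{s_t^2}\right)\Expect\bigl(\Vert\bigtriangledown F(x_j)\Vert^2\bigr)\le F(x_1)-F(x^*)+\frac{\eta^2L\sigma_0^2T}{s_t^2}.
\]

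Next I use condition (\ref{first0-ineqn}). It says $\frac{\eta^2L(1+\sigma_1^2)}{s_t^2}\le \frac{1}{2}\cdot\frac{\eta\lambda}{s_t}$, so the coefficient on the left is at least $\frac{\eta\lambda}{2s_t}$. Multiplying through by $\frac{2s_t}{\eta\lambda}$ and dividing by $T$ gives
\[
\min_j\Expect\bigl(\Vert\bigtriangledown F(x_j)\Vert^2\bigr)\le\frac{1}{T}\sum_{j=1}^T\Expect\bigl(\Vert\bigtriangledown F(x_j)\Vert^2\bigr)\le \frac{2s_t}{\eta\lambda T}\bigl(F(x_1)-F(x^*)\bigr)+\frac{2\eta L\sigma_0^2}{\lambda s_t}.
\]

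Finally I use the range $s_t\in[\sqrt{T},4\sqrt{T}]$. The upper end gives $\frac{2s_t}{T}\le\frac{8}{\sqrt{T}}$, and the lower end gives $\frac{1}{s_t}\le\frac{1}{\sqrt T}$. The $\sigma_0$ term then becomes exactly $\frac{2\eta\sigma_0^2L}{\lambda\sqrt T}$, matching the second summand of $U$. Since $x_1=x_0$, the first term is bounded by $\frac{8}{\eta\lambda\sqrt T}(F(x_0)-F(x^*))$.

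The main obstacle is the constant in that first term. A direct use of $s_t\le 4\sqrt{T}$ gives $8/(\eta\lambda)$, while $U$ in (\ref{def-U-eqn}) has $4/(\eta\lambda)$. I would first try to recover the factor $2$ by a sharper choice. One option is to use the actual value $t=\lceil\lceil\log_2 T\rceil/2\rceil$, which gives $s_t<2\cdot 2^{\lceil\log_2T\rceil/2}<2\sqrt{2}\sqrt{T}$; this yields at most $4\sqrt2$ rather than $4$. Another option is to tighten the coefficient bound, but condition (\ref{first0-ineqn}) only guarantees the factor $\frac12$, and the inequality $\Vert a+b\Vert^2\le 2\Vert a\Vert^2+2\Vert b\Vert^2$ already loses its factor $2$ in Lemma~\ref{stochastic-bound-lemma}. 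If neither sharpening closes the gap, I would state the bound with constant $8$ in place of $4$, or read the hypothesis as $s_t\le 2\sqrt{T}$. This changes only the constant in $U$ and not the $O(1/\sqrt{T})$ rate, which is the substance of the lemma.
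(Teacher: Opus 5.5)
Your route is the same as the paper's: Lemma~\ref{up-adjustment1-lemma}, then Lemma~\ref{stochastic-bound-lemma} in total expectation, then absorbing the $(1+\sigma_1^2)$ term with (\ref{first0-ineqn}), dividing by $\frac{\eta\lambda T}{2s_t}$, and finally using the range of $s_t$. Your intermediate bound
\[
\min_{1\le j\le T}\mathbb{E}\bigl(\Vert\bigtriangledown F(x_j)\Vert^2\bigr)\le \frac{2s_t}{\eta\lambda T}\bigl(F(x_1)-F(x^*)\bigr)+\frac{2\eta L\sigma_0^2}{\lambda s_t}
\]
is correct. Your diagnosis of the constant is also right. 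The paper gets $4$ only because, in that division, it writes the first term as $\frac{s_t}{T\eta\lambda}(F(x_1)-F(x^*))$. That drops the factor $2$, which the paper correctly keeps in the $\sigma_0$ term, before it applies $s_t\le 4\sqrt T$.

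No sharpening can give $4$ under the stated hypotheses, because the lemma as written is false. Take $L=1$ and the Huber function $F(x)=x^2/2$ for $|x|\le 1$, $F(x)=|x|-\frac12$ otherwise, so $F(x^*)=0$. Use the deterministic oracle $G(\xi,x)=2F'(x)$, which satisfies Definition~\ref{condition} with $\lambda=2$, $\sigma_0=0$, $\sigma_1=1$. With $T=4$, $s_t=8=4\sqrt T$ and $\eta=4$, condition (\ref{first0-ineqn}) holds with equality and the update is $x_{j+1}=x_j-F'(x_j)$. Starting from $x_1=x_0=4$, the iterates $4,3,2,1$ all have $|F'(x_j)|^2=1$. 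But $U/\sqrt T=\frac12\cdot\frac48\cdot\frac72=\frac78<1$. With $x_0=T$, $s_t=4\sqrt T$, $\eta=2\sqrt T$, the ratio is $1-\frac{1}{2T}$ for every $T$. So the correct statement is one of your fallbacks: constant $8$ for $s_t\in[\sqrt T,4\sqrt T]$, or constant $4$ under $s_t\le 2\sqrt T$.

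The second fallback is what the paper actually needs, and your attempt to get it from the concrete $t$ was too loose. Since $a=\lceil\log_2T\rceil$ is an integer, $\lceil a/2\rceil\le (a+1)/2$, which is sharper than the $\le a/2+1$ you used. Together with $2^a<2T$ this gives $s_t=2^t\le\sqrt2\cdot 2^{a/2}<\sqrt2\cdot\sqrt{2T}=2\sqrt T$. Your argument then yields exactly $U(\eta,\sigma_0,\lambda,L)/\sqrt T$ for the step size used in Theorem~\ref{main-thm}. So that theorem holds with its stated constant, even though the lemma with the range $[\sqrt T,4\sqrt T]$ does not.
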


\begin{proof} By Lemma~\ref{up-adjustment1-lemma}, we have
   \begin{eqnarray*}
&&\sum_{j=1}^T\Expect\left(\frac{\eta(\lambda)}{s_t}\Vert \bigtriangledown F(x_{j})\Vert^2\right)-\frac{\eta^2 L}{ 2s_t^{2}}\sum_{j=1}^T \Expect (\Vert G(\xi_{j}, x_{j})\Vert^2)\le F(x_1)-F(x^*)
   \end{eqnarray*}   

By Lemma~\ref{stochastic-bound-lemma}, we have
   \begin{eqnarray*}
&&\sum_{j=1}^T\Expect\left(\frac{\eta \lambda}{ s_t}\Vert \bigtriangledown F(x_{j})\Vert^2\right)-\sum_{j=1}^T \frac{\eta^2 L}{ 2s_t^{2}}\Expect(2\sigma_0^2+ (2+2\sigma_1^2)\Vert \bigtriangledown F(x)\Vert^2)\le F(x_1)-F(x^*)
   \end{eqnarray*}

   \begin{eqnarray*}
&&\sum_{j=1}^T\frac{\eta\lambda}{s_t}\left(1-\frac{\eta L(1+\sigma_1^2)}{s_t \lambda}\right)\Expect(\Vert \bigtriangledown F(x_{j})\Vert^2)-\sum_{j=1}^T \frac{\eta^2 \sigma_0^2 L}{s_t^{2}}
\le F(x_1)-F(x^*)
   \end{eqnarray*}

   \begin{eqnarray*}
&&\sum_{j=1}^T\left(\frac{\eta\lambda}{ 2s_t}\Expect(\Vert \bigtriangledown F(x_{j})\Vert^2-\frac{\eta^2 \sigma_0^2 L}{s_t^{2}}\right)\le F(x_1)-F(x^*)\ \ \ (by~inequality~(\ref{first0-ineqn}))
   \end{eqnarray*}

 We have

   \begin{eqnarray*}
&&\sum_{j=1}^T \frac{\eta\lambda}{ 2s_t}\Expect(\Vert \bigtriangledown F(x_{j})\Vert^2)- \frac{\sigma_0^2\eta^2L T}{s_t^{2}}\le F(x_1)-F(x^*)
   \end{eqnarray*}

   \begin{eqnarray*}
\frac{\eta\lambda}{ 2s_t}\cdot T\min_{1\le j\le T} \Expect(\Vert \bigtriangledown F(x_{j})\Vert^2 )
&\le& \sum_{j=1}^T\frac{\eta\lambda}{2s_t}\Expect(\Vert \bigtriangledown F(x_{j})\Vert^2)\\
&\le& (F(x_1)-F(x^*))+\frac{\sigma_0^2\eta^2L T }{s_t^{2}}.
   \end{eqnarray*} 

We have

   \begin{eqnarray*}
&&\min_{1\le j\le T} \Expect(\Vert \bigtriangledown F(x_{j})\Vert^2 )
\le \frac{s_t}{T\eta \lambda}\left(F(x_1)-F(x^*)\right)+\frac{2\eta\sigma_0^2 L }{ s_t\lambda}\\
&\le& \frac{4}{\sqrt{T}\eta \lambda}(F(x_1)-F(x^*))+\frac{2\eta\sigma_0^2 L }{ \sqrt{T}\lambda}\\
&\le& \frac{1}{\sqrt{T}}\left(\frac{4}{ \eta \lambda}(F(x_1)-F(x^*))+\frac{2\eta\sigma_0^2 L }{\lambda}\right)\\
&=&\frac{1}{\sqrt{T}}\left(\frac{4}{\eta \lambda}(F(x_0)-F(x^*))+\frac{2\eta\sigma_0^2 L }{\lambda}\right).
\label{final-bound}
   \end{eqnarray*}    
\end{proof}

Assume $F(.)$ and it stochastic gradient  $G(\xi,x)$ satisfy the conditions in Definition~\ref{condition}. 
Theorem~\ref{main-thm} shows that the gradient descent algorithm SGD(.) converges to a stationary point at rate $\Omega\left(\frac{1}{\sqrt{T}}\right)$.

\begin{theorem}\label{main-thm}
Suppose $F(.)$ is in $\mathbb{C}_L^1$ and $\inf_x F(x)>-\infty$. Function $G(\xi,x)$ satisfies the conditions in Definition~\ref{condition}. Assume that integers $T$ and $t$ satisfy the conditions
\begin{eqnarray}
T&\ge& \left(\frac{2\eta L(1+\sigma_1^2)}{  \lambda}\right)^2\label{condition-T-ineqn}\\
t&=&\ceiling{\ceiling{\log_2T}/2}\label{condition-t-ineqn2}.    
\end{eqnarray}
 Then the algorithm SGD$(G(.,.), \eta,  x_0, t, T)$ is arithmetically simple and has   $\min_{1\le i\le T}\Expect(\Vert \bigtriangledown F(x_i)\Vert^2)\le \frac{U(\eta,\sigma_0, \lambda, L)}{ \sqrt{T}}$, where $U(.)$ is given at equation (\ref{def-U-eqn}).
\end{theorem}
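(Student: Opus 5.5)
The plan is to reduce Theorem~\ref{main-thm} to Lemma~\ref{up-adjustment-lemma}: verify its two hypotheses, namely $s_t\in[\sqrt{T},4\sqrt{T}]$ and inequality~(\ref{first0-ineqn}), and then read off the bound. Arithmetic simplicity is checked directly from the algorithm SGD$(G(.,.),\eta,x_0,t,T)$. The only division performed is $\eta/s_t$ with $s_t=2^t$, which is a division by a power of two. The remaining operations are a scalar--vector multiplication and a vector subtraction, with the oracle $G(\cdot)$ treated as a black box. The exponent $t=\lceil\lceil\log_2T\rceil/2\rceil$ can be computed from the bit length of the integer $T$ with integer operations only: $\lceil\log_2 T\rceil$ is determined by the position of the leading bit of $T$ and whether $T$ is a power of two, and halving with a ceiling is a shift. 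So no floating-point division or square root is needed.

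Next comes the bracketing of $s_t$. Write $k=\lceil\log_2T\rceil$, so $2^{k-1}<T\le 2^k$, and $t=\lceil k/2\rceil$, so $k/2\le t\le k/2+1/2$. For the lower bound, $s_t=2^t\ge 2^{k/2}\ge \sqrt{T}$. For the upper bound, $s_t\le 2^{k/2+1/2}=\sqrt{2}\cdot 2^{k/2}<\sqrt{2}\cdot\sqrt{2T}=2\sqrt{T}$, using $2^{k}<2T$. Hence $s_t\in[\sqrt{T},2\sqrt{T}]\subseteq[\sqrt{T},4\sqrt{T}]$. The case $T=1$ gives $k=0$, $t=0$, $s_t=1$, which is consistent.

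Then I verify~(\ref{first0-ineqn}). Condition~(\ref{condition-T-ineqn}) gives $\sqrt{T}\ge 2\eta L(1+\sigma_1^2)/\lambda$. Combining this with $s_t\ge\sqrt{T}$ yields
\[
\frac{\eta L(1+\sigma_1^2)}{s_t\lambda}\le\frac{\eta L(1+\sigma_1^2)}{\sqrt{T}\lambda}\le\frac{1}{2}.
\]
Lemma~\ref{up-adjustment-lemma} now applies and gives $\min_{1\le i\le T}\Expect(\Vert\bigtriangledown F(x_i)\Vert^2)\le U(\eta,\sigma_0,\lambda,L)/\sqrt{T}$. The quantity $F(x_0)-F(x^*)$ appearing in $U$ is finite because $\inf_xF(x)>-\infty$. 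Since $x_1=x_0$, it coincides with the quantity $F(x_1)-F(x^*)$ used in Lemma~\ref{up-adjustment1-lemma}.

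The only step needing care is the ceiling bookkeeping in the bracketing of $s_t$; the parity of $k$ and the power-of-two boundary cases must not break either inequality. I expect it to go through with the crude estimates above. Everything else is a direct invocation of the earlier lemmas.
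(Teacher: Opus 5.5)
Your proposal is correct and follows the paper's proof essentially step for step: bracket $s_t=2^t$ between $\sqrt{T}$ and a constant multiple of $\sqrt{T}$, combine $s_t\ge\sqrt{T}$ with condition~(\ref{condition-T-ineqn}) to obtain inequality~(\ref{first0-ineqn}), and then invoke Lemma~\ref{up-adjustment-lemma}. Keep your sharper bracket $s_t<2\sqrt{T}$, which comes from $\lceil k/2\rceil\le k/2+1/2$; the paper only gets $2\sqrt{2T}$. In the lemma's last step, dividing by $\frac{\eta\lambda T}{2s_t}$ actually produces a leading term $\frac{2s_t}{T\eta\lambda}\left(F(x_1)-F(x^*)\right)$, not the $\frac{s_t}{T\eta\lambda}\left(F(x_1)-F(x^*)\right)$ written there, so the constant $4/(\eta\lambda)$ in $U$ genuinely relies on $s_t\le 2\sqrt{T}$ rather than on the lemma's stated hypothesis $s_t\le 4\sqrt{T}$.
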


\begin{proof}
Let $T$ be the number of steps to run. We select $a=\ceiling{\log_2 T}$, which is the number of bits if $T$ is in binary format (for example, number $7$ has binary format $111$, and $\ceiling{\log_2 7}=3$). We have $T\le 2^a\le 2T$. Let $t=\ceiling{\frac{a}{ 2}}\le \frac{a}{ 2}+1$. We have $\sqrt{T}\le 2^\frac{a}{ 2}\le 2^t\le 2\sqrt{2^a}\le 2\sqrt{2T}< 4\sqrt{T}$. Thus, $s_t=2^t\in [\sqrt{T}, 4\sqrt{T}]$.
With the condition $T\ge \left(\frac{2\eta L(1+\sigma_1^2)}{  \lambda}\right)^2$, we have
$s_t\ge \sqrt{T}\ge \left(\frac{2\eta L(1+\sigma_1^2)}{  \lambda}\right)$. So, inequality (\ref{first0-ineqn}) is satisfied.

Run 
GD$(G(.), \eta, x_0, t, T)$. We have $\min_{1\le i\le T}\Expect(\Vert \bigtriangledown F(x_i)\Vert^2)\le \frac{U(\eta,\sigma_0, \lambda, L)}{\sqrt{T}}$ by Lemma~\ref{up-adjustment-lemma}.


    
\end{proof}

\vskip 20pt
{\bf Algorithm} Static1-SGD$(   x_0,T)$

Input: 
\begin{itemize}
    \item $x_0$ is the start point

    \item $T$ is the number of iterations
\end{itemize}

Steps:

\begin{enumerate}[label=\arabic*.]

\item 
Assign to $t$ as equation (\ref{condition-t-ineqn2}) 
\item
SGD$(G(.,.), \eta,  x_0, t, T)$

\end{enumerate}

{\bf End of Algorithm}

\vskip 20pt

\begin{corollary}\label{SGD-cor1} Let $\delta\in (0,1)$.
Suppose $F(.)$ is in $\mathbb{C}_L^1$ and $\inf_x F(x)>-\infty$. Function $G(\xi,x)$ satisfies the conditions in Definition~\ref{condition}. Assume $T$ satisfies (\ref{condition-T-ineqn}). Then with probability at least $1-\delta$, the algorithm Static1-SGD$(G(.,.), \eta,  x_0, t, T)$ is arithmetically simple and  has  $\min_i (\Vert \bigtriangledown F(x_i)\Vert^2)\le \frac{U(\eta,\sigma_0, \lambda, L)}{ \delta\sqrt{T}}$.
\end{corollary}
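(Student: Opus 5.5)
The plan is to reduce Corollary~\ref{SGD-cor1} to Theorem~\ref{main-thm} and then apply Markov's inequality to a suitably chosen random index. Static1-SGD sets $t=\ceiling{\ceiling{\log_2T}/2}$, which is exactly condition (\ref{condition-t-ineqn2}). Condition (\ref{condition-T-ineqn}) on $T$ is assumed. So all hypotheses of Theorem~\ref{main-thm} hold.

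Arithmetic simplicity needs a short separate check. Computing $t$ from the bit length of $T$ uses only integer operations. The single division in each update is by $s_t=2^t$, so it is a binary shift.

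For the probability bound, a bound on $\min_i \Expect(\Vert\bigtriangledown F(x_i)\Vert^2)$ does not by itself control the random quantity $\min_i \Vert\bigtriangledown F(x_i)\Vert^2$ pathwise. I will work instead with the nonnegative random variable
\[
Z=\min_{1\le i\le T}\Vert\bigtriangledown F(x_i)\Vert^2 .
\]
Fix an index $i^*$ attaining $\min_{1\le i\le T}\Expect(\Vert\bigtriangledown F(x_i)\Vert^2)$. This index is deterministic because the expectations are numbers. Then $Z\le \Vert\bigtriangledown F(x_{i^*})\Vert^2$ pointwise, and hence
\[
\Expect(Z)\le \Expect(\Vert\bigtriangledown F(x_{i^*})\Vert^2)\le \frac{U(\eta,\sigma_0,\lambda,L)}{\sqrt{T}}
\]
by Theorem~\ref{main-thm}. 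A cleaner route avoids choosing $i^*$: the proof of Lemma~\ref{up-adjustment-lemma} in fact bounds the average $\frac1T\sum_j \Expect\Vert\bigtriangledown F(x_j)\Vert^2$, and $Z$ is at most that average pointwise, which gives the same inequality.

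Markov's inequality then gives
\[
\Prob\left(Z>\frac{U}{\delta\sqrt{T}}\right)\le \frac{\Expect(Z)\,\delta\sqrt{T}}{U}\le\delta ,
\]
so the stated bound holds with probability at least $1-\delta$.

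The only delicate points are mild. First, the step from the minimum of expectations to the expectation of the minimum must be justified, and the domination argument above does this. Second, if $U=0$ then Markov's inequality cannot be applied in this form; in that case $\Expect(Z)=0$, so $Z=0$ almost surely and the claim holds trivially.
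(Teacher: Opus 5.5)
Your proposal is correct and follows the same route as the paper, whose proof is the one-line reduction to Theorem~\ref{main-thm} plus Markov's inequality, with arithmetic simplicity read off from how $t$ and $s_t=2^t$ are computed. What you add is the justification that $\mathbb{E}\bigl(\min_{1\le i\le T}\Vert\bigtriangledown F(x_i)\Vert^2\bigr)\le \min_{1\le i\le T}\mathbb{E}\bigl(\Vert\bigtriangledown F(x_i)\Vert^2\bigr)$, via a fixed minimizing index $i^*$, which the paper leaves implicit when it invokes Markov's inequality; your $U=0$ remark is a harmless extra.
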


\begin{proof}
It follows Theorem~\ref{main-thm} and Markov inequality $\Prob(X\ge \frac{\Expect(X)}{\delta})\le \delta$. The proof of Theorem~\ref{main-thm} also shows that $t$ is computed via a arithmetically simple way.
    
\end{proof}

\subsection{Faster Convergence in $(\lambda,0, \sigma_1)$ Stochastic Model}

In this section we show a faster adaptive gradient descent analysis in $(\lambda,0, \sigma_1)$ stochastic Model. It is convergence rate is almost linear, and  faster than the general $(\lambda,\sigma_0, \sigma_1)$ stochastic Model.

\begin{definition}\label{condition2} A $(\lambda, 0,\sigma_1)$-stochastic gradient $G(\xi,x)$ for $F(x):\mathbb{R}^m\rightarrow\mathbb{R}$ is that $\xi$ is a random variable and $G(\xi, x)$ is an approximation for $\bigtriangledown F(x)$ satisfying the conditions:
 
\begin{enumerate}
    \item $\langle\Expect_{\xi}(G(\xi, x)),\bigtriangledown F(x)\rangle \ge \lambda\Vert \bigtriangledown F(x)\Vert^2$ for some $\lambda\in (0,+\infty)$, and
    \item\label{condition2b-def} 
    $\Expect_{\xi}(\Vert \bigtriangledown F(x)-G(\xi, x)\Vert^2)\le\sigma_1^2\Vert \bigtriangledown F(x)\Vert^2$ for some $\sigma_1\in [0,+\infty)$.
\end{enumerate}
   
\end{definition}

\begin{lemma}\label{up-adjustment-lemma2}
Assume $F(x)$ is $L$-smooth and $G(\xi, x)$ satisfies the condition in Definition~\ref{condition2}. 
Assume $T$ and $s_t$ satisfy the following conditions:

\begin{eqnarray}
\frac{2\eta (2+2\sigma_1^2) L}{ s_t\lambda}&\le& \frac{1}{ 2}\label{second-lemma-ineqn}
\end{eqnarray}
Then
\begin{eqnarray*}
&&\min_{1\le j\le T} \Expect(\Vert \bigtriangledown F(x_{j})\Vert^2 )
\le \frac{2s_t}{T\eta\lambda}(F(x_0)-F(x^*)).
   \end{eqnarray*}  
\end{lemma}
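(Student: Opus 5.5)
The argument follows the proof of Lemma~\ref{up-adjustment-lemma}, with $\sigma_0=0$ substituted everywhere. First, Lemma~\ref{up-adjustment1-lemma} still holds, because its proof used only $L$-smoothness and Condition~\ref{condition1-def}, and Definition~\ref{condition2} keeps that condition unchanged. This gives
\begin{eqnarray*}
\sum_{j=1}^T\frac{\eta\lambda}{s_t}\Expect(\Vert \bigtriangledown F(x_j)\Vert^2)-\frac{\eta^2L}{2s_t^2}\sum_{j=1}^T\Expect(\Vert G(\xi_j,x_j)\Vert^2)\le F(x_1)-F(x^*).
\end{eqnarray*}
Next, Condition~\ref{condition2b-def} of Definition~\ref{condition2} is exactly the hypothesis of Lemma~\ref{stochastic-bound-lemma} with $\sigma_0=0$. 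That lemma therefore yields $\Expect_{\xi}(\Vert G(\xi,x)\Vert^2)\le(2+2\sigma_1^2)\Vert\bigtriangledown F(x)\Vert^2$. To apply this pointwise bound at the random iterate $x_j$, I condition on $x_j$ and take the tower expectation, since $\xi_j$ is drawn independently of the past.

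Substituting this bound, every term on the left becomes a multiple of $\Expect(\Vert\bigtriangledown F(x_j)\Vert^2)$, with coefficient
\begin{eqnarray*}
\frac{\eta\lambda}{s_t}\left(1-\frac{\eta L(2+2\sigma_1^2)}{2s_t\lambda}\right).
\end{eqnarray*}
Hypothesis (\ref{second-lemma-ineqn}) says $\frac{\eta(2+2\sigma_1^2)L}{s_t\lambda}\le\frac14$, so the subtracted quantity is at most $\frac18$. In particular the bracket is at least $\frac12$, which is all we need.

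The last step bounds the sum from below by $T$ times the minimum:
\begin{eqnarray*}
\frac{\eta\lambda}{2s_t}\,T\min_{1\le j\le T}\Expect(\Vert\bigtriangledown F(x_j)\Vert^2)\le F(x_1)-F(x^*).
\end{eqnarray*}
Dividing through and using $x_1=x_0$ gives the stated bound $\frac{2s_t}{T\eta\lambda}(F(x_0)-F(x^*))$.

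No step here is a serious obstacle, because the missing $\sigma_0^2$ term removes the additive noise that forced the $1/\sqrt{T}$ rate in Lemma~\ref{up-adjustment-lemma}. The only point needing care is the expectation bookkeeping when Lemma~\ref{stochastic-bound-lemma} is applied at random iterates, which the conditioning argument above handles.
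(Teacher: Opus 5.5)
Your proof is correct and follows the same route as the paper. Both telescope via Lemma~\ref{up-adjustment1-lemma}, bound $\Expect\Vert G(\xi_j,x_j)\Vert^2$ by Lemma~\ref{stochastic-bound-lemma} with $\sigma_0=0$, use (\ref{second-lemma-ineqn}) to keep the bracket at least $\frac12$, and then bound the sum below by $T$ times the minimum. The only difference is cosmetic: the paper loosens the coefficient $\frac{\eta^2L}{2s_t^2}$ to $\frac{2\eta^2L}{s_t^2}$ so that (\ref{second-lemma-ineqn}) gives a bracket of exactly $\frac12$, while you keep the sharp coefficient and get $\frac78$, which works equally well.
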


\begin{proof}
By Lemma~\ref{up-adjustment1-lemma}, we have    \begin{eqnarray*}
&&\sum_{j=1}^T\Expect\left(\frac{\eta\lambda}{ s_t}\Vert \bigtriangledown F(x_{j})\Vert^2\right)-\frac{2\eta^2 L}{ s_t^{2}}\sum_{j=1}^T \Expect (\Vert G(\xi_{j}, x_{j})\Vert^2)\le F(x_1)-F(x^*).
   \end{eqnarray*} 
By Lemma~\ref{stochastic-bound-lemma}, we have 
   \begin{eqnarray*}
&&\sum_{j=1}^T\Expect\left(\frac{\eta\lambda}{ s_t}\Vert \bigtriangledown F(x_{j})\Vert^2\right)-\sum_{j=1}^T \frac{2\eta^2 L}{s_t^{2}}((2+2\sigma_1^2)\Vert \bigtriangledown F(x_{j})\Vert^2)\le F(x_1)-F(x^*).
   \end{eqnarray*}

   \begin{eqnarray*}
&&\sum_{j=1}^T\left(\frac{\eta\lambda}{s_t}-\frac{2\eta^2 (2+2\sigma_1^2) L}{ s_t^{2}}\right)\Expect(\Vert \bigtriangledown F(x_{j})\Vert^2)
\le F(x_1)-F(x^*).
   \end{eqnarray*} 

   \begin{eqnarray*}
&&\sum_{j=1}^T\frac{\eta\lambda}{s_t}\left(1-\frac{2\eta (2+2\sigma_1^2) L}{ s_t\lambda}\right)\Expect(\Vert \bigtriangledown F(x_{j})\Vert^2)
\le F(x_1)-F(x^*).
   \end{eqnarray*}

By inequality~(\ref{second-lemma-ineqn}), we have 
   \begin{eqnarray*}
&&\sum_{j=1}^T\left(\frac{\eta\lambda}{ 2s_t}\Expect(\Vert \bigtriangledown F(x_{j})\Vert^2\right)\le F(x_1)-F(x^*).
   \end{eqnarray*}

 We have

   \begin{eqnarray*}
T\cdot \left(\frac{\eta\lambda}{2s_t}\right)\min_{j} \Expect(\Vert \bigtriangledown F(x_{j})\Vert^2 )\le \sum_{j=1}^T(\frac{\eta}{ 2s_t})\Expect(\Vert \bigtriangledown F(x_{j})\Vert^2)\le (F(x_1)-F(x^*)).
   \end{eqnarray*} 

This brings inequality

   \begin{eqnarray*}
&&\min_{1\le j\le T} \Expect(\Vert \bigtriangledown F(x_{j})\Vert^2 )
\le \frac{2s_t}{ T\eta\lambda}(F(x_1)-F(x^*))=\frac{2s_t}{ T\eta\lambda}(F(x_0)-F(x^*)).\label{final2-bound}
   \end{eqnarray*}    
\end{proof}

Assume $F(.)$ and it stochastic gradient  $G(\xi,x)$ satisfy the conditions in Definition~\ref{condition2}. 
Theorem~\ref{main2-thm} shows that the gradient descent algorithm SGD(.) converges to a stationary point at rate $\Omega\left(\frac{1}{T^{1-\beta_0}}\right)$ for any $\beta_0\in (0,1)$. The parameter $t$ depends on parameter $\beta_0$.

\begin{theorem}\label{main2-thm}Let $\beta_0\in (0,1)$.
Suppose $F(.)$ is in $\mathbb{C}_L^1$ and $\inf_x F(x)>-\infty$. Function $G(\xi,x)$ satisfies the conditions in Definition~\ref{condition2}.  Assume 
\begin{eqnarray}
T&\ge& \left(\frac{2\eta L(1+\sigma_1^2)}{  \lambda}\right)^\frac{1}{ \beta_0}\label{condition-T2-ineqn}\\ t&=&\ceiling{\ceiling{\log_2T}\cdot \beta_0}\label{condition-t2-ineqn}.    
\end{eqnarray}
 Then the algorithm SGD$(G(.,.), \eta,  x_0, t, T)$ is arithmetically simple and  has 
 
 \begin{eqnarray}
\min_{1\le j\le T} \Expect(\Vert \bigtriangledown F(x_{j})\Vert^2 )
\le \frac{V(\eta,\lambda,L)}{T^{1-\beta_0}},\label{final3-bound}
   \end{eqnarray}   
   
 where $V(\eta,\lambda,L)=\frac{2}{\eta\lambda}(F(x_0)-F(x^*))$.
\end{theorem}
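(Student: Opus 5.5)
The plan mirrors the proof of Theorem~\ref{main-thm}, with Lemma~\ref{up-adjustment-lemma2} playing the role of Lemma~\ref{up-adjustment-lemma}. There are three steps: pin down the size of $s_t=2^t$ relative to $T$, check that the step-size condition (\ref{second-lemma-ineqn}) holds, and substitute the upper bound on $s_t$ into the conclusion of Lemma~\ref{up-adjustment-lemma2}.

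\textbf{Step 1: bracketing $s_t$.} Let $a=\lceil\log_2 T\rceil$, so that $T\le 2^a\le 2T$. Since $t=\lceil a\beta_0\rceil$, we have $a\beta_0\le t< a\beta_0+1$. This gives
\[
T^{\beta_0}\le 2^{a\beta_0}\le s_t<2\cdot 2^{a\beta_0}\le 2(2T)^{\beta_0}\le 4T^{\beta_0}.
\]
So $s_t\in[T^{\beta_0},4T^{\beta_0}]$. The lower end feeds the step-size condition and the upper end feeds the rate. Arithmetic simplicity is immediate: $t$ is obtained from the bit length of $T$ by one multiplication and a ceiling, and every division in SGD$(\cdot)$ has the form $x/2^t$.

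\textbf{Step 2: verifying (\ref{second-lemma-ineqn}).} By (\ref{condition-T2-ineqn}), $s_t\ge T^{\beta_0}\ge \frac{2\eta L(1+\sigma_1^2)}{\lambda}$. Condition (\ref{second-lemma-ineqn}), however, asks for
\[
s_t\ge \frac{8\eta(1+\sigma_1^2)L}{\lambda},
\]
which is a factor $4$ stronger than what (\ref{condition-T2-ineqn}) delivers. I expect this constant mismatch to be the main obstacle.

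The first thing I would try is to re-derive the inequality inside the proof of Lemma~\ref{up-adjustment-lemma2} directly. Combining Lemma~\ref{up-adjustment1-lemma}, whose quadratic coefficient is $\frac{\eta^2L}{2s_t^2}$ rather than $\frac{2\eta^2 L}{s_t^2}$, with Lemma~\ref{stochastic-bound-lemma} at $\sigma_0=0$ gives the factor
\[
1-\frac{\eta L(1+\sigma_1^2)}{s_t\lambda}.
\]
This factor is at least $\tfrac12$ exactly when $s_t\ge \frac{2\eta L(1+\sigma_1^2)}{\lambda}$, which is precisely what (\ref{condition-T2-ineqn}) provides. So the cleanest route is to invoke the tighter intermediate inequality, which is what the proof of Lemma~\ref{up-adjustment-lemma2} really establishes, instead of citing the lemma's stated hypothesis verbatim.

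If one insists on using Lemma~\ref{up-adjustment-lemma2} as stated, the fallback is to strengthen (\ref{condition-T2-ineqn}) by the constant factor $4^{1/\beta_0}$. That change affects only the threshold on $T$, not the rate.

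\textbf{Step 3: the rate.} Either route yields
\[
\min_{1\le j\le T}\Expect(\Vert\bigtriangledown F(x_j)\Vert^2)\le \frac{2s_t}{T\eta\lambda}(F(x_0)-F(x^*)).
\]
Inserting $s_t\le 4T^{\beta_0}$ from Step 1 gives a bound of order $\frac{1}{T^{1-\beta_0}}\cdot\frac{2}{\eta\lambda}(F(x_0)-F(x^*))$.

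There is a leftover constant factor $4$ against the stated $V(\eta,\lambda,L)$. I would try to absorb it by tracking constants more carefully; alternatively, it could be handled by noting the statement's intended sense up to the constant in $s_t$, with $V$ adjusted to $\frac{8}{\eta\lambda}(F(x_0)-F(x^*))$ if necessary. Finally, since $x_1=x_0$, we have $F(x_1)=F(x_0)$, and (\ref{final3-bound}) follows.
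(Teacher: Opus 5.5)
Your plan is the paper's own argument: bracket $s_t\in[T^{\beta_0},4T^{\beta_0}]$, check the step-size condition, then invoke Lemma~\ref{up-adjustment-lemma2}. Both constant mismatches you flag are genuine errors that the paper's proof passes over silently.
\begin{itemize}
\item It claims that $s_t\ge 2\eta L(1+\sigma_1^2)/\lambda$ gives (\ref{second-lemma-ineqn}), which actually needs $s_t\ge 8\eta L(1+\sigma_1^2)/\lambda$.
\item It claims (\ref{final3-bound}) outright, although inserting $s_t\le 4T^{\beta_0}$ into Lemma~\ref{up-adjustment-lemma2} yields $\frac{8}{\eta\lambda}(F(x_0)-F(x^*))$, not $V$.
\end{itemize}
Your Step~2 repair is correct. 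Lemma~\ref{up-adjustment1-lemma} carries the coefficient $\frac{\eta^2L}{2s_t^2}$; the proof of Lemma~\ref{up-adjustment-lemma2} needlessly inflates it to $\frac{2\eta^2L}{s_t^2}$. Combined with Lemma~\ref{stochastic-bound-lemma} at $\sigma_0=0$, it gives the factor $1-\frac{\eta L(1+\sigma_1^2)}{s_t\lambda}\ge\frac12$ exactly under (\ref{condition-T2-ineqn}). Hence $\min_j\mathbb{E}(\Vert\bigtriangledown F(x_j)\Vert^2)\le\frac{2s_t}{T\eta\lambda}(F(x_0)-F(x^*))$.

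In Step~3, however, drop the hope of absorbing the factor $4$ by more careful bookkeeping. With $V=\frac{2}{\eta\lambda}(F(x_0)-F(x^*))$, the bound (\ref{final3-bound}) is false. Here is a counterexample.
\begin{itemize}
\item Take the one-dimensional Huber function $F(x)=\frac L2x^2$ for $|x|\le\delta$ and $F(x)=L\delta|x|-\frac{L\delta^2}{2}$ otherwise, with exact gradients ($\lambda=1$, $\sigma_1=0$).
\item Let $\beta_0=0.9$ and $T=2^{18}+1$, so $a=19$, $t=\lceil 17.1\rceil=18$, and $\kappa:=s_t/T^{\beta_0}\approx 3.48$.
\item Let $\eta=T^{\beta_0}/(2L)$, so that (\ref{condition-T2-ineqn}) holds with equality.
\item Start from $x_0=\delta+(T-1)\frac{\eta}{s_t}L\delta$. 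Then all of $x_1,\ldots,x_T$ lie in $[\delta,\infty)$, where $F'\equiv L\delta$, so $\min_j|F'(x_j)|^2=L^2\delta^2$.
\item A direct computation gives $\frac{V}{T^{1-\beta_0}}=L^2\delta^2\left(\frac2T+\frac{2(T-1)}{\kappa T}\right)\approx 0.57\,L^2\delta^2$, which is smaller.
\end{itemize}
On such a linear stretch the telescoped descent estimate is essentially exact. The attainable constant is therefore governed by $s_t/T^{\beta_0}$, which can exceed $2$.

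So commit to your fallback. With $V$ replaced by $4V=\frac{8}{\eta\lambda}(F(x_0)-F(x^*))$, your argument is a complete proof. That corrected statement is what the paper's argument proves once its Step~2 slip is fixed as you did.
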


\begin{proof}Let $T$ be the number of steps to run. We select $a=\ceiling{\log_2 T}$. We have $T\le 2^a\le 2T$. Let $t=\ceiling{a\beta_0}\le {a\beta_0}+1$. We have $T^{\beta_0}\le 2^{a\beta_0}\le 2^t\le 2^{a\beta_0+1}=2\cdot (2^a)^{\beta_0}\le 2\cdot (2T)^{\beta_0}\le 4 T^{\beta_0}$. Thus, $s_t=2^t\in [T^{\beta_0}, 4T^{\beta_0}]$.
With the condition $T\ge \left(\frac{2\eta L(1+\sigma_1^2)}{  \lambda}\right)^\frac{1}{ \beta_0}$, we have
$s_t=2^t\ge T^{\beta_0}\ge \left(\frac{2\eta L(1+\sigma_1^2)}{  \lambda}\right)$. So, inequality (\ref{second-lemma-ineqn}) is satisfied.

Run the algorithm with SGD$(G(.), \eta,  x_0, t, T)$. By Lemma~\ref{up-adjustment-lemma2}, we get inequality (\ref{final3-bound}).

\end{proof}

\vskip 20pt
{\bf Algorithm} Static2-SGD$(   x_0,T)$

Input: 
\begin{itemize}
    \item $x_0$ is the start point

    \item $T$ is the number of iterations
\end{itemize}

Steps:

\begin{enumerate}[label=\arabic*.]

\item 
Assign to $t$ as equation (\ref{condition-t2-ineqn}) 
\item
SGD$(G(.,.), \eta,  x_0, t, T)$

\end{enumerate}

{\bf End of Algorithm}

\vskip 20pt

\begin{corollary}\label{SGD-cor}
Let $\delta\in (0,1)$. Let $\beta_0\in (0,1)$.
Suppose $F(.)$ is in $\mathbb{C}_L^1$ and $\inf_x F(x)>-\infty$. Function $G(\xi,x)$ satisfies the conditions in Definition~\ref{condition2}.  Assume $T$ satisfies condition (\ref{condition-T2-ineqn}).
Then with probability at least $1-\delta$, the algorithm Static2-SGD$(G(.,.), \eta,  x_0, t, T)$ is arithmetically simple and  has \begin{eqnarray*}
&&\min_{1\le j\le T} (\Vert \bigtriangledown F(x_{j})\Vert^2 )
\le \frac{V(\eta,\lambda,L)}{ \delta T^{1-\beta_0}}.\label{final2b-bound}
   \end{eqnarray*}  
\end{corollary}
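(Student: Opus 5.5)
The corollary follows from Theorem~\ref{main2-thm} by a one-sided Markov argument, in the same way Corollary~\ref{SGD-cor1} follows from Theorem~\ref{main-thm}. I will first make clear what random quantity is being controlled. Static2-SGD$(x_0,T)$ sets $t=\lceil\lceil\log_2T\rceil\beta_0\rceil$ as in (\ref{condition-t2-ineqn}) and then calls SGD$(G(.,.),\eta,x_0,t,T)$. Assumption (\ref{condition-T2-ineqn}) holds, so Theorem~\ref{main2-thm} applies directly and yields
\[
\min_{1\le j\le T}\Expect(\Vert \bigtriangledown F(x_j)\Vert^2)\le \frac{V(\eta,\lambda,L)}{T^{1-\beta_0}}.
\]

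The main step is to pass from this bound on the minimum of expectations to a high-probability bound on the minimum of the random variables. Let $j_0$ be an index attaining the minimum on the left. This is a deterministic index, because the expectations are numbers. Set $X=\Vert \bigtriangledown F(x_{j_0})\Vert^2\ge 0$. Markov's inequality gives
\[
\Prob\left(X\ge \frac{V(\eta,\lambda,L)}{\delta T^{1-\beta_0}}\right)\le \frac{\delta T^{1-\beta_0}\Expect(X)}{V(\eta,\lambda,L)}\le\delta .
\]
So with probability at least $1-\delta$ we have $X<V/(\delta T^{1-\beta_0})$. Since $\min_{1\le j\le T}\Vert \bigtriangledown F(x_j)\Vert^2\le X$ holds pointwise, the claimed bound follows. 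If $V=0$, then $\Expect(X)=0$, so $X=0$ almost surely and the claim is immediate.

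Finally, arithmetic simplicity needs checking. The quantity $\lceil\log_2 T\rceil$ is the bit length of $T$ and is obtained by counting bits. Computing $\lceil a\beta_0\rceil$ involves only a fixed constant $\beta_0$; if $\beta_0$ is taken to be a dyadic rational $c/2^k$, this is a multiplication followed by a shift. The only division in the update is $\eta/2^t$, a shift, exactly as in Theorem~\ref{main2-thm}. The one point needing care is this computation of $t$ for non-dyadic $\beta_0$. I would either restrict $\beta_0$ to dyadic values or treat $t$ as a precomputed input, since the definition of arithmetic simplicity constrains only the gradient method itself.
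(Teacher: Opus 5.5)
Your proposal is correct and follows the paper's own route. Theorem~\ref{main2-thm} bounds $\min_{j}\mathbb{E}(\Vert \bigtriangledown F(x_j)\Vert^2)$, Markov's inequality turns this into the high-probability bound, and your deterministic index $j_0$ and your remark on computing $t$ for non-dyadic $\beta_0$ only make explicit what the paper's one-line proof leaves implicit.

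One caveat applies equally to the paper. The proof of Theorem~\ref{main2-thm} passes from the bound $\frac{2s_t}{T\eta\lambda}(F(x_0)-F(x^*))$ of Lemma~\ref{up-adjustment-lemma2} to $V/T^{1-\beta_0}$ with $V=\frac{2}{\eta\lambda}(F(x_0)-F(x^*))$, although $s_t$ is only known to lie in $[T^{\beta_0},4T^{\beta_0}]$. The argument (yours and the paper's) therefore supports only $V=\frac{8}{\eta\lambda}(F(x_0)-F(x^*))$, and with the stated $V$ the claim can fail for $\delta$ near $1$. For example, take exact gradients, $\eta=1$, $\beta_0=0.9$, $T=257$ (hence $s_t=512$), and $F(x)=-x$ for $x\le\frac12$, flattened with curvature $L=64$ just beyond $x_T=\frac12$. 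Then $\Vert\bigtriangledown F(x_j)\Vert^2=1$ for every $j\le T$, while $V/T^{1-\beta_0}\approx 0.58$.
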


\begin{proof}
 It follows Theorem~\ref{main2-thm} and Markov inequality $\Prob(X\ge \frac{\Expect(X)}{\delta})\le \delta$.  
 The proof of Theorem~\ref{main2-thm} also shows that $t$ is computed via a arithmetically simple way.
\end{proof}






\subsection{Why Do Static Gradient Methods Need a Parallel Framework?}

Theorems~\ref{main-thm} and~\ref{main2-thm} require Conditions~(\ref{condition-T-ineqn}) and~(\ref{condition-T2-ineqn}), respectively, to guarantee convergence. In practice, however, the parameters $L$, $\eta$, and $\sigma_1$ are typically unknown or difficult to estimate accurately. Consequently, selecting an appropriate iteration budget $T$ in advance is a challenging task. Our parallel framework addresses this difficulty by allowing multiple processors to search over a geometric sequence of candidate values $T_{j,i}$ simultaneously until one of them satisfies the required convergence conditions.

To illustrate the motivation, suppose a single processor tests the iteration budgets
\[
1,2,,2^2,,\ldots,,2^m,,\ldots,
\]
and assume that the smallest satisfactory choice is $T=2^m+1$. Before reaching the first candidate that is at least $T$, namely $2^{m+1}$, the processor must execute
\[
1+2+\cdots+2^m=2^{m+1}-1=2T-3
\]
iterations. Thus, a substantial amount of computation is wasted before identifying a suitable iteration budget. Our parallel framework significantly reduces this overhead by distributing the search across multiple processors. Theorem~\ref{lower-Geometric-thm} establishes a nearly tight lower bound on the unavoidable gap between the target iteration budget $T$ and the cumulative number of iterations $T_{j,i}^*$ executed before reaching it.



\section{Better Adaptivity via the Parallel Framework}

In this section, we apply the parallel framework to static gradient descent, thereby improving the adaptivity of the gradient descent algorithm. Both conditions, (\ref{condition-T-ineqn}) and (\ref{condition-T2-ineqn}), rely on choosing the parameter $T$ to be sufficiently large.

\begin{definition}
    Let $GD(x_0, T)$ be a gradient descent method for a function $F(x)$. The output of $GD(x_0, T)$ is the list  $Z=\langle x_1, x_2,\ldots,x_{\floor{T}}\rangle$  generated in its $\floor{T}$ iterations. It is denoted by $Z=GD(x_0, T)$.
\end{definition}


\begin{definition}
    Let GD$(x_0, T)$ be a gradient descent method for a function $F(x)$. A list of elements $\langle x_1,x_2,\ldots, x_{\floor{T_{j,i}}}\rangle$ is from Parallel-Framework(GD(.),.) at $\langle j,i\rangle$ if $\langle x_1,\ldots, x_{\floor{T_{j,i}}}\rangle$ is the output  of  GD$(x_0, T_{j, i})$.
\end{definition}

\begin{proposition}\label{embeding-prop}
 Let GD($x_0, T)$ be a gradient.
 Assume that Parallel-Framework$(GD(.),.)$ has $(p,\alpha_p)$-approximation.  Then for any $T\ge T_0$,
 Parallel-Framework$(GD(.))$ executes
 $GD(x_0, T_{j,i})$ satisfying $T\le T_{j,i}\le T_{j,i}^*\le \alpha_p T$.  
\end{proposition}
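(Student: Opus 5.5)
This is essentially an unpacking of Definition~\ref{App-def} together with the description of the algorithm Parallel-Framework(.). Fix an arbitrary $T\ge T_0$. Since Parallel-Framework$(GD(.),.)$ has a $(p,\alpha_p)$-approximation, the first item of Definition~\ref{App-def} supplies a processor index $j<p$ and a stage $i$ with $T\le T_{j,i}\le T_{j,i}^*<\alpha_pT$. The strict upper bound implies the weaker $T_{j,i}^*\le\alpha_pT$ claimed in the proposition.

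What remains is to check that the framework actually \emph{executes} $GD(x_0,T_{j,i})$. I will use the control flow of processor $j$ in the algorithm. It starts at $i=0$, and at each pass of the repeat loop it sets $T_{j,t}=h(j,t)$, runs $A(x_0,T_{j,t})$ with $A=GD$, and increments $t$. By induction on $t$, before processor $j$ begins stage $t$ it has completed exactly the calls $GD(x_0,T_{j,0}),\ldots,GD(x_0,T_{j,t-1})$. It then completes stage $t$ after a cumulative budget of $\sum_{s=0}^{t}T_{j,s}=T_{j,t}^*$ iterations. Taking $t=i$, the call $GD(x_0,T_{j,i})$ is executed, and it finishes once processor $j$ has spent $T_{j,i}^*\le\alpha_pT$ iterations in total. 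The list $\langle x_1,\ldots,x_{\lfloor T_{j,i}\rfloor}\rangle$ it produces is then, by definition, from Parallel-Framework$(GD(.),.)$ at $\langle j,i\rangle$.

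The only point needing care is that the processors run independently and no processor halts before stage $i$. The loop is infinite and each stage involves finitely many iterations, because $T_{j,t}$ is finite. So every stage is reached after a finite cumulative effort, namely $T_{j,i}^*$. No other step is an obstacle.

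To get something concrete, I will combine this with Theorem~\ref{main2b-thm}. Taking $h(j,i)=b_p^{ip+j}T_0$ with $b_p=(p+1)^{1/p}$ gives the framework a $(p,\alpha_p)$-approximation, so the proposition applies with $\alpha_p=(1+\frac1p)(1+p)^{1/p}$.
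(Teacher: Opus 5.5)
Your proposal is correct and follows the same route as the paper, whose proof reads $T\le T_{j,i}\le T_{j,i}^*<\alpha_pT$ off Definition~\ref{App-def}, weakens $<$ to $\le$, and cites Theorem~\ref{main2b-thm} for a concrete instance. Your control-flow induction, showing that processor $j$ actually reaches and completes $GD(x_0,T_{j,i})$ after exactly $T_{j,i}^*$ iterations, only makes explicit what the paper leaves implicit; like the paper, you should read ``any $T\ge T_0$'' as integer $T$, since that is what the definition quantifies over.
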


\begin{proof} 
By the condition of $(p, \alpha_p)$-approximation, we have $T\le T_{j,i}\le T_{j,i}^*\le \alpha_p T$. 
It follows from Theorem~\ref{main2b-thm}.
\end{proof}



We embed the first static gradient descent to Parallel framework and have  Theorem~\ref{embed-thm1} about its convergence. It shows that some processor $j$ generates a list of points at a stage $i$  converging to a stationary points with a high probability.

\begin{theorem}\label{embed-thm1} 
Suppose $F(.)$ is in $\mathbb{C}_L^1$ and $\inf_x F(x)>-\infty$. Function $G(\xi,x)$ satisfies the conditions in Definition~\ref{condition2}.  
Assume that $T$ satisfies (\ref{condition-T-ineqn}).
Then with probability at least $1-\delta$, Parallel-Framework(Static1-SGD(.,.),  $ x_0, h(.,.), T_0,p)$ generates $\langle x_1, x_2,\ldots, x_{\floor{T_{j,i}}}\rangle$ at $\langle j, i\rangle$

\begin{eqnarray*}
&&\min_{1\le j\le \floor{T_{j,i}}} (\Vert \bigtriangledown F(x_{j})\Vert^2 )
\le \frac{U(\eta,\sigma_0, \lambda, L)}{\delta \sqrt{T}}.\label{final2d-bound}
   \end{eqnarray*}   after running $T_{j,i}^*$ iterations with $T\le T_{j,i}\le T_{j,i}^*\le \alpha_p T$.
\end{theorem}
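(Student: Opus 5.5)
The plan is to combine the $(p,\alpha_p)$-approximation property of the framework with the single-run guarantee for Static1-SGD. First, by Theorem~\ref{main2b-thm} (equivalently Proposition~\ref{embeding-prop}), for the given $T\ge T_0$ there exist a processor $j$ and a stage $i$ such that
\[
T\le T_{j,i}\le T_{j,i}^*\le \alpha_p T .
\]
So processor $j$, having spent $T_{j,i}^*$ iterations in total, executes Static1-SGD$(x_0,T_{j,i})$ at stage $i$. This call starts from the original $x_0$ and is independent of earlier stages.

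Next, I will verify that the hypotheses of Corollary~\ref{SGD-cor1} hold with $T$ replaced by $T_{j,i}$. Since $T$ satisfies (\ref{condition-T-ineqn}) and $T_{j,i}\ge T$, condition (\ref{condition-T-ineqn}) also holds for $T_{j,i}$, because its right-hand side does not depend on $T$.

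One point needs care. The theorem assumes Definition~\ref{condition2}, a $(\lambda,0,\sigma_1)$ gradient, while Corollary~\ref{SGD-cor1} is stated for Definition~\ref{condition}. Every $(\lambda,0,\sigma_1)$-stochastic gradient is a $(\lambda,\sigma_0,\sigma_1)$-stochastic gradient for any $\sigma_0\ge 0$, in particular $\sigma_0=0$, so the corollary applies. In fact $U$ then simplifies, but the stated bound with general $\sigma_0$ still holds.

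A second technical point is that $T_{j,i}=b_p^{ip+j}T_0$ need not be an integer. The run performs $\lfloor T_{j,i}\rfloor$ iterations. I would take $t=\lceil\lceil\log_2 T_{j,i}\rceil/2\rceil$ so that $s_t\in[\sqrt{T_{j,i}},4\sqrt{T_{j,i}}]$, and then rerun the estimate of Lemma~\ref{up-adjustment-lemma} with $\lfloor T_{j,i}\rfloor$ terms in the sum. The main obstacle is doing this cleanly, since the step $s_t/(T\eta\lambda)\le 4/(\sqrt{T}\eta\lambda)$ uses the iteration count. The first approach is to note $\lfloor T_{j,i}\rfloor\ge T$, since $T$ is an integer and $T\le T_{j,i}$. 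Then apply Theorem~\ref{main-thm} directly to the integer budget $\lfloor T_{j,i}\rfloor$, using it in the algorithm as the integer controlling both $t$ and the loop. If the step size is tied to $T_{j,i}$ itself, the constant degrades by at most a factor $\sqrt{T_{j,i}/\lfloor T_{j,i}\rfloor}\le\sqrt2$. That is acceptable if absorbed into the slack, and otherwise it is avoided by the integer reading.

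Finally, Corollary~\ref{SGD-cor1} (Markov's inequality) gives, with probability at least $1-\delta$,
\[
\min_{1\le k\le \lfloor T_{j,i}\rfloor}\Vert\nabla F(x_k)\Vert^2\le \frac{U(\eta,\sigma_0,\lambda,L)}{\delta\sqrt{\lfloor T_{j,i}\rfloor}}\le \frac{U(\eta,\sigma_0,\lambda,L)}{\delta\sqrt{T}},
\]
using monotonicity in the budget. Combined with $T_{j,i}^*\le\alpha_pT$, this gives the claim.
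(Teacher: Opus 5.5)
Your proposal is correct and follows the paper's proof. The paper also obtains $(j,i)$ with $T\le T_{j,i}\le T_{j,i}^*\le\alpha_pT$ from Proposition~\ref{embeding-prop} (via Theorem~\ref{main2b-thm}) and then applies Corollary~\ref{SGD-cor1} to the run with budget $T_{j,i}\ge T$, which still satisfies (\ref{condition-T-ineqn}). Your remarks on Definition~\ref{condition2} versus Definition~\ref{condition} and on non-integer $T_{j,i}$ are extra care that the paper's one-line proof omits.

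The $\sqrt{2}$ loss you anticipate does not actually occur. With $t$ computed from $T_{j,i}$ one has $\sqrt{T}\le\sqrt{T_{j,i}}\le 2^t<2\sqrt{T_{j,i}}<4\sqrt{\lfloor T_{j,i}\rfloor}$. So Lemma~\ref{up-adjustment-lemma}, including (\ref{first0-ineqn}), applies verbatim with $\lfloor T_{j,i}\rfloor\ge T$ iterations, and the ``slack'' remark is unnecessary.
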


\begin{proof}By Proposition~\ref{embeding-prop}, Parallel-Framework(Static1-SGD(.,.),  $ x_0, h(.,.), T_0,p)$ executes Static1-SGD$(x_0, T_{j,i})$ that generates $\langle x_1, x_2,\ldots, x_{\floor{T_{j,i}}}\rangle$ at $\langle j, i\rangle$ and has $T\le T_{j,i}\le T_{j,i}^*\le \alpha_p T$. 
It follows from Corollary~\ref{SGD-cor1}.
    
\end{proof}

We embed the gradient descent Static2-SGD(.) to Parallel framework in Theorem~\ref{embed-thm2}. It has a faster convergence.

\begin{theorem}\label{embed-thm2} Let $\delta\in (0,1)$. Let $\beta_0\in (0,1)$.
Suppose $F(.)$ is in $\mathbb{C}_L^1$ and $\inf_x F(x)>-\infty$. Function $G(\xi,x)$ satisfies the conditions in Definition~\ref{condition2}.  
Assume that $T$ satisfies  (\ref{condition-T2-ineqn}).
Then with probability at least $1-\delta$, Parallel-Framework(Static2-SGD(.,.),  $ x_0, h(.,.), T_0,p)$ generates $\langle x_1, x_2,\ldots, x_{\floor{T_{j,i}}}\rangle$ at $\langle j, i\rangle$

\begin{eqnarray*}
&&\min_{1\le j\le \floor{T_{j,i}}} (\Vert \bigtriangledown F(x_{j})\Vert^2 )
\le \frac{V(\eta,\lambda,L)}{\delta T^{1-\beta_0}}.\label{final2c-bound}
   \end{eqnarray*}   after running $T_{j,i}^*$ iterations at process $j$  with $T\le T_{j,i}\le T_{j,i}^*\le \alpha_p T$.
\end{theorem}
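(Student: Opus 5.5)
Theorem~\ref{embed-thm2} follows the same route as Theorem~\ref{embed-thm1}, with Corollary~\ref{SGD-cor} replacing Corollary~\ref{SGD-cor1}. The idea is to decouple the scheduling question from the convergence question. The parallel framework only has to guarantee that some processor runs the static method with a budget at least $T$ at total cost at most $\alpha_p T$. The static method then supplies the convergence guarantee for that budget.

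\textbf{Step 1 (locating the stage).} Fix $T\ge T_0$ satisfying (\ref{condition-T2-ineqn}). Apply Proposition~\ref{embeding-prop}, which relies on Theorem~\ref{main2b-thm} with $h(j,i)=b_p^{ip+j}T_0$ and $b_p=(p+1)^{1/p}$. This gives a processor $j$ and a stage $i$ with $T\le T_{j,i}\le T_{j,i}^*\le \alpha_p T$. During stage $i$, processor $j$ runs Static2-SGD$(x_0,T_{j,i})$ from the fresh start point $x_0$. Earlier stages therefore have no effect on the iterates $x_1,\ldots,x_{\lfloor T_{j,i}\rfloor}$, and their only effect is on the cost $T_{j,i}^*$.

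\textbf{Step 2 (the larger budget still qualifies).} Condition (\ref{condition-T2-ineqn}) is a lower bound on the budget, so it is monotone: $T_{j,i}\ge T$ implies that $T_{j,i}$ (or $\lfloor T_{j,i}\rfloor$) also satisfies (\ref{condition-T2-ineqn}). Inside Static2-SGD, $t$ is recomputed from $T_{j,i}$ via (\ref{condition-t2-ineqn}). Hence Corollary~\ref{SGD-cor} applies to this run with budget $T_{j,i}$. With probability at least $1-\delta$ it gives
\[
\min_{1\le k\le \lfloor T_{j,i}\rfloor}\Vert \nabla F(x_k)\Vert^2\le \frac{V(\eta,\lambda,L)}{\delta\, T_{j,i}^{1-\beta_0}}.
\]

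\textbf{Step 3 (returning to $T$).} Since $1-\beta_0>0$ and $T_{j,i}\ge T$, we have $T_{j,i}^{1-\beta_0}\ge T^{1-\beta_0}$. The bound therefore weakens to the stated $V(\eta,\lambda,L)/(\delta T^{1-\beta_0})$, and this is reached after $T_{j,i}^*\le\alpha_p T$ iterations on processor $j$.

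The main obstacle is integrality. $T_{j,i}$ is generally not an integer, while the algorithm runs $\lfloor T_{j,i}\rfloor$ iterations and the analysis of Theorem~\ref{main2-thm} takes $T$ to be an integer. I will first check that $\lfloor T_{j,i}\rfloor\ge T$ still holds, which is true because $T$ is an integer and $T\le T_{j,i}$. I will then check that $s_t$, computed from $T_{j,i}$, lies in $[\lfloor T_{j,i}\rfloor^{\beta_0},4\lfloor T_{j,i}\rfloor^{\beta_0}]$, possibly after adjusting constants. If the constant slips, the fallback is to compute $t$ directly from $\lfloor T_{j,i}\rfloor$, which leaves the arithmetic simplicity intact.
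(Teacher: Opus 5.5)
Your proposal is correct and follows the same route as the paper's proof: Proposition~\ref{embeding-prop} locates a stage $\langle j,i\rangle$ with $T\le T_{j,i}\le T_{j,i}^*\le\alpha_pT$, and Corollary~\ref{SGD-cor} is then applied to the run Static2-SGD$(x_0,T_{j,i})$. Your Steps 2 and 3 (condition~(\ref{condition-T2-ineqn}) is monotone in the budget, and $T_{j,i}^{1-\beta_0}\ge T^{1-\beta_0}$) and your integrality check spell out what the paper leaves implicit; the non-integer $T_{j,i}$ costs only a constant factor, and the paper does not address it.
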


\begin{proof} By Proposition~\ref{embeding-prop}, Parallel-Framework(Static2-SGD(.,.),  $ x_0, h(.,.), T_0,p)$ executes Static2-SGD$(x_0, T_{j,i})$ that generates $\langle x_1, x_2,\ldots, x_{\floor{T_{j,i}}}\rangle$ at $\langle j, i\rangle$ and has $T\le T_{j,i}\le T_{j,i}^*\le \alpha_p T$.
It follows from Corollary~\ref{SGD-cor}.
    
\end{proof}

The adaptivity is achieved by embedding static gradient descent into the parallel framework. When $T$ is sufficiently large, conditions (\ref{condition-T-ineqn}) and (\ref{condition-T2-ineqn}) are satisfied. The parameters $t$ and $s_t$ are chosen independently of $L$, $\eta$, $\lambda$, $\sigma_0$, and $\sigma_1$. The parallel framework employs a geometric sequence, determined by the number $p$ of processors, to search for an appropriate value of $T$. The step size $s_t$ is then adjusted according to the selected value of $T$.

\section{Avoiding Restarting from Scratch }\label{parallle2-sec}

In this section, we present a refined parallel framework that avoids restarting from scratch at each new stage of a processor. Instead of always using the same initial point $x_0$, the next stage starts from an improved point $x_0^*$ obtained from the $p$ processors, where
\[
F(x_0^*)\le F(x_0).
\]
In this way, the framework exploits the partial progress made during previous stages and reuses it to accelerate convergence in subsequent stages across all processors.

This refinement requires evaluating the objective function $F(\cdot)$ and introduces communication among processors to identify the best current iterate. In this section, we briefly describe this extension and discuss its potential advantages.


\subsection{A Refined Parallel Framework}

In this subsection, we describe a refined parallel framework for gradient descent. For an objective function of the form
\[
F(x)=\sum_{i=1}^{k}f_i(x)^2,
\]
evaluating the objective function $F(x)$ may take significantly longer than computing a stochastic gradient $\bigtriangledown f_i(x)$ with a random $i\in \{1,\ldots, k\}$. 


We also have a parameter $\delta$ to control how many steps for iterations and how many steps to find some $x_i$ with $F(x_i)<F(x_0^*)$.

\vskip 20pt
{\bf Algorithm} Parallel2-Framework$(GD(.,.),  x_0, h(.,.), T_0, p, \delta 
)$

Input: 

\begin{enumerate}[label=\arabic*.]
\item
GD$(x_0, T)$ is a gradient descent method with start point $x_0$, and $T$ iterations.

    \item $T_0$ is the least number of steps to execute.

    \item $h(j, i)\ge T_0: \mathbb{N}\times \mathbb{N}\rightarrow \mathbb{N}$ is a function to assign the number of iterations when calling a gradient descent method GD(.).

    \item $h(T):\mathbb{N}\rightarrow \mathbb{N}$ is a function to determine how many steps will be used to run the selection function. For example, $h(T)=\floor{T/10}$.


    \item $x_0\in \mathbb{R}^m$ is the start point.

    \item $p$ is the number of processors.

    \item $\delta\in(0,1)$.

\item $\{$

\item Let $x_0^*=x_0$ be shared by all processors.

\item Processor $j$ ($j=0,1,\ldots, p-1$):

\begin{enumerate}

   \item Let $i=1$

\item Repeat

   \item $\{$ 

     \item \qquad $T_{j,i}=h(j,i)$

          \item \qquad $t=\delta\cdot T_{j,i}$

\item \qquad $Z=$GD$(x_0^*, T_{j,i}-t)$

\item \qquad Select$(Z,  x_0^*, y_0^*, F(.), t 
                     )$


    \item \qquad Let $i=i+1$

\item $\}$


\end{enumerate}

\item 
$\}$

\end{enumerate}

{\bf End of Algorithm}

\vskip 20pt

\begin{proposition}\label{embeding2-prop}
 Assume that GD($x_0, T)$ is a gradient descent method.
 Assume that Parallel2-Framework$(GD(.))$ has $(p,\alpha_p)$-approximation.  Then 
 Parallel2-Framework$(GD(.))$ executes
 $GD(x_0, (1-\delta)T_{j,i})$ satisfying $T\le (1-\delta)T_{j,i}\le T_{j,i}\le T_{j,i}^*\le \frac{\alpha_p T}{1-\delta}$.  
\end{proposition}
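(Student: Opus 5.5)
The plan is to apply the $(p,\alpha_p)$-approximation guarantee to a rescaled target rather than to $T$ itself. In Parallel2-Framework, stage $i$ of processor $j$ spends only $(1-\delta)T_{j,i}$ iterations on GD; the remaining $t=\delta T_{j,i}$ iterations go to Select. So the target that must be dominated by some $T_{j,i}$ is $T'=\frac{T}{1-\delta}$, not $T$.

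Fix $T$ with $T'=T/(1-\delta)\ge T_0$; this holds whenever $T\ge T_0$, since $1-\delta<1$. Because the schedule $h(\cdot,\cdot)$ is unchanged, the cumulative quantities $T_{j,i}^*=\sum_{t\le i}T_{j,t}$ count the same budgets as in Definition~\ref{App-def}. The $(p,\alpha_p)$-approximation applied to $T'$ then yields a processor $j$ and a stage $i$ with
\[
T'\le T_{j,i}\le T_{j,i}^*<\alpha_p T'=\frac{\alpha_p T}{1-\delta}.
\]
Multiplying the left inequality by $1-\delta$ gives $T\le (1-\delta)T_{j,i}$. The inequality $(1-\delta)T_{j,i}\le T_{j,i}$ is trivial because $\delta\in(0,1)$.

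It remains to note what happens at that stage. Processor $j$ executes GD$(x_0^*, T_{j,i}-t)$ with $T_{j,i}-t=(1-\delta)T_{j,i}$. This is the claimed call GD$(x_0,(1-\delta)T_{j,i})$, with starting point $x_0^*$, where $F(x_0^*)\le F(x_0)$. Chaining the inequalities gives $T\le (1-\delta)T_{j,i}\le T_{j,i}\le T_{j,i}^*\le \frac{\alpha_p T}{1-\delta}$.

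The only delicate point is bookkeeping. The framework's stage index starts at $i=1$ rather than $0$, and the total work at a stage is $T_{j,i}$ including the Select steps. I would handle both by interpreting $T_{j,i}^*$ as the sum of the $h$-values actually used by processor $j$. The approximation hypothesis is assumed for Parallel2-Framework with exactly that schedule, so it applies verbatim.
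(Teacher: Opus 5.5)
Your argument is the paper's own proof: set $T'=T/(1-\delta)$, apply the $(p,\alpha_p)$-approximation at $T'$ to get $T'\le T_{j,i}\le T_{j,i}^*\le\alpha_pT'$, and multiply the left end by $1-\delta$. Your remarks on $T'\ge T_0$, the start point $x_0^*$, and stage indexing are bookkeeping the paper leaves implicit; one point you and the paper both pass over is that Definition~\ref{App-def} covers only integer targets, so for non-integer $T'$ you would apply it to $\lceil T'\rceil$ and obtain only the slightly weaker bound $T_{j,i}^*<\alpha_p\lceil T/(1-\delta)\rceil$.
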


\begin{proof} Let $T'=\frac{T}{1-\delta}$. 
By the condition of $(p, \alpha_p)$-approximation, 
there is $T_{j,i}$ with  $T'\le T_{j,i}\le T_{j,i}^*\le \alpha_p T'$. 
This implies $GD(x_0, (1-\delta)T_{j,i})$ satisfying $T\le (1-\delta)T_{j,i}\le T_{j,i}\le T_{j,i}^*\le \frac{\alpha_p T}{1-\delta}$.
\end{proof}


We require a selection function, denoted by Select(.), that chooses one $x_i$ from the sequence $x_1,x_2,\ldots,x_T$ generated by executing GD$(x_0,T)$. The parameter $t$ specifies the maximum number of iterates that Select(.) is allowed to access. 
The following principles may be used to design Select(.):
\begin{itemize}
\item It runs at most the time of   $t$ iterations in $GD(.)$.
\item If an $x_i$ satisfying $F(x_i)<F(x_0^*)$ is found after examining a subset of $Z$, then $x_0^*$ is updated to $x_i$.
\item There is a mutual exclusion to update $x_0^*$ among the $p$ processors.
\end{itemize}

There is no sufficient information to establish quantitative guarantees for the function
$\mathrm{Select}(\cdot)$. Therefore, we leave the implementation and analysis
of $\mathrm{Select}(\cdot)$ as a topic for future research.

\section{Conclusions and Future Developments}

In this paper, we develop a parallel framework that transforms static gradient descent methods into adaptive ones through parallel execution. Given a target number of iterations $T$ that may satisfy the desired convergence conditions, the $p$ processors in the framework search for a suitable parameter $T_{j,i}$ according to a carefully designed geometric sequence. The objective is to minimize the approximation factor $\alpha_p$ while ensuring that $
T\le T_{j,i}\le T_{j,i}^*\le \alpha_p T.
$

Several research directions remain open. First, it will be valuable to identify additional static gradient methods that can be incorporated into this framework. Second, after the number of iterations $T$ is determined, more effective strategies for selecting the corresponding learning rate should be investigated. Third, the arithmetically simple gradient descent methods proposed in this paper eliminate division and square-root operations by replacing them with binary shift operations. Drawing on the author's experience as an FPGA hardware engineer in the computer industry, we believe that this design is more suitable for hardware implementation and chip design. Developing even more efficient gradient descent algorithms for specialized hardware accelerators is therefore an interesting direction for future research.

Another interesting open problem is to close the gap between the current upper bound of $4$ (Corollary~\ref{main2b-cor}) and the lower bound of $2$ (Theorem~\ref{lower-1-2-theorem}) for the approximation factor $\alpha_1$ in a $(1,\alpha_1)$-approximation. Progress on narrowing this gap for the single-processor case may also provide new insights into closing the corresponding gap for $\alpha_p$ when $p>1$.


\section{Acknowledgments}
This author is grateful to Jose Nunez for proofreading an earlier version of this paper and for his helpful comments, which improved the presentation of the paper.

\bibliographystyle{abbrv}
\bibliography{bib}

\end{document}

\subsection{The Case $p=1$}

We derive an upper bound for $\alpha_p$ when $p=1$ in this section. 

\begin{theorem}\label{main2b-thm}
Let $p=1$.
Let function $h(0, i)=b_p^{i}T_0$ and  $b_p=2$.
For any integer $T\ge T_0$, the parallel-GD(.) has $(1,4)$-approximation.
\end{theorem}

\begin{proof} The processor $j$ will use the steps $T_0, b_p\cdot T_0, \ldots, b_p^{i}\cdot T_0,\cdots$. At phase $i$, processor $j$ uses $T_{i}=h(0,i)=b_p^{i}T_0$ to control the number of steps of iteration in GD(.).
We note that $T_{i}=b_p^{i}T_0$ is the number of steps in the $i$-th iteration. The first $i-1$ iterations are a small portion of $i$-th iteration at process. Therefore, if process  uses $T_i=b_p^{i}T_0$ as the least and satisfies $T\le T_i$ but $T_{i-1}<T$.

Define
\begin{eqnarray*}
T_{i}^*=\sum_{t=0}^{i}b_p^{t}T_0=T_0\sum_{t=0}^{i} b_p^{t}= T_0\cdot \frac{b_p^{i+1}-1}{(b_p-1)}= \frac{b_p^{i+1}-1}{b_p^{i-1}(b_p-1)}T_{i-1}\le \frac{b_p^{i+1}-1}{b_p^{i-1}(b_p-1)}T.  
\end{eqnarray*}

We have 

\begin{eqnarray*}
T\le T_i\le T_i^*\le 
 \frac{b_p^{i+1}-1}{b_p^{i-1}(b_p-1)}T
\end{eqnarray*}

Define a function $g(x)=\frac{b^{x+1}-1}{b^{x-1}(b-1)}$ and $f(x)= \frac{x^{i+1}-1}{x^{i-1}(x-1)}$.

\begin{eqnarray}
  g(x)'&=&\frac{(b^{x+1}\ln b)(b^{x-1}(b-1))-(b^{x+1}-1)((b-1)b^{x-1}\ln b)}{(b^{x-1}(b-1))^2}\\
&=&\frac{(b^{x+1}-(b^{x+1}-1))((b-1)b^{x-1}\ln b)}{(b^{x-1}(b-1))^2}\\  
&=&\frac{((b-1)b^{x-1}\ln b)}{(b^{x-1}(b-1))^2}>0.   
\end{eqnarray}

We have $f(x)\le \frac{x^{i+1}}{x^{i-1}(x-1)}=\frac{x^2}{x-1}$ as $f(x)'=\frac{x^2-2x}{x-1}$. We choose $b_p=2$ and have 

\begin{eqnarray*}
T\le T_i\le T_i^*\le 
 4T
\end{eqnarray*}

\end{proof}

Theorem~\ref{main2b-thm} gives an upper bound that $\alpha_p\le 4$. It is an interesting problem to close the gap of lower bound and upper bound for the case $p=1$. Theorem~\ref{main2e-thm} shows that if $h(0,i)$ follows a geometric growth, our lower bound matches upper bound.

\begin{lemma}\label{mini-lemma}
  Let $g(x)= \frac{x^2}{(x-1)}$ with $x>1$. Then $g(x)$ is minimal at $x=2$ and $g(2)=4$.  
\end{eqnarray*}  
\end{lemma}
\begin{proof}
  We have
\begin{eqnarray*}
g(x)'= \frac{x^2-2x}{(x-1)^2}.  
\end{eqnarray*}
Since $x>1$, we have that $g(x)=4$ is minimal at $x=2$. 
\end{proof}
\begin{theorem}\label{main2e-thm}
Let $p=1$.
Let function $h(0, i)=b_p^{i}T_0$ with $b_p\in (1, +\infty)$.
If the parallel model has $(1, \alpha_p)$-approximation, then $\alpha_p\ge 4-\epsilon$ for any $\epsilon\in (0,1)$.
\end{theorem}

\begin{proof} The single processor $0$ will use the steps $T_0, b_p\cdot T_0, \ldots, b_p^{i}\cdot T_0,\cdots$. At phase $i$, processor $j$ uses $T_{i}=h(0,i)=b_p^{i}T_0$ to control the number of steps of iteration in GD(.).
We note that $T_{i}=b_p^{i}T_0$ is the number of steps in the $i$-th iteration. Therefore, if process  uses $T_i=b_p^{i}T_0$ as the least and satisfies $T\le T_i$ but $T_{i-1}<T$.

Define
\begin{eqnarray*}
T_{i}^*=\sum_{t=0}^{i}b_p^{t}T_0=T_0\sum_{t=0}^{i} b_p^{t}= T_0\cdot \frac{b_p^{i+1}-1}{(b_p-1)}.  
\end{eqnarray*}

Define $T=T_{i-1}+1$ for some large $i$. We have $T_{i}$ is the least with $T\le T_{i}\le T_{i}^*\le \alpha_p T$.

\begin{eqnarray*}
T_{i}^*&=&\sum_{t=0}^{i}b_p^{t}T_0=T_0\sum_{t=0}^{i} b_p^{t}= T_0\cdot \frac{b_p^{i+1}-1}{(b_p-1)}\\
&\le& \alpha_p (T_{i-1}+1)\le \alpha_p(T_0b_p^{i-1}+1)\\
&\le& \alpha_p(T_0(b_p^{i-1}+1)).  
\end{eqnarray*}

Therefore, we have

\begin{eqnarray*}
\alpha_p&\ge& \frac{b_p^{i+1}-1}{(b_p^{i-1}+1)(b_p-1)}\\
&\ge&\frac{b_p^{i+1}+b_p^2-b_p^2-1}{(b_p^{i-1}+1)(b_p-1)}\\
&\ge&\frac{b_p^2}{(b_p-1)}-\frac{b_p^2+1}{(b_p^{i-1}+1)(b_p-1)}\\
&\ge &4-\frac{b_p^2+1}{(b_p^{i-1}+1)(b_p-1)} \ \ \ (by~Lemma~\ref{mini-lemma})\\
&\ge& 4-\epsilon\ \ (for~large~i).  
\end{eqnarray*}

\end{proof}

%% file: mac90.tex
\hyphenation{super-terse mea-sure semi-recursive non-recursive
             non-superterse}
\newcounter{savenumi}
\newenvironment{savenumerate}{\begin{enumerate}
\setcounter{enumi}{\value{savenumi}}}{\end{enumerate}
\setcounter{savenumi}{\value{enumi}}}
\newtheorem{theoremfoo}{Theorem}
\newenvironment{theorem}{\pagebreak[1]\begin{theoremfoo}}{\end{theoremfoo}}
\newenvironment{repeatedtheorem}[1]{\vskip 6pt
\noindent
{\bf Theorem #1}\ \em
}{}
\newtheorem{propositionfoo}[theoremfoo]{Proposition}
\newenvironment{proposition}{\pagebreak[1]\begin{propositionfoo}}{\end{propositionfoo}}
\newtheorem{lemmafoo}[theoremfoo]{Lemma}
\newenvironment{lemma}{\pagebreak[1]\begin{lemmafoo}}{\end{lemmafoo}}
\newtheorem{conjecturefoo}[theoremfoo]{Conjecture}
\newenvironment{conjecture}{\pagebreak[1]\begin{conjecturefoo}}{\end{conjecturefoo}}
\newtheorem{corollaryfoo}[theoremfoo]{Corollary}
\newenvironment{corollary}{\pagebreak[1]\begin{corollaryfoo}}{\end{corollaryfoo}}
\newtheorem{exercisefoo}{Exercise}
\newenvironment{exercise}{\pagebreak[1]\begin{exercisefoo}\rm}{\end{exercisefoo}}
\newtheorem{openfoo}[theoremfoo]{Question}
\newenvironment{open}{\pagebreak[1]\begin{openfoo}}{\end{openfoo}}
\newtheorem{nttn}[theoremfoo]{Notation}
\newenvironment{notation}{\pagebreak[1]\begin{nttn}\rm}{\end{nttn}}

\newtheorem{dfntn}[theoremfoo]{Definition}
\newenvironment{definition}{\pagebreak[1]\begin{dfntn}\rm}{\end{dfntn}}

\newenvironment{proof}
    {\pagebreak[1]{\narrower\noindent {\bf Proof:\quad\nopagebreak}}}{\QED}
\newenvironment{sketch}
    {\pagebreak[1]{\narrower\noindent {\bf Proof sketch:\quad\nopagebreak}}}{\QED}
\newenvironment{comment}{\penalty -50 $(*$\nolinebreak\ }{\nolinebreak $*)$\linebreak[1]\ }
\renewcommand{\theenumi}{\roman{enumi}}
\newcommand{\yyskip}{\penalty-50\vskip 5pt plus 3pt minus 2pt}
\newcommand{\blackslug}{\hbox{\hskip 1pt
        \vrule width 4pt height 8pt depth 1.5pt\hskip 1pt}}

\newcommand{\dash}{{\rm\mbox{-}}}
\newcommand{\floor}[1]{\left\lfloor#1\right\rfloor}
\newcommand{\ceiling}[1]{\left\lceil#1\right\rceil}
\newcommand{\st}{\mathrel{:}}
\newcommand{\ang}[1]{\langle#1\rangle}
\newcommand{\DTIME}{{\rm DTIME}}
\newcommand{\DSPACE}{{\rm DSPACE}}
\newcommand{\NSPACE}{{\rm NSPACE}}
\newcommand{\polylog}{\mathop{\rm polylog}}
\newcommand{\UP}{{\rm UP}}
\newcommand{\PH}{{\rm PH}}
\newcommand{\R}{{\rm R}}
\newcommand{\NTIME}{{\rm NTIME}}
\newcommand{\PTIME}{{\rm PrTIME}}
\newcommand{\PSPACE}{{\rm PSPACE}}
\newcommand{\NE}{{\rm NE}}
\newcommand{\poly}{{\rm poly}}
\newcommand{\LOGSPACE}{{\rm LOGSPACE}}
\newcommand{\E}{{\rm E}}
\newcommand{\SAT}{{\rm SAT}}
\newcommand{\NP}{{\rm NP}}
\let\paragraphsym\P
\renewcommand{\P}{{\rm P}}      
\newcommand{\coNP}{{\co\NP}}
\newcommand{\propersubset}{\subset}
\newcommand{\xor}{\oplus}
\newcommand{\m}{{\rm m}}
\newcommand{\T}{{\rm T}}
\let\savett\tt
\renewcommand{\tt}{{\rm tt}}    
\newcommand{\btt}{{\rm btt}}
\newcommand{\ntt}{{n\rm\mbox{-}tt}}
\newcommand{\ktt}{{k\rm\mbox{-}tt}}
\newcommand{\kT}{{k\rm\mbox{-}T}}
\newcommand{\onett}{{1\rm\mbox{-}tt}}
\newcommand{\Abar}{{\bar{A}}}
\newcommand{\Bbar}{{\bar{B}}}
\newcommand{\Cbar}{{\bar{C}}}
\def\nre.{$n$\/-r.e.}
\newcommand{\ie}{{\it i.e.}}
\newcommand{\eg}{{\it e.g.}}
\newenvironment{multi}{\left\{\begin{array}{ll}}{\end{array}\right.}
\newcommand{\union}{\cup}
\newcommand{\Union}{\bigcup}
\newcommand{\intersection}{\cap}
\newcommand{\Intersection}{\bigcap}
\newcommand{\OR}{\vee}
\newcommand{\AND}{\wedge}
\newcommand{\reason}[1]{\mbox{\qquad\rm #1}}     
\newcommand{\nlreason}[1]{\\ & & \mbox{\rm #1}}  
\newcommand{\infinity}{\infty}
\newcommand{\Implies}{\mathrel{\Longrightarrow}}
\newcommand{\Iff}{\mathrel{\Longleftrightarrow}}
\newcommand{\scrod}{\quad\nopagebreak}
\newcommand{\ACCEPT}{{\rm ACCEPT}}
\newcommand{\PARITY}{{\rm PARITY}}
\newcommand{\PARITYP}{\PARITY\P}
\newcommand{\MOD}{{\rm MOD}}
\newcommand{\PP}{{\rm PP}}
\newcommand{\FewP}{{\rm FewP}}
\newcommand{\EP}{{\rm EP}}
\tolerance=2000
\def\pmod#1{\allowbreak\mkern6mu({\rm mod}\,\,#1)}
\newcommand{\co}{{\rm co}\dash}
\newcommand{\PF}{{\rm PF}}
\newcommand{\EXP}{{\rm EXP}}
\newcommand{\NEXP}{{\rm NEXP}}
\newcommand{\Hastad}{H{\aa}stad}
\newcommand{\Kobler}{K\"obler}
\newcommand{\Schoning}{Sch\"oning}
\newcommand{\Toran}{Tor\'an}
\newcommand{\Balcazar}{Balc{\'a}zar}
\newcommand{\Diaz}{D\'{\i}az}
\newcommand{\Gabarro}{Gabarr{\'o}}
\newcommand{\RIA}{CCR-8808949}
\newcommand{\PYI}{CCR-8958528}
\newcommand{\NSFFT}{CCR-9415410}
\newcommand{\NSFALEXIS}{CCR-9522084}
\newtheorem{factfoo}[theoremfoo]{Fact}
\newenvironment{fact}{\pagebreak[1]\begin{factfoo}}{\end{factfoo}}
\newenvironment{acknowledgments}{\par\vskip 20pt\noindent{\footnotesize\em Acknowledgments.}\footnotesize}{\par}
\newcommand{\POLYLOGSPACE}{{\rm POLYLOGSPACE}}
\newcommand{\SPACE}{{\rm SPACE}}
\newcommand{\BPP}{{\rm BPP}}
\newcommand{\lookatme}{\marginpar[$\Rightarrow$]{$\Leftarrow$}}
\newcommand{\muchless}{\ll}
\newcommand{\plusminus}{\pm}
\newcommand{\nopagenumbers}{\renewcommand{\thepage}{\null}}
\newenvironment{algorithm}{\renewcommand{\theenumii}{\arabic{enumii}}\renewcommand{\labelenumii}{Step \theenumii :}\begin{enumerate}}{\end{enumerate}}
\newcommand{\squeeze}{
\textwidth 6in
\textheight 8.8in
\oddsidemargin 0.2in
\topmargin -0.4in
}
\newcommand{\existsunique}{\exists \! ! \,}
\newcommand{\YaleAddress}{Yale University,
Dept.\ of Computer Science,
P.O.\ Box 208285, Yale Station,
New Haven, CT\ \ 06520-8285.
Email: {\savett beigel-richard@cs.yale.edu}}
\newcommand{\pageline}[2]{[{\bf p.~#1, l.~#2}]}
\newcommand{\Beigel}{Richard Beigel\thanks{\newsupport} \\
Yale University \\
Dept.\ of Computer Science \\
P.O. Box 208285, Yale Station \\
New Haven, CT\ \ 06520-8285}

\newcommand{\newsupport}{Supported in part by grants \RIA\ and \PYI\
from the National Science Foundation}
\newtheorem{propertyfoo}[theoremfoo]{Property}
\newenvironment{property}{\pagebreak[1]\begin{propertyfoo}}{\end{propertyfoo}}

\makeatletter

\def\@makechapterhead#1{ \vspace*{50pt} { \parindent 0pt \raggedright 
 \ifnum \c@secnumdepth >\m@ne \huge\bf \@chapapp{} \thechapter. \par 
 \vskip 20pt \fi \Huge \bf #1\par 
 \nobreak \vskip 40pt } }

\def\@sect#1#2#3#4#5#6[#7]#8{\ifnum #2>\c@secnumdepth
     \def\@svsec{}\else 
     \refstepcounter{#1}\edef\@svsec{\csname the#1\endcsname.\hskip 1em }\fi
     \@tempskipa #5\relax
      \ifdim \@tempskipa>\z@ 
        \begingroup #6\relax
          \@hangfrom{\hskip #3\relax\@svsec}{\interlinepenalty \@M #8\par}
        \endgroup
       \csname #1mark\endcsname{#7}\addcontentsline
         {toc}{#1}{\ifnum #2>\c@secnumdepth \else
                      \protect\numberline{\csname the#1\endcsname}\fi
                    #7}\else
        \def\@svsechd{#6\hskip #3\@svsec #8\csname #1mark\endcsname
                      {#7}\addcontentsline
                           {toc}{#1}{\ifnum #2>\c@secnumdepth \else
                             \protect\numberline{\csname the#1\endcsname}\fi
                       #7}}\fi
     \@xsect{#5}}

\def\@begintheorem#1#2{\it \trivlist \item[\hskip \labelsep{\bf #1\ #2.}]}

\def\@opargbegintheorem#1#2#3{\it \trivlist
      \item[\hskip \labelsep{\bf #1\ #2\ (#3).}]}

\makeatother


\newcommand{\kstar}{{\textstyle *}}
\newcommand{\F}[2]{F_{#1}^{#2}}
\newcommand{\FQ}[2]{\PF_{{#1\rm\mbox{-}T}}^{#2}}
\newcommand{\Q}[2]{\P_{{#1\rm\mbox{-}T}}^{#2}}
\newcommand{\FQp}[2]{\PF_{{#1\rm\mbox{-}tt}}^{#2}}
\newcommand{\Qp}[2]{\P_{{#1\rm\mbox{-}tt}}^{#2}}
\newcommand{\ppoly}{\P/\poly}
\newcommand{\pnplog}{\P^{\NP[\log]}}
\newcommand{\pleq}[1]{\leq_{#1}^{p}}

\def\abs#1{\left|#1\right|}
\def\exp#1{{\rm exp}\left(#1\right)}
\newcommand{\transfig}[1]{\begin{center}\vspace{-6pt}\input{#1.tex}\vspace{-12pt}\end{center}}


\newif\ifshortconferences
\shortconferencesfalse
\newif\ifmediumconferences
\mediumconferencesfalse

\def\ending#1{{\count1=#1\relax
\count2=\count1
\divide\count2 by 100
\multiply\count2 by 100
\advance\count1 by -\count2
\ifnum\count1=11
th%
\else \ifnum\count1=12
th%
\else \ifnum\count1=13
th%
\else 
\count2=\count1
\divide\count1 by 10
\multiply\count1 by 10
\advance\count2 by -\count1
\ifnum\count2=1
st%
\else \ifnum\count2=2
nd%
\else \ifnum\count2=3
rd%
\else th%
\fi\fi\fi\fi\fi\fi
}}

\def\STOC{\conf{STOC}}
\def\STOCname{\ifshortconferences ACM STOC\else\ifmediumconferences Ann. ACM Symp. Theor. Comput.\else Annual ACM Symposium on Theory of Computing\fi\fi}
\def\STOCzero{68}

\def\FOCS{\conf{FOCS}}
\def\FOCSname{\ifshortconferences IEEE FOCS\else\ifmediumconferences Ann. Symp. Found. Comput. Sci.\else IEEE Symposium on Foundations of Computer Science\fi\fi}
\def\FOCSzero{59}

\def\FSTTCS{\conf{FSTTCS}}
\def\FSTTCSname{\ifshortconferences FST\&TCS\else\ifmediumconferences Ann. Conf. Found. Softw. Theor. and Theor. Comp. Sci.\else Conference on Foundations of Software Theory and Theoretical Computer Science\fi\fi}
\def\FSTTCSzero{80}

\def\Complexity{\conf{Complexity}}
\def\Complexityname{\ifshortconferences Conf. Computational Complexity\else\ifmediumconferences Ann. Conf. Computational Complexity\else Annual Conference on Computational Complexity\fi\fi}
\def\Complexityzero{85}
\def\ComplexityOne{Structure in Complexity Theory}

\def\Structures{\conf{Structures}}
\def\Structuresname{\ifshortconferences Ann. Conf. Structure in Complexity Theory\else\ifmediumconferences Ann. Conf. Structure in Complexity Theory\else Annual Conference on Structure in Complexity Theory\fi\fi}
\def\Structureszero{85}
\def\StructuresOne{Structure in Complexity Theory}

\def\SODA{\conf{SODA}}
\def\SODAname{\ifshortconferences ACM-SIAM Symp. on Discrete Algorithms\else Annual ACM-SIAM Symposium on Discrete Algorithms\fi}
\def\SODAzero{89}

\def\STACS{\conf{STACS}}
\def\STACSname{\ifshortconferences STACS\else\ifmediumconferences\else Annual Symposium on Theoretical Aspects of Computer Science\fi\fi}
\def\STACSzero{83}

\def\SPAA{\conf{SPAA}}
\def\SPAAname{\ifshortconferences ACM SPAA\else\ifmediumconferences Ann. ACM Symp. Par. Alg. Arch.\else Annual ACM Symposium on Parallel Algorithms and Architectures\fi\fi}
\def\SPAAzero{88}

\def\Proceedings{\ifshortconferences Proc.\else\ifmediumconferences Proc.\else Proceedings\fi\fi}
\def\Proceedingsofthe{\ifshortconferences Proc.\else\ifmediumconferences Proc.\else Proceedings of the\fi\fi}

\newcounter{confnum}

\def\conf#1#2{%
\setcounter{confnum}{#2}%
\addtocounter{confnum}{-\csname #1zero\endcsname}%
\ifnum\value{confnum}=1%
\expandafter\ifx\csname #1One\endcsname\relax%
\Proceedingsofthe\ \arabic{confnum}\ending{\value{confnum}}\ \csname #1name\endcsname%
\else \csname #1One\endcsname\fi%
\else%
\Proceedingsofthe\
\arabic{confnum}\ending{\value{confnum}}\ \csname #1name\endcsname\fi}

\def\qsym{\vrule width0.7ex height0.9em depth0ex}

\newif\ifqed\qedtrue

\def\noqed{\global\qedfalse}

\def\qed{\ifqed{\penalty1000\unskip\nobreak\hfil\penalty50
\hskip2em\hbox{}\nobreak\hfil\qsym
\parfillskip=0pt \finalhyphendemerits=0\par\medskip}\fi\global\qedtrue}

\def\QEDcomment#1{\ifqed{\penalty1000\unskip\nobreak\hfil\penalty50
\hskip2em\hbox{}\nobreak\hfil\qsym\ #1
\parfillskip=0pt \finalhyphendemerits=0\par\medskip}\fi\global\qedtrue}

\makeatletter
\def\eqnqed{\noqed
	\def\@tempa{equation}
	\ifx\@tempa\@currenvir\def\@eqnnum{\qsym}%
	\addtocounter{equation}{-1}\else%
    \def\@@eqncr{\let\@tempa\relax
    \ifcase\@eqcnt \def\@tempa{& & &}\or \def\@tempa{& &}%
      \else \def\@tempa{&}\fi
     \@tempa {\def\@eqnnum{{\qsym}}\@eqnnum}
     \global\@eqnswtrue\global\@eqcnt\z@\cr}\fi}


\def\eqnlabel#1#2{\if@filesw {\let\thepage\relax%
   \def\protect{\noexpand\noexpand\noexpand}%
   \edef\@tempa{\write\@auxout{\string
      \newlabel{#2}{{{#1}}{\thepage}}}}%
   \expandafter}\@tempa%
   \if@nobreak \ifvmode\nobreak\fi\fi\fi%
	\def\@tempa{equation}
	\ifx\@tempa\@currenvir\def\theequation{{#1}}%
	\addtocounter{equation}{-1}\else%
    \def\@@eqncr{\let\@tempa\relax
    \ifcase\@eqcnt \def\@tempa{& & &}\or \def\@tempa{& &}%
      \else \def\@tempa{&}\fi
     \@tempa {\def\@eqnnum{{#1}}\@eqnnum}
     \global\@eqnswtrue\global\@eqcnt\z@\cr}\fi}

\makeatother


\newcommand{\listqed}{\qed\noqed} 
\def\littleqed{\ifqed{\penalty1000\unskip\nobreak\hfil\penalty50
\hskip2em\hbox{}\nobreak\hfil\littleqsym
\parfillskip=0pt \finalhyphendemerits=0\par\medskip}\fi\global\qedtrue}
\def\littleqsym{\vrule width0.6ex height0.6em depth0ex}
\def\QED{\qed}

\makeatother


\def\aftereqnskip{\vskip-3pt} 


\newcommand{\SPASAM}[1]{{\rm MOL\dash A}(#1)}
\newcommand{\SPASM}[1]{{\rm MOL'}(#1)}
\newcommand{\SPAM}[1]{{\rm MOL}(#1)}
\newcommand{\NPbits}[1]{{\rm NPbits}(#1)}
\newcommand{\NPinit}[1]{{\rm NPinit}(#1)}
\newcommand{\NPpaths}[1]{{\rm NPpaths}(#1)}
\newcommand{\LOR}{\bigvee}

\newcommand{\set}[1]{\left\{{#1}\right\}}

%% file: FifthContent.tex
\section{Introduction}

Stochastic Gradient Descent (SGD)~\cite{RobbinsAndMonro51} is one of the most widely used optimization methods in deep learning because of its efficiency and scalability in training large-scale neural networks. Unlike batch gradient descent, which computes the gradient using the entire training dataset at every iteration, SGD updates the model parameters using a single training example or a small mini-batch. Consequently, SGD requires significantly less memory and has a much lower computational cost per iteration. By processing only a small subset of the data at each step, SGD often converges more quickly in practice, particularly for large-scale datasets.



Gradient descent with diminishing step sizes has a long history. Classical stochastic approximation theory shows that the step sizes ${\eta_i}$ should satisfy
$
\sum_{i=1}^{\infty}\eta_i=+\infty
\quad\text{and}\quad
\sum_{i=1}^{\infty}\eta_i^2<+\infty
$
to guarantee convergence to a stationary point~\cite{RobbinsAndMonro51}. 
For stochastic optimization of smooth nonconvex functions, gradient descent with either a constant step size or a diminishing step size $\eta_i=O(1/\sqrt{i})$ achieves an $O(1/\sqrt{T})$ convergence rate to a stationary point~\cite{GhadimiAndLan13}. In particular, the analysis in~\cite{GhadimiAndLan13} selects the step size as
$\eta_i=\min \left(\frac{1}{L},
\sqrt{\frac{2(F(x_1)-F(x^*))}{L\sigma_0^2N}}
\right),
$
which depends on the Lipschitz smoothness constant $L$, the stochastic gradient variance parameter $\sigma_0$, and the optimality gap $F(x_1)-F(x^*)$, where $F(x^*)=\inf_x(F(x))$. Since these problem-dependent parameters are typically unknown in advance, the resulting static gradient method is non-adaptive. Moreover, the convergence rate of $O(1/\sqrt{T})$ is known to be optimal, matching the corresponding lower bound~\cite{BottouCurtisNocedal18,ArjevaniCDFSW23}.


Adaptive gradient descent methods have become widely used in deep learning in recent years. Unlike static gradient methods, adaptive methods dynamically adjust the learning rate during training according to the historical gradients of individual parameters. This adaptive mechanism reduces the need for manual tuning of learning rates and often improves optimization efficiency and robustness across a wide range of machine learning tasks. A large body of research has established convergence guarantees and convergence rates for adaptive gradient methods under various assumptions and optimization settings~\cite{DuchiHazanSinger2011,DBLP:journals/corr/abs-1002-4908,NemirovskiJuditskyLanShapiro2009,BottouCurtisNocedal2018,WardWuBottou19,XieWuWard2020,DBLP:journals/corr/abs-1212-5701}.


Since the introduction of AdaGrad~\cite{DuchiHazanSinger2011}, numerous adaptive gradient methods have been proposed, including AdaDelta~\cite{DBLP:journals/corr/abs-1212-5701}, Adam~\cite{KingmaBa2015}, AdamW~\cite{Loshchilov2019Decoupled}, AdaFTRL~\cite{OrabonaPal2015}, SGD-BB~\cite{TanMaDaiQian2016}, AdaBatch~\cite{DBLP:journals/corr/abs-1711-01761}, SC-AdaGrad~\cite{DBLP:conf/icml/MukkamalaH17}, AMSGrad~\cite{DBLP:conf/iclr/ReddiKK18}, and Padam~~\cite{DBLP:conf/ijcai/ChenZTYCG20}. These developments reflect the continuing effort to improve adaptive gradient methods by enhancing their efficiency, robustness, theoretical guarantees, and ease of use for large-scale machine learning applications.


Adaptive stochastic gradient descent methods dynamically adjust the step size according to predefined update rules. For example, AdaGrad-Norm~\cite{WardWuBottou19} updates the accumulated scaling factor and the model parameters as
$
s_{i+1}=s_i+|G(\xi,x_i)|^2,$
and
$x_{i+1}=x_i-\frac{\eta}{\sqrt{s_{i+1}}}\cdot G(\xi,x_i),
$
where $G(\xi,x_i)$ denotes the stochastic gradient evaluated at $x_i$. The convergence properties of adaptive stochastic gradient methods have been extensively studied in~\cite{WardWuBottou19,XieWuWard2020,FawRoutCaramanisShakkottai23,WangZhangMaChen023}. Under suitable assumptions, these methods are proven to converge to a stationary point with the optimal convergence rate of $O(1/\sqrt{N})$.


Parallel gradient descent has become an important optimization framework for large-scale machine learning and scientific computing because it enables gradient computations to be distributed across multiple processors, thereby significantly reducing training time and improving scalability. Early theoretical foundations for parallel and asynchronous iterative optimization were established by Dimitri P. Bertsekas and John N. Tsitsiklis~\cite{BertsekasTsitsiklis89}, who analyzed convergence properties under delayed and distributed updates.
Large-scale machine learning later motivated parallel SGD algorithms such as Hogwild \cite{recht2011hogwild}, parameter-server architectures \cite{li2014scaling}, and distributed deep learning systems \cite{dean2012large}. Recent adaptive parallel methods further combine distributed computation with adaptive learning-rate mechanisms for improved convergence behavior \cite{reddi2018convergence}.  More recent research has focused on adaptive and communication-efficient distributed optimization, including adaptive SGD methods \cite{cutkosky2018distributed} and multi-timescale distributed adaptive optimization frameworks \cite{iacob2025mtdao}.

\subsection{Our Contributions}

Our goal is to endow static gradient methods with adaptivity through a parallel framework. A static gradient method, denoted by $\mathrm{GD}(x_0,T)$, takes as input an initial point $x_0$ and an integer $T$ specifying the number of iterations. The step size is determined by
$s=S(T),$
where $S(\cdot)$ is a prescribed function of $T$. The method then performs the iterations
$
x_{i+1}=x_i-\frac{\eta}{s}\cdot g_i,
$
where $g_i$ is a stochastic gradient evaluated at $x_i$, and $\eta$ is a scaling factor. 
A fundamental challenge in applying a static gradient method is selecting an appropriate value of $T$. The parameter $T$ must be sufficiently large to guarantee the desired convergence, yet the required number of iterations typically depends on unknown problem characteristics, such as the Lipschitz smoothness constant and the stochastic gradient parameters. Our parallel framework addresses this challenge by searching for a suitable value of $T$ through parallel execution.


We develop a parallel framework for gradient descent that searches for an appropriate iteration budget $T$ in parallel according to a carefully designed geometric sequence. The framework assembles multiple static gradient methods into an adaptive gradient descent method. It consists of $p$ processors running in parallel. The number of iterations assigned to processor $j$ at stage $i$ is determined by $
T_{j,i}=h(j,i).
$
Processor $j$ $(j=0,1,\ldots,p-1)$ executes $\mathrm{GD}(x_0,T_{j,i})$ at stage $i$ ($i=0,1,2,\ldots$). We choose
$
h(j,i)=b_p^{jp+i}T_0,
$
where
$
b_p=(p+1)^{1/p},
$
and $T_0$ is the minimum number of iterations assigned to any stage.

We prove that, for every $T\ge T_0$, there exist a processor $j$ and a stage $i$ such that
\[T\le T_{j,i}\le T_{j,i}^*<\alpha_pT,\]
where
$T_{j,i}^*=\sum_{t=0}^{i}T_{j,t}$ 
is the total number of iterations executed by processor $j$ through stage $i$, and
\[
\alpha_p=\left(1+\frac{1}{p}\right)(p+1)^{1/p}\le\upperbound.
\]
The approximation factor $\alpha_p$ measures the computational overhead incurred before a processor reaches an iteration budget that is sufficient to satisfy the desired convergence guarantee. A smaller value of $\alpha_p$ indicates that fewer iterations are wasted in the preceding stages.

We further establish a nearly matching lower bound by proving that, for any scheduling function $h(j,i)$ and any constant $d>1$, every $(p,\alpha_p)$-approximation must satisfy
\[
\alpha_p\ge\lowerbound
\]
for all sufficiently large $p$.

The convergence behavior of the resulting parallel algorithm is therefore essentially the same as that of the underlying static gradient method $\mathrm{GD}(\cdot)$, whose convergence is determined by the iteration budget $T$. Consequently, our framework provides the adaptivity of parallel search while preserving the relatively simple convergence analysis of static gradient methods. Theoretical analysis establishes nearly matching upper and lower bounds on $\alpha_p$, revealing an intrinsic tradeoff between parallelism, adaptivity, and computational overhead.

We develop a static gradient descent method under the following $(\lambda,\sigma_0,\sigma_1)$-stochastic model. Let $\xi$ be a random variable, and let $G(\xi,x)$ denote a stochastic gradient of $F(x)$. We assume that

\[
\left\langle \mathbb{E}_{\xi}[G(\xi,x)],,\nabla F(x)\right\rangle
\ge
\lambda|\nabla F(x)|^2
\]
for some $\lambda\in(0,\infty)$; and

\[
\mathbb{E}_{\xi}\left[|\nabla F(x)-G(\xi,x)|^2\right]
\le
\sigma_0^2+\sigma_1^2|\nabla F(x)|^2
\]
for some $\sigma_0,\sigma_1\in[0,\infty)$.

This model generalizes the standard stochastic gradient model, which assumes
$
\mathbb{E}_{\xi}[G(\xi,x)]=\nabla F(x),
$
and
$
\mathbb{E}_{\xi}\left[|\nabla F(x)-G(\xi,x)|^2\right]
\le
\sigma_0^2
$
for some $\sigma_0\in[0,\infty)$.

Rigorous convergence analysis of stochastic and adaptive gradient methods is essential for understanding their theoretical behavior, improving their performance, and ensuring their reliability across a broad range of machine learning tasks. Such analyses also reveal how convergence depends on problem characteristics and algorithmic hyperparameters, thereby guiding the design of more robust optimization algorithms.

To the best of our knowledge, convergence guarantees under the above $(\lambda,\sigma_0,\sigma_1)$-stochastic model, in which all three parameters $\lambda$, $\sigma_0$, and $\sigma_1$ are allowed to be positive, have not been established in the existing literature. We establish the following convergence results under this model.


We develop a new gradient descent method under the proposed $(\lambda,\sigma_0,\sigma_1)$- stochastic model. Given an iteration budget of $T$ steps, the method performs the updates
\[
x_{j+1}=x_j-\frac{\eta}{s_T}G(\xi,x_j),
\]
where
$s_T=2^{\left\lceil \frac{\lceil\log T\rceil}{2}\right\rceil},$
and $\eta>0$ is an arbitrary input parameter.

Assuming that the objective function satisfies the standard $L$-Lipschitz smoothness condition,
\[
|\nabla F(x)-\nabla F(y)|\le L|x-y|,
\]
we prove that the proposed method converges to a stationary point for nonconvex optimization under the $(\lambda,\sigma_0,\sigma_1)$-stochastic model. Moreover, it achieves the optimal convergence rate of $O(1/\sqrt{T})$.

In the gradient descent methods proposed in this paper, every denominator is of the form $2^t$ for some integer $t$. As a result, division and square-root operations, which are commonly used in gradient descent algorithms, are eliminated and replaced by binary shift operations. This simplification makes the proposed methods more suitable for efficient hardware implementation and chip design.

The convergence analysis of the proposed stochastic model provides a theoretical explanation for why a parallel search over the iteration budget $T$ is necessary. Our algorithm is parameter-adaptive: it automatically adapts to the unknown Lipschitz smoothness constant $L$ and the stochastic gradient parameters $\lambda$, $\sigma_0$, and $\sigma_1$. Moreover, the parallel architecture is constructed independently of these unknown parameters, making the framework broadly applicable without prior knowledge of the optimization problem.


\subsection{Organization of This Paper}

The remainder of this paper is organized as follows. In Section~\ref{overview-sec}, we present an overview of the proposed parallel framework for static gradient descent. 
Section~\ref{parallle-sec} formally introduces the parallel model and the notion of a $(p,\alpha_p)$-approximation. In Section~\ref{Upper-sec}, we derive upper bounds on $\alpha_p$, while Section~\ref{lower-sec} establishes corresponding lower bounds. 
Section~\ref{alg-sec} 
presents the convergence analysis of the proposed static stochastic gradient method and demonstrates how it fits into the parallel framework. In Section~\ref{parallle2-sec}, we introduce a refined parallel model that avoids repeatedly restarting from the same initial point $x_0$. 
Instead, it progressively replaces $x_0$ 
with an improved starting point 
$x_0^*$ satisfying 
$F(x_0^*)\le F(x_0)$. 
Finally, we conclude that adaptivity can be achieved through parallelization while preserving the simplicity of convergence analysis for static gradient methods.


\section{Overview of Our Method}\label{overview-sec}

In the parallel framework developed in this paper, we assume that $p$ processors execute concurrently. The processors cooperatively search for a suitable iteration budget $T$ for the given static gradient method $\mathrm{GD}(x_0,T)$ by exploring a geometric sequence of candidate values. Each processor $j$ $(j=0,1,\ldots,p-1)$ proceeds through an infinite sequence of stages. At stage $i$, processor $j$ is assigned an iteration budget $T_{j,i}$ and executes $\mathrm{GD}(x_0,T_{j,i})$.

The candidate iteration budgets are selected from the geometric sequence
\[
T_0,\ (1+\epsilon)T_0,\ (1+\epsilon)^2T_0,\ \ldots,\ (1+\epsilon)^kT_0,\ \ldots.
\]
Specifically, we define
\[
T_{j,i}=h(j,i)=(1+\epsilon)^{ip+j}T_0,
\]
for $j=0,1,\ldots,p-1$. Consequently, for any desired iteration budget $T$ that is sufficiently large to satisfy the convergence guarantee, there always exists a value $T_{j,i}$ in the sequence such that $T_{j,i}$ is only slightly larger than $T$.

The scheduling function $h(j,i)$ and the parameter $\epsilon$ are determined by the number of processors $p$. As $p$ increases, the value of $\epsilon$ decreases, yielding a denser geometric sequence and thereby reducing the gap between the selected iteration budget and the desired value $T$.



\begin{figure}[htbp]
    \centering
\includegraphics[page=1,width=360pt]{Fig2.pdf}\caption{Four Parallel Processors with Infinitely Many Stages.}
    \label{Fig}
\end{figure}

We show that, for every target iteration budget $T$, there exist a processor $j$ and a stage $i$ such that
\[
T\le T_{j,i}\le \sum_{t=0}^{i}T_{j,t}<\alpha_pT.
\]
We derive both upper and lower bounds for $\alpha_p$, and show that these bounds are nearly tight in the proposed parallel model. The scheduling function $h(j,i)$ and the parameter $\epsilon$ are designed according to the number of processors $p$. As $p$ increases, $\alpha_p$ approaches $1$, implying that only a small amount of computation is wasted before reaching an iteration budget that satisfies the desired convergence guarantee.

Figure~\ref{Fig} illustrates the parallel framework with four processors. Each rectangle represents one stage of a processor, and the integer inside the rectangle denotes the number of iterations assigned to that stage. For example, when $T=26$, processor $P_2$ reaches Stage~1 with
\[
T\le T_{2,1}=28,
\]
and the cumulative number of iterations executed by that processor is
\[
T_{2,1}^*=T_{2,0}+T_{2,1}=36.
\]

We design the parameter $\epsilon$ to balance two competing objectives: efficiently locating a suitable iteration budget $T_{j,i}$ and keeping the approximation factor $\alpha_p$ close to $1$. Once a sufficiently large iteration budget $T_{j,i}$ is identified, the corresponding step size, determined by the function $S(T_{j,i})$, satisfies the conditions required for the convergence guarantee of the underlying static gradient method $\mathrm{GD}(\cdot)$.





\section{A Parallel Framework for Gradient Methods}\label{parallle-sec}

In this section, we introduce a parallel architecture that enables a gradient method to adapt automatically to unknown problem parameters. By running multiple instances of a static gradient method in parallel, each with a different fixed step size, our framework transforms a static gradient method into an adaptive one.



Let $\mathbb{R}=(-\infty,+\infty)$ denote the set of real numbers, and let $\mathbb{R}^{+}=(0,+\infty)$ denote the set of positive real numbers. Let $\mathbb{N}=\{0,1,2,\ldots\}$ denote the set of nonnegative integers. For a real number $x$, let $\lceil x\rceil$ denote the smallest integer greater than or equal to $x$, and $\floor{x}$ denote the largest integer less than or equal to $x$.


\subsection{Parallel Frameowrk}

\begin{definition}
A function $h:\mathbb{N}\times\mathbb{N}\rightarrow\mathbb{R}$ is called \emph{geometric} if there exist constants $h_0>0$, $a_0>1$, and an integer $p\ge1$ such that
\[
h(j,i)=h_0a_0^{jp+i}
\]
for all $j,i\in\mathbb{N}$.
Equivalently, the values of $h(j,i)$ are given by the geometric sequence
\[
h_0,\ h_0a_0,\ h_0a_0^2,\ \ldots.
\]
\end{definition}



We have the following parallel framework that calls a static gradient descent method $GD(x_0, T)$. The parallel executions of GD$(x_0,T_{j,i})$ finds a $T_{j,i}$ that will satisfy the condition of convergence. 

\vskip 20pt
{\bf Algorithm} Parallel-Framework$(A(.,.),  x_0, h(.,.), T_0, p)$

Related Parameters: 

\begin{itemize}
\item
$A(x_0, T)$ is an algorithm  with input point $x_0$, and an integer $T$ for the number of  iterations or steps to execute $A(.)$.

    \item $T_0$ is the least number of steps to execute

    \item $h(j, i): \mathbb{N}\times \mathbb{N}\rightarrow \mathbb{N}$ is a function to assign the number of iterations when running $A(.,.)$

    \item $x_0\in \mathbb{R}^m$ is the start point,

    \item $p$ is the number of processors.
\end{itemize}

Processor $j$ ($j=0,1,\ldots, p-1$):

\begin{enumerate}[label=\arabic*.]

   \item $i=0$

\item Repeat

   \item $\{$ 

     \item \qquad Let $T_{j,i}=h(j,i)$

\item \qquad $A(x_0, T_{j,i})$

    \item \qquad  $i=i+1$

\item $\}$

\end{enumerate}

{\bf End of Algorithm}

\begin{definition}\label{App-def}
Let $\mathrm{Parallel\mbox{-}Framework}(\cdot)$ denote the parallel framework defined by the algorithm.

\begin{enumerate}
\item
We say that $\mathrm{Parallel\mbox{-}Framework}(\cdot)$ has a {\it $(p,\alpha_p)$-approximation} if it consists of $p$ processors indexed by $0,1,\ldots,p-1$, and for every integer $T\ge T_0$, there exist a processor $j<p$ and a stage $i$ such that
\begin{eqnarray}
T\le T_{j,i}\le T_{j,i}^*<\alpha_pT,\label{basic-ineqn}    
\end{eqnarray}

where $
T_{j,i}^*=\sum_{t=0}^{i}T_{j,t}.$

\item
A {\it geometric parallel framework of $p$ processor} is a parallel framework in which the scheduling function $h(j,i)$ for $p$ processors generates the iteration budgets according to a geometric progression, and can be expressed as $h(j,i)=b_p^{ip+j}T_0$ for some $b_p>1$ and $T_0\ge 1$.
\end{enumerate}
\end{definition}

In Definition~\ref{App-def}, the condition (\ref{basic-ineqn})
measures the computational overhead incurred before reaching an iteration budget $T_{j,i}$ that is at least the target value $T$. We derive both upper and lower bounds for the approximation factor $\alpha_p$. Furthermore, the proposed parallel framework guarantees that $\alpha_p$ can be made arbitrarily close to $1$ as the number of processors $p$ increases.

Lemma~\ref{monotonic-lemma} establishes a monotonicity property of $T_{j,i}$ and $T_{j,i}^*$ in the geometric parallel framework with $p$ processors. This property will be used to derive a lower bound that matches the corresponding upper bound for $\alpha_p$ in a geometric parallel framework.

\begin{lemma}\label{monotonic-lemma}
For the geometric parallel framework with $p$ processors, if $0 \le j < k \le p-1$, then
\[
T_{j,i}<T_{k,i}
\quad\text{and}\quad
T_{j,i}^*<T_{k,i}^*
\]
for every stage $i$.
\end{lemma}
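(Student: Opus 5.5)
The plan is to work directly from the closed form of the geometric scheduling function in Definition~\ref{App-def}: $T_{j,i}=h(j,i)=b_p^{ip+j}T_0$ with $b_p>1$ and $T_0\ge 1$. Everything then reduces to the strict monotonicity of the map $n\mapsto b_p^{n}$ on integers, because $b_p>1$.

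First, I fix a stage $i$ and indices $0\le j<k\le p-1$. Since $ip+j<ip+k$ and $b_p>1$, we have $b_p^{ip+j}<b_p^{ip+k}$. Multiplying by $T_0\ge 1>0$ gives $T_{j,i}<T_{k,i}$.

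Second, I write the cumulative sums as
\[
T_{j,i}^*=\sum_{t=0}^{i}b_p^{tp+j}T_0=b_p^{j}\,T_0\sum_{t=0}^{i}b_p^{tp}
\quad\text{and}\quad
T_{k,i}^*=b_p^{k}\,T_0\sum_{t=0}^{i}b_p^{tp}.
\]
The common factor $T_0\sum_{t=0}^{i}b_p^{tp}$ is strictly positive; it is at least $T_0\ge1$. So $T_{j,i}^*<T_{k,i}^*$ follows from $b_p^j<b_p^k$. Alternatively, one can sum the termwise strict inequalities $T_{j,t}<T_{k,t}$ for $t=0,\ldots,i$ obtained in the first step, applied at each stage $t$.

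There is no real obstacle. The only point needing care is that strictness is preserved, which holds because $b_p>1$ and $T_0>0$ make every factor positive. The indexing convention matters: the exponent here is $ip+j$, as in Definition~\ref{App-def}, not the $jp+i$ form of the earlier geometric definition. With the $jp+i$ form the same argument still works, since $jp+i<kp+i$.
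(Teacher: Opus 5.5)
Your proof is correct and takes essentially the same approach as the paper. The paper only asserts that the lemma follows directly from the closed form $h(j,i)=b_p^{ip+j}T_0$ in Definition~\ref{App-def} together with $b_p>1$ and $T_0\ge 1$; you write out the exponent comparison and the factorization $T_{j,i}^*=b_p^{j}T_0\sum_{t=0}^{i}b_p^{tp}$ explicitly.
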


\begin{proof}
The result follows directly from Definition~\ref{App-def}, which defines $T_{j,i}$, $T_{j,i}^*$, and the geometric parallel framework, together with the assumptions $b_p>1$ and $T_0\ge 1$.
\end{proof}





\subsection{An Example for A(.)}

We first describe a static gradient method whose step size is determined by a function $S(T)$ of the prescribed number of iterations $T$. For example, let $S(T)=\sqrt{T}$. Instead of computing $\sqrt{T}$ exactly, we seek an integer $m$ such that
\[
2^m\in[\sqrt{T},\,4\sqrt{T}].
\]
This approximation eliminates square-root and division operations while preserving the desired asymptotic behavior.

We give a description of a static gradient descent. Its step size is determined by a function $S(T)$. For example, $S(T)=\sqrt{T}$. We tend to find an integer $m$ such that $S(T)=2^m\in [\sqrt{T}, 4\sqrt{T}]$.  This can remove division and square root operations.
\vskip 20pt
{\bf Algorithm} Static-SGD$(x_0, T)$

Related Parameters: 
\begin{itemize}
    \item 
$x_0$ is the start point
\item 
$T\in [1,+\infty)$ controls the number of iterations

\item 
$\eta$ is a scaling factor

\item 
$S(T):\mathbb{R^+}\rightarrow\mathbb{R^+}$ is a function to determine the stepsize based on $T$

\item 
$G(\xi, x)$ is a stochastic (approximate) gradient for $x$.
\end{itemize}

Steps:

\begin{enumerate}[label=\arabic*.]
    \item $i=1$

\item $x_1=x_0$

\item $s=S(T)$

    \item while $i\le T$ 

    \item $\{$

    \item \hskip 20pt $g_i=G(\xi, x_i)$

\item \hskip 20pt $x_{i+1}=x_i-\frac{\eta}{s} \cdot g_i$

    \item \hskip 20pt $i=i+1$

    \item $\}$

\end{enumerate}

{\bf End of Algorithm}

\vskip 20pt

\subsection{A Matrix Construction}

We transform the parallel framework into a $p\times \infty$ matrix $\Gamma_{p\times \infty}=(T_{j,i})_{p\times  \infty}$ for $j=0,1,\ldots$ and $i=0,1,\ldots$ (Both row and column indices start from $0$). The entry at row $j$ and column $i$ is denoted by $T_{j,i}$. $T_{j,i}^*$ and $(p,\alpha_p)$-approximation are defined as in Definition~\ref{App-def}.  A row $j$ in $\Gamma$ repents processor $j$ for its stages, and column $j$ in $\Gamma$ is for stage $j$. Matrix $\Gamma_{p\times \infty}=(T_{j,i})_{p\times  \infty}$ is generated by $h(j,i)$ if $T_{j,i}=h(j,i)$ for $j=0,1,\ldots,p-1$, and $j=0,1,\ldots$.
 
\section{Upper Bound for $\alpha_p$ in Parallel Model}\label{Upper-sec}
In this section, we show a $(p,\alpha_p)$-approximation for the parallel model. An upper bound for the parameter $\alpha_p$ will be derived.



\begin{lemma}\label{Taylor-lemma}
   For $x\in [0,1]$, $e^x\le 1+x+x^2$. 
\end{lemma}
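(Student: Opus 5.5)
The plan is to compare the power series of $e^x$ term by term with $1+x+x^2$. For every real $x$ we have $e^x=\sum_{k\ge 0} x^k/k!$, and the first two terms, $1+x$, already appear on the right-hand side. So it suffices to show that the tail satisfies
\[
\frac{x^2}{2!}+\frac{x^3}{3!}+\frac{x^4}{4!}+\cdots\le x^2
\]
for $x\in[0,1]$.

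First I would use $x\in[0,1]$ to get $x^k\le x^2$ for every $k\ge 2$. All terms are nonnegative, so the tail is at most $x^2\sum_{k\ge 2}1/k!$. Next, I would observe that $\sum_{k\ge 2}1/k! = e-2\approx 0.718<1$. If I want to avoid invoking the numerical value of $e$, I can instead use $1/k!\le 1/2^{k-1}$ for $k\ge 1$. This gives
\[
\sum_{k\ge 2}\frac{1}{k!}\le \frac12+\sum_{k\ge 3}\frac{1}{2^{k-1}}=\frac12+\frac12=1.
\]
Either way the tail is at most $x^2$, and therefore $e^x\le 1+x+x^2$.

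There is no real obstacle here. The only point requiring care is that the restriction $x\le 1$ is exactly what makes the bound $x^k\le x^2$ valid, and that $x\ge 0$ guarantees every term is nonnegative, so the termwise comparison is legitimate. The convergence of the exponential series is standard.
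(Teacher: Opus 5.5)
Your proof is correct and is essentially the paper's argument: expand $e^x$ as a power series, bound $1/k!$ by powers of $2$ so that the tail is dominated by a geometric series, and use $0\le x\le 1$ to control the higher powers of $x$. The only cosmetic difference is that the paper keeps $x^2/2!$ separate and factors $x^3$ out of the remaining tail (getting $x^3\cdot\tfrac12\le x^2/2$). You instead factor $x^2$ out of the whole tail at once.
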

\begin{proof}
  It follows from the Taylor expansion of $e^x$: $e^x=1+x+\frac{x^2}{2!}+\frac{x^3}{3!}+\ldots\le 1+x+\frac{x^2}{2!}+x^3(\frac{1}{3!}+\frac{1}{4!}+\ldots)\le 1+x+\frac{x^2}{2!}+x^3(\frac{1}{2^2}+\frac{1}{2^3}+\ldots)\le 1+x+x^2$.  
\end{proof}

\begin{lemma}\label{help-lemma}
If $p$ is an integer with $p\ge 1$,     then
\begin{eqnarray*}
\left(1+\frac{1}{p}\right)(1+p)^{1/p}\le 1+\frac{1+\ln(1+p)}{p}+\frac{1}{p^2}\left(2\left({\ln(1+p)}\right)^2+{\ln(1+p})\right).    
\end{eqnarray*}
\end{lemma}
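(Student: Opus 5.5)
The plan is to write $(1+p)^{1/p}=e^{u}$ with $u=\frac{\ln(1+p)}{p}$, bound this exponential by Lemma~\ref{Taylor-lemma}, and then multiply by $1+\frac{1}{p}$ and expand.

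\textbf{Step 1: check the hypothesis $u\in[0,1]$.} Positivity is clear, since $p\ge 1$ gives $\ln(1+p)>0$. For $u\le 1$ it suffices to show $\ln(1+p)\le p$ for every integer $p\ge 1$. This follows from $1+p\le e^{p}$, which holds because $e^x\ge 1+x$ for all real $x$. Alternatively one can argue by induction: the case $p=1$ is $\ln 2<1$, and the step uses $\ln(2+p)<\ln\bigl(e(1+p)\bigr)=1+\ln(1+p)$.

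\textbf{Step 2: apply Lemma~\ref{Taylor-lemma}.} With $u\in[0,1]$ we get $(1+p)^{1/p}\le 1+u+u^2$.

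\textbf{Step 3: multiply and expand.} This gives
\[
\left(1+\frac{1}{p}\right)(1+u+u^2)=1+\frac{1}{p}+u+u^2+\frac{u}{p}+\frac{u^2}{p}.
\]
Substituting $u=\ln(1+p)/p$ and writing $\ell=\ln(1+p)$, the pieces are as follows.
\begin{itemize}
\item The terms $\frac{1}{p}+u$ together give $\frac{1+\ell}{p}$.
\item The terms $u^2+\frac{u}{p}$ give $\frac{\ell^2+\ell}{p^2}$.
\item The remaining term is $\frac{u^2}{p}=\frac{\ell^2}{p^3}$.
\end{itemize}

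\textbf{Step 4: absorb the cubic term.} Since $p\ge 1$, we have $\frac{\ell^2}{p^3}\le \frac{\ell^2}{p^2}$. The total is then at most
\[
1+\frac{1+\ell}{p}+\frac{1}{p^2}\bigl(2\ell^2+\ell\bigr),
\]
which is exactly the claimed bound.

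The only step that needs care is Step 1, because Lemma~\ref{Taylor-lemma} is stated only on $[0,1]$. Everything after it is routine algebra, with the single slack step being $p^{-3}\le p^{-2}$.
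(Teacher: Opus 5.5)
Your proposal is correct and follows the paper's proof essentially step for step. You both verify $\ln(1+p)/p\le 1$ (the paper uses the same induction you give as an alternative), apply Lemma~\ref{Taylor-lemma}, expand the product, and absorb the $\ln^2(1+p)/p^3$ term into the $1/p^2$ coefficient using $p\ge 1$.
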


\begin{proof}
It is easy to verify that $\frac{\ln(1+p)}{p}<1$ for all integers $p\ge 1$. A simple induction shows $\ln (1+p)< p$. It is true at   $p=1$  as $e\approx 2.71828$. Assume $\ln (1+p)\le p$. We have $\ln(1+(p+1))< \ln e(1+p)= 1+\ln(1+p)< 1+p$.
By Lemma~\ref{Taylor-lemma}, we have
  \begin{eqnarray*}
(1+p)^{1/p}=e^{\frac{\ln(1+p)}{p}}\le 1+\left(\frac{\ln(1+p)}{p}\right)+\left(\frac{\ln(1+p)}{p}\right)^2.    
  \end{eqnarray*}

Therefore, 

\begin{eqnarray*}
   &&\left(1+\frac{1}{p}\right)(1+p)^{1/p}\le \left(1+\frac{1}{p}\right) \left(1+\left(\frac{\ln(1+p)}{p}\right)+\left(\frac{\ln(1+p)}{p}\right)^2\right)\\
   &=&1+\frac{1}{p}+\left(\frac{\ln(1+p)}{p}\right)+\left(\frac{\ln(1+p)}{p}\right)^2+\frac{1}{p}\left(\left(\frac{\ln(1+p)}{p}\right)+\left(\frac{\ln(1+p)}{p}\right)^2\right)\\
   &\le&1+\frac{1+\ln(1+p)}{p}+\frac{1}{p^2}\left(2\left({\ln(1+p)}\right)^2+{\ln(1+p})\right).
\end{eqnarray*}
\end{proof}

Theorem~\ref{main2b-thm} shows an upper bound for $\alpha_p$ for $(p,\alpha_p)$-approximation. It covers all the cases for $p\ge 1$. Its proof shows how to select function $h(.,.)$.

\begin{theorem}\label{main2b-thm}
Let function $h(j, i)=b_p^{ip+j}T_0$ and  $b_p=(p+1)^{\frac{1}{p}}$.
For any integer $T\ge T_0$, the Parallel-Framework(.) has $(p,\alpha_p)$-approximation   with $\alpha_p= (1+\frac{1}{p})(1+p)^{\frac{1}{p}}
\le \upperbound.$
\end{theorem}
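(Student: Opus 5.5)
The plan is to show that the whole family $\{T_{j,i}\}$ is just the single geometric sequence $T_0b_p^{k}$, $k=ip+j$, so it covers every target $T\ge T_0$ at ratio less than $b_p$. The cumulative cost of any row is then dominated by a geometric tail, and choosing $b_p=(p+1)^{1/p}$ optimizes the resulting constant. The final inequality is exactly Lemma~\ref{help-lemma}.

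\textbf{Step 1 (covering).} Every $k\in\mathbb{N}$ has a unique representation $k=ip+j$ with $0\le j\le p-1$ and $i\ge 0$. Hence the values $T_{j,i}=b_p^{ip+j}T_0$ enumerate $T_0,T_0b_p,T_0b_p^2,\ldots$ exactly. Given an integer $T\ge T_0$, let $k$ be the least index with $T_0b_p^k\ge T$. Since $b_p>1$ this $k$ exists. If $k=0$ then $T=T_0=T_{0,0}$. Otherwise $T_0b_p^{k-1}<T$, so
\[
T\le T_{j,i}<b_pT .
\]

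\textbf{Step 2 (cumulative cost).} For the same pair $(j,i)$, sum the geometric series along row $j$:
\[
T_{j,i}^*=b_p^jT_0\sum_{t=0}^{i}b_p^{tp}=b_p^jT_0\,\frac{b_p^{(i+1)p}-1}{b_p^p-1}<\frac{b_p^p}{b_p^p-1}\,T_{j,i}.
\]
Combining this with Step 1 gives
\[
T\le T_{j,i}\le T_{j,i}^*<\frac{b_p^{p+1}}{b_p^p-1}\,T .
\]

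\textbf{Step 3 (choice of $b_p$).} Set $f(x)=x^{p+1}/(x^p-1)$ on $x>1$. Its derivative has numerator $x^{2p}-(p+1)x^p$, which vanishes at $x^p=p+1$. So $b_p=(p+1)^{1/p}$ is the minimizer. At that point $b_p^p-1=p$, which gives
\[
\alpha_p=f(b_p)=\frac{(p+1)b_p}{p}=\left(1+\frac{1}{p}\right)(p+1)^{1/p}.
\]
The strict inequality in \eqref{basic-ineqn} comes from the strict inequality in Step 2. Note that optimality of $b_p$ is not needed for the upper bound; we only evaluate $f$ at this choice.

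\textbf{Step 4 (explicit bound).} The explicit upper bound on $\alpha_p$ follows directly from Lemma~\ref{help-lemma}.

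The main obstacle is the boundary case in Step 1, where $k=0$ or $T$ exactly equals some grid value. In that case the bound $T_{j,i}<b_pT$ still holds, since $T_{j,i}=T<b_pT$. The other delicate point is keeping the inequality strict, which Step 2 already provides.
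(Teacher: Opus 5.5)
Your proposal is correct and follows essentially the same route as the paper's proof. Both arguments take the least grid value $T_{j,i}\ge T$, so that $T_{j,i}<b_pT$, and bound the row sum by $\frac{b_p^p}{b_p^p-1}T_{j,i}$; both then evaluate $f(x)=x^{p+1}/(x^p-1)$ at its minimizer $x^p=p+1$ and invoke Lemma~\ref{help-lemma}. Your explicit argument that the exponents $ip+j$ enumerate every $k\in\mathbb{N}$, together with your treatment of the $k=0$ boundary case, fills in a step the paper leaves implicit.
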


\begin{proof} The processor $j$ will use the steps $b_p^{j}\cdot T_0, b_p^{p+j}\cdot T_0, b_p^{2p+j}\cdot T_0,\ldots, b_p^{ip+j}\cdot T_0,\cdots$. At phase $i$, processor $j$ uses $T_{j, i}=h(j,i)=b_p^{ip+j}T_0$ to control the number of  of iterations in GD($x_0, T_{j,i}$). The proof also shows how $b_p$ is computed to get a minimal $\alpha_p$.

Define
\begin{eqnarray*}
T_{j, i}^*=\sum_{t=0}^{i} b_p^{tp+j}T_0=b_p^jT_0\sum_{t=0}^{i} b_p^{tp}=b_p^jT_0\cdot \frac{b_p^{(i+1)p}-1}{b_p^p-1}< \frac{b_p^{(i+1)p+j}T_0}{b_p^p-1}=\frac{b_p^pT_{j,i}}{b_p^p-1}.  
\end{eqnarray*}

We note that $T_{j,t}=b_p^{tp+j}T_0$ is the number of steps in the $t$-th iteration. 
Let $T_{j,t}=b_p^{tp+j}\cdot T_0$ be the least with $T\le T_{j,t}$. We have $T\le T_{j,t}<b_pT$. 






\begin{eqnarray*}
T\le T_{j,t}\le T_{j,t}^*&<& 
 \frac{b_p^p}{b_p^p-1}\cdot T_{j,t}\\
&\le& \frac{b_p^{p+1}}{b_p^p-1}\cdot T
\end{eqnarray*}

Define $f(x)$ by 
    \begin{eqnarray}
   f(x)=\frac{x^{p+1}}{x^p-1}. 
\end{eqnarray}

Take derivative for $f(x)$.

    \begin{eqnarray}  f(x)'=\frac{(p+1)x^p(x^p-1)-px^{2p}}{(x^p-1)^2}. 
\end{eqnarray}

Let 
    \begin{eqnarray}
   (p+1)x^p(x^p-1)-px^{2p}=0. 
\end{eqnarray}
It transformed into

    \begin{eqnarray}
   x^p-(p+1)=0. 
\end{eqnarray}
So, we can let $b_p=(p+1)^{1/p}$ to have least $f(b_p)$.

So,
    \begin{eqnarray*}  f(b_p)&=&\frac{b_p^{p+1}}{b_p^p-1}=\frac{(1+p)b_p}{p}\\
   &=&\frac{(1+p)(1+p)^{1/p}}{p}   =\left(1+\frac{1}{p}\right){(1+p)^{1/p}}\\  &\le&\upperbound (by~Lemma~\ref{help-lemma}). 
\end{eqnarray*}

Therefore, if $b_p=(p+1)^{1/p}$,  we have $T\le T_{j,i}\le T_{j,i}^*\le \alpha_pT$ with  $\alpha_p= (1+\frac{1}{p})(1+p)^{\frac{1}{p}}$.

\end{proof}




    
We have Corollary~\ref{main2b-cor} for the cases $p=1,2$. They correspond to the cases for one processor, and two processors, respectively.

\begin{corollary}\label{main2b-cor}Let $p$ be the number of processors in Parallel-Framework(.). 
We have 
\begin{enumerate}
    \item 
For $p=1$,  
  Parallel-Framework(.) has $(1,4)$-approximation with  $b_1=2$.
\item 
For $p=2$, Parallel-Framework(.) has $(2,2.5981)$-approximation
 with  $b_2=\sqrt{3}$.
\end{enumerate}
\begin{proof}
   It follows from Theorem~\ref{main2b-thm} with $\alpha_p= (1+\frac{1}{p})(1+p)^{\frac{1}{p}}$. 
\end{proof}

\end{corollary}

Using the numerical solutions for the expression of $\alpha_p$ in Theorem~\ref{main2b-thm}, we have upper bounds below:
\begin{eqnarray*}
    \alpha_3&\le& 2.11654,
    \alpha_4\le 1.86919,
    \alpha_5\le 1.71717,
    \alpha_6\le 1.61229,
    \alpha_7\le 1.53459\\
    \alpha_8&\le& 1.474397,
    \alpha_{9}\le 1.42615,
    \alpha_{10}\le 1.38644,
    \alpha_{11}\le 1.35309,
    \alpha_{12}\le 1.32459,\\
    \alpha_{13}&\le& 1.29994,
    \alpha_{14}\le 1.27844,
    \alpha_{15}\le 1.25966,
    \alpha_{16}\le 1.24329.
\end{eqnarray*}

\section{Lower Bounds for $\alpha_p$ with Arbitrary $h(.,.)$}\label{lower-sec}

In this section, we show a lower bound in the parallel model. The lower bound of this section has a small gap with the upper bound of Section~\ref{parallle-sec}. Our lower bound almost matches the upper bound.

\begin{lemma}\label{lowerbd-lemma} For any function  $h(j, i)$, 
 if  Parallel-Framework(.) has   $(p,\alpha_p)$- approximation,  then  we have 
 \begin{enumerate}
     \item\label{first-case-alphap} for any positive integer $z$,  $\alpha_p\ge 1+\frac{1}{\alpha_p^p}+\frac{1}{\alpha_p^{2p}}+\ldots+\frac{1}{\alpha_p^{zp}}$, and   
     \item $\alpha_p\ge r_0$, where $r_0>1$ is a root of $x^p-x^{p-1}-1=0$.
 \end{enumerate}

\end{lemma}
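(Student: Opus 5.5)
I will prove part 1 by a packing (charging) argument on a geometric sequence of targets spaced by exactly the ratio $\alpha_p$, and then derive part 2 by letting $z\to\infty$.

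\textbf{Targets and witnesses.} Fix $p$ and the approximation factor $\alpha=\alpha_p>1$. Consider the targets $V_k=\alpha^kT_0$ for $k=0,1,\ldots,m$; rounding to integers does not affect the argument. For each $V_k$ the $(p,\alpha)$-approximation gives a witness $(j_k,i_k)$ with
\[
V_k\le T_{j_k,i_k}\le T^*_{j_k,i_k}<\alpha V_k=V_{k+1}.
\]
The key observation is that a single pair $(j,i)$ can witness at most one $V_k$. Indeed, if it witnessed both $V_k$ and $V_{k'}$ with $k<k'$, then $T_{j,i}\ge V_{k'}\ge V_{k+1}>T^*_{j,i}$, which is impossible.

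\textbf{Charging to one processor.} Next I charge the targets to processors. For a fixed processor $q$, let $(q,i_q)$ be the witnessing pair with the largest stage index among those used by $q$. Every other witnessing pair $(q,i)$ has $i<i_q$, and the stages of $q$ are distinct terms of its cumulative sum, so
\[
\sum_{k:\,j_k=q}V_k\le\sum_{k:\,j_k=q}T_{q,i_k}\le T^*_{q,i_q}.
\]
Moreover $T^*_{q,i_q}<\alpha V_{k_q}$, where $V_{k_q}$ is the target witnessed by $(q,i_q)$. Summing over the processors that are used gives
\[
\sum_{k=0}^m V_k<\alpha\sum_q V_{k_q}.
\]
The indices $k_q$ are distinct, and there are at most $p$ of them. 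Hence the right side is at most $\alpha(V_{m-p+1}+\cdots+V_m)$, the $p$ largest targets.

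\textbf{Conclusion of part 1.} This yields
\[
\frac{\alpha^{m+1}-1}{\alpha-1}<\alpha\cdot\alpha^{m-p+1}\,\frac{\alpha^p-1}{\alpha-1},
\qquad\text{so}\qquad
\alpha>\frac{1-\alpha^{-(m+1)}}{1-\alpha^{-p}}.
\]
Taking $m=(z+1)p-1$, the right-hand side equals $1+\alpha^{-p}+\alpha^{-2p}+\cdots+\alpha^{-zp}$, which is part 1.

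\textbf{Part 2.} Divide the inequality of part 1 by $\alpha$ and let $z\to\infty$, with $\alpha$ fixed. This gives
\[
1\ge\frac1\alpha\cdot\frac{1}{1-\alpha^{-p}}=\frac{\alpha^{p-1}}{\alpha^p-1}.
\]
The function $\phi(x)=x^{p-1}/(x^p-1)$ is strictly decreasing on $(1,\infty)$, since $x^p-1=x^{p-1}(x-x^{1-p})$ and $x-x^{1-p}$ is increasing. Also $\phi(x)\to\infty$ as $x\to1^+$ and $\phi(x)\to0$ as $x\to\infty$. Hence $\phi(x)=1$ has a unique root $r_0>1$, which is exactly the root of $x^p-x^{p-1}-1=0$, and $\phi(\alpha)\le1$ forces $\alpha\ge r_0$.

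\textbf{Main obstacle.} The hardest step is the charging step: justifying that all targets charged to processor $q$ sum to at most a single cumulative total $T^*_{q,i_q}$, and that the resulting at most $p$ charges are dominated by the $p$ largest targets. Both facts rest on the distinctness of the witnessing pairs established above, so that step must be stated carefully.
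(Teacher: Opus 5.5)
Your proof is correct and follows the paper's argument essentially step for step. The shared steps are: targets spaced by exactly the ratio $\alpha_p$; each witnessing pair serving at most one target; each processor's targets charged to the cumulative total of its last witnessing stage; the bound by the $p$ largest targets with $m=(z+1)p-1$; and finally $z\to\infty$ together with the strict decrease of $x^{p-1}/(x^p-1)$ on $(1,\infty)$. Your write-up of the charging step and of the uniqueness of $r_0$ is more careful than the paper's.

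The only soft spot, which the paper shares, is your remark that rounding is harmless. Rounding $\alpha_p^kT_0$ to integers can break the one-target-per-pair step. To fix it, take integer targets $W_0=N$ and $W_{k+1}=\lceil \alpha_p W_k\rceil$, which keeps $\alpha_p W_k\le W_{k+1}$. Then run the same count, divide by $N$ and let $N\to\infty$; this gives exactly the stated non-strict inequality.
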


\begin{proof} We fix $p$ and $\alpha_p>1$  (by its definition).  Define $T_{j,i}^*=\sum_{1\le i\le t}T_{j,t}$
Let consider the sequence $V_0=T_0, V_1=\beta V_0,\ldots, V_k=\beta^k T_0,\ldots$. By the condition of $(a,\alpha_p)$-approximation (Definition~\ref{App-def}), for each $V_k$, we have a $T_{j,i}$ such that $V_k\le T_{j, i}\le T_{j, i}^*\le \alpha_p V_k$. Let 
\begin{eqnarray}
H=\sum_{k=0}^mV_k=T_0(1+\beta+\beta^2+\ldots+\beta^m)=\frac{\beta^{m+1}-1}{\beta-1}T_0.  \label{H-eqn}    
\end{eqnarray}

We have that for each $k\le m$, $V_k\le T_{j, i_j}^*< \alpha_p V_k$. We will select $\beta=\alpha_p$. This makes the case for each $T_{j,i}$, there is at most one $V_k$ to have $V_k\le T_{j,i}\le T_{j,i}^*< \alpha_p V_k$. This is because $V_{k+1}=V_k\beta=V_k\alpha_p> T_{j,i}^*$. Thus, $V_{k+1}$ does not satisfy the inequality $V_{k+1}\le T_{j,i}\le T_{j,i}^*< \alpha_p V_{k+1}$. 

 Let $Q\subseteq \{0,1,\ldots, p-1\}$ such that for each $q\in Q$, there is a $V_k$ with $0\le k\le m$ and $V_k\le T_{q, i}\le T_{q, i}^*\le \alpha_p V_k$ for some $i$. For a $q\in Q$, let $T_{q, i_q}$ be the largest with $V_k\le T_{q, i_q}\le T_{q, i_q}^*< \alpha_p V_k$ for some $V_k$ ($0\le k\le m$).

Among the series $V_1,V_2,\ldots, V_m$, the largest $p$ items are $V_{m-p+1}, V_{m-p+2},\ldots, V_m$. For each $q\in Q$, there is only one $T_{q,i_q}$ according to its definition.
As each $V_k$ has at most one $T_{j,i}$ with $V_k\le T_{j, i}\le T_{j, i}^*< \alpha_p V_k$, we have inequality
\begin{eqnarray*}
    \sum_{j\in Q} T_{j, i_j}^*< \alpha_p\cdot \sum_{(m-p+1)\le k\le m} V_k.
\end{eqnarray*}

By equation (\ref{H-eqn}), we have 

\begin{eqnarray*}
\frac{\beta^{m+1}-1}{\beta-1}T_0=H&\le& \sum_{j\in Q} T_{j, i_j}^*< \alpha_p\cdot \sum_{(m-p+1)\le k\le m} V_k\\   &=& \alpha_p\cdot \sum_{(m-p+1)\le k\le m} \beta^kT_0\\
&=& \alpha_p\cdot\beta^{m-p+1} T_0(1+\beta+\ldots+\beta^{p-1})\\
&=& \alpha_p\cdot\beta^{m-p+1} \cdot \frac{\beta^p-1}{\beta-1}\cdot T_0.
\end{eqnarray*}

Therefore, 
\begin{eqnarray*}
   \alpha_p&\ge&  \frac{(\beta^{m+1}-1)}{\beta^{m-p+1} \cdot (\beta^p-1)}=\frac{\beta^{m+1}-1}{\beta^{m+1}-\beta^{m-p+1}}\\
   &=&\frac{1-\frac{1}{\beta^{m+1}}}{1-\frac{1}{\beta^p}}
\end{eqnarray*}

Let $m=(z+1)p-1$. We have
\begin{eqnarray*}
   \alpha_p\ge  \frac{1-\frac{1}{\beta^{(z+1)p}}}{1-\frac{1}{\beta^p}}=1+\frac{1}{\beta^p}+\frac{1}{\beta^{2p}}+\ldots+\frac{1}{\beta^{zp}}.
\end{eqnarray*}

As $\beta=\alpha_p$, this proves (\ref{first-case-alphap}) of the lemma. 
We have
\begin{eqnarray*}
   1\ge \frac{1}{\alpha_p}++\frac{1}{\alpha_p^{p+1}}+\frac{1}{\alpha_p^{2p+1}}+\ldots+\frac{1}{\alpha_p^{zp+1}}.
\end{eqnarray*}

The number $\alpha_p$ is fixed in the beginning of this proof. Taking limit for $z\rightarrow+\infty$, we have 

\begin{eqnarray*}
   1\ge \frac{1}{\alpha_p}++\frac{1}{\alpha_p^{p+1}}+\frac{1}{\alpha_p^{2p+1}}+\ldots+\frac{1}{\alpha_p^{zp+1}}+\ldots.
\end{eqnarray*}

We consider the equation, 
\begin{eqnarray}
1&=&\frac{1}{x}+\frac{1}{x^{p+1}}+\frac{1}{x^{2p+1}}+\ldots+\frac{1}{x^{zp+1}}+\ldots\label{x-eqn}  \\
&=&\frac{1}{x}\cdot \frac{1}{1-\frac{1}{x^p}}= \frac{x^{p-1}}{x^p-1}
\end{eqnarray}

Thus, we have equation $x^p-x^{p-1}-1=0$. If $r_0>1$ is a root, then $r_0$ is also the root of equation~(\ref{x-eqn}).
The right side of equation~(\ref{x-eqn}) is strictly decreasing. 
We have $\alpha_p\ge r_0$.
\end{proof}

\subsection{The Case for Large Number of Processors $p$}

We derive a lower bound for the case $p$ is large. A special analysis for be given for the case $p=1$ in the next section.

\begin{theorem} For any function  $h(j, i)$, 
 if   Parallel-Framework(.)  has   $(p,\alpha_p)$-approximation,  then for any fixed $d>1$, $\alpha_p\ge 1+\frac{1+\ln (1+p)}{p}-\frac{d\ln\ln p}{p}$  for all large $p$.   
\end{theorem}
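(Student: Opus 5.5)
The plan is to derive the bound from the self-referential inequality in part (\ref{first-case-alphap}) of Lemma~\ref{lowerbd-lemma}, keeping only the case $z=1$:
\[
\alpha_p\ \ge\ 1+\frac{1}{\alpha_p^{p}}.
\]
Write $\alpha_p=1+\frac{c}{p}$ with $c>0$. The inequality then becomes $\frac{c}{p}\ge (1+\frac{c}{p})^{-p}$. Since $(1+\frac{c}{p})^{p}\le e^{c}$, the right-hand side is at least $e^{-c}$. It therefore suffices to show that every $c$ satisfying $\frac{c}{p}\ge e^{-c}$, equivalently $c e^{c}\ge p$, obeys
\[
c\ \ge\ 1+\ln(1+p)-d\ln\ln p
\]
for large $p$.

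I will argue by contradiction, which avoids solving $ce^c=p$ (a Lambert-$W$ problem) exactly. Suppose $c<c_0:=1+\ln(1+p)-d\ln\ln p$. The function $ce^{c}$ is increasing, so $ce^{c}<c_0e^{c_0}$. Now compute
\[
c_0e^{c_0}=\bigl(1+\ln(1+p)-d\ln\ln p\bigr)\cdot \frac{e(1+p)}{(\ln p)^{d}}.
\]
The first factor is at most $1+\ln(1+p)=(1+o(1))\ln p$. Hence $c_0e^{c_0}\le (1+o(1))\,e\,(1+p)(\ln p)^{1-d}$.

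Because $d>1$, the factor $(\ln p)^{1-d}$ tends to $0$. So $c_0e^{c_0}<p$ for all sufficiently large $p$, which contradicts $ce^c\ge p$. This yields $c\ge c_0$, that is, $\alpha_p\ge 1+\frac{1+\ln(1+p)}{p}-\frac{d\ln\ln p}{p}$. It also explains why $d>1$ is required: at $d=1$ the quantity $c_0e^{c_0}$ is of order $ep$, and the argument would need the finer constant.

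The main obstacle is the asymptotic comparison in the last step. One has to check that the loss in $(1+\frac cp)^p\le e^c$ goes in the favourable direction, and it does, since we need a lower bound on $(1+\frac cp)^{-p}$. One also has to check that $c_0>0$ for large $p$, so that monotonicity of $ce^c$ on $c>0$ applies; this holds because $\ln(1+p)$ dominates $d\ln\ln p$. Everything else is bookkeeping: "large $p$" means large enough that $(1+\ln(1+p))\,e\,(1+p)<p(\ln p)^{d}$, which holds eventually for any fixed $d>1$.
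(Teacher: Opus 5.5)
Your proposal is correct and follows essentially the same route as the paper. Both arguments take the $z=1$ case of part (\ref{first-case-alphap}) of Lemma~\ref{lowerbd-lemma} to get $\alpha_p\ge 1+\alpha_p^{-p}$, bound $(1+c/p)^p$ by $e^{c}$, and reach a contradiction from $(\ln p)^d/(e(1+p))>c_0/p$, which is exactly your condition $c_0e^{c_0}<p$. The only difference is cosmetic: you apply the exponential bound at the unknown $c$ and then use monotonicity of $ce^{c}$, whereas the paper substitutes the assumed upper bound $1+c_0/p$ directly into $\alpha_p^{-p}$.
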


\begin{proof} 
By (\ref{first-case-alphap}) of Lemma~\ref{lowerbd-lemma}, 
we have
\begin{eqnarray*}
   \alpha_p\ge  1+\frac{1}{\alpha_p^p}.
\end{eqnarray*}



We will use the classical fact that $(1+\frac{1}{x})^{x}$ is increasing for all $x>1$, and $\lim_{x\rightarrow+\infty}(1+\frac{1}{x})^{x}=e\approx 2.71828$ (Euler's number). It can be found in most calculus textbooks.

Assume that $\alpha_p< 1+\frac{1+\ln (1+p)-d\ln\ln p}{p}$ with a fixed $d\in (1,+\infty)$. We have 
\begin{eqnarray*}
1+\frac{1}{\alpha_p^p}&\ge& 1+\frac{1}{(1+\frac{1+\ln (1+p)-d\ln\ln p}{p})^p}\\
&=& 1+\frac{1}{\left(1+\frac{1+\ln (1+p)-d\ln\ln p}{p}\right)^{\frac{p}{(1+\ln (1+p)-d\ln\ln p)}\cdot (1+\ln (1+p)-d\ln\ln p)}}\\
&>& 1+\frac{1}{e^{1+\ln (1+p)-d\ln\ln p}}\\
&=& 1+\frac{(\ln p)^d}{e(1+p)}>1+\frac{1+\ln (1+p)-d\ln\ln p}{p}>\alpha_p\ \ (for ~a~large~p).
\end{eqnarray*}
This brings a contradiction when $p$ is large.

\end{proof}

\subsection{The Case for Small Number of Processors $p$}

In this section, we give a lower for the case $p=1,2$. The case $p=1$ is important as it is related to single processor computation.  The case $p=2$ is the simplest parallel computation with two processors.

\begin{theorem}\label{lower-1-2-theorem} In the Parallel-Framework(.) model, for any function  $h(j, i)$, we have
\begin{enumerate}
    \item  
 if the parallel model has   $(1,\alpha_1)$-approximation,  then $\alpha_1\ge 2$.   
 \item 
 if the parallel model has   $(2,\alpha_3)$-approximation,  then  $\alpha_2\ge \frac{\sqrt{5}+1}{2}$.   
\end{enumerate}

\end{theorem}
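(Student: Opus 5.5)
The plan is to derive both parts directly from part~2 of Lemma~\ref{lowerbd-lemma}. That lemma says that any $(p,\alpha_p)$-approximation, for an arbitrary scheduling function $h(j,i)$, satisfies $\alpha_p\ge r_0$, where $r_0>1$ is a root of $x^p-x^{p-1}-1=0$. So the whole task reduces to specializing this polynomial to $p=1$ and $p=2$ and identifying the root greater than $1$ in each case. (The ``$(2,\alpha_3)$'' in the statement is understood as $(2,\alpha_2)$.)

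For $p=1$ the equation becomes $x-1-1=0$, i.e.\ $x=2$. This is the unique root and it exceeds $1$, so $\alpha_1\ge 2$.

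For $p=2$ the equation becomes $x^2-x-1=0$. Its roots are $\frac{1\pm\sqrt5}{2}$. The negative root is excluded, and only $\frac{1+\sqrt5}{2}\approx1.618>1$ remains, so $\alpha_2\ge\frac{\sqrt5+1}{2}$.

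The only point needing care is the monotonicity step that Lemma~\ref{lowerbd-lemma} relies on. The inequality $1\ge \frac{x^{p-1}}{x^p-1}$ must force $x\ge r_0$. This holds because $\frac{x^{p-1}}{x^p-1}$ is strictly decreasing on $(1,\infty)$, and its value equals $1$ exactly at $r_0$. For $p=1,2$ this can be checked directly:
\begin{itemize}
\item for $p=1$ the function is $\frac{1}{x-1}$;
\item for $p=2$ it is $\frac{x}{x^2-1}$, whose derivative $-\frac{x^2+1}{(x^2-1)^2}$ is negative.
\end{itemize}
In both cases there is a unique root above $1$, so $r_0$ is well defined. Given the lemma, I do not expect any real obstacle.
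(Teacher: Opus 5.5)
Your proposal is correct and follows essentially the same route as the paper, which likewise specializes part~2 of Lemma~\ref{lowerbd-lemma} to $p=1$ and $p=2$ and reads off the roots $x=2$ and $x=\frac{\sqrt{5}+1}{2}$ of $x^p-x^{p-1}-1=0$. Your explicit check that $\frac{x^{p-1}}{x^p-1}$ is strictly decreasing on $(1,\infty)$ spells out the monotonicity step that the lemma's proof states without verifying.
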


\begin{proof} 
By Lemma~\ref{lowerbd-lemma}, we have the equation $x^p-x^{p-1}-1=0$ for the cases $p=1,2$. For $p=1$, $x=2$ is the only root. For $p=2$, $x=\frac{\sqrt{5}+1}{2}$ is the root greater than $1$. Therefore, we have $\alpha_1\ge 2$, and $\alpha_2\ge \frac{\sqrt{5}+1}{2}$.






\end{proof}









\begin{theorem} 
For any function  $h(j, i)$, 
 if   Parallel-Framework(.)  has   $(3,\alpha_3)$-approximation,  then we have $\alpha_3\ge r_3$, where $r_3=\frac{1}{3}+\sqrt[3]{\frac{29}{54}+\sqrt{\frac{31}{108}}} +\sqrt[3]{\frac{29}{54}-\sqrt{\frac{31}{108}}}\ge  1.46557$.   
\end{theorem}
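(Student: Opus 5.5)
We apply part 2 of Lemma~\ref{lowerbd-lemma} with $p=3$. It gives $\alpha_3\ge r_0$, where $r_0>1$ is a root of $x^3-x^2-1=0$. So the whole task is to identify this root in closed form and show it is the unique root exceeding $1$, so that the lemma's $r_0$ is exactly $r_3$.

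\textbf{Uniqueness.} Let $q(x)=x^3-x^2-1$. Then $q'(x)=3x^2-2x>0$ for $x>2/3$, so $q$ is strictly increasing on $(1,\infty)$. Since $q(1)=-1<0$ and $q(2)=3>0$, there is exactly one root in $(1,2)$.

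\textbf{Depressing the cubic.} Substitute $x=y+\frac13$ to remove the quadratic term. Expanding gives
\[
y^3-\tfrac13 y-\tfrac{29}{27}=0,
\]
since the constant term is $\frac1{27}-\frac19-1=-\frac{29}{27}$ and the linear coefficient is $\frac13-\frac23=-\frac13$.

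\textbf{Cardano's formula.} For $y^3+Py+Q=0$ with $P=-\frac13$ and $Q=-\frac{29}{27}$, the discriminant quantity is
\[
\frac{Q^2}{4}+\frac{P^3}{27}=\frac{841}{2916}-\frac{1}{729}=\frac{837}{2916}=\frac{31}{108}>0.
\]
So there is a single real root,
\[
y=\sqrt[3]{\tfrac{29}{54}+\sqrt{\tfrac{31}{108}}}+\sqrt[3]{\tfrac{29}{54}-\sqrt{\tfrac{31}{108}}}.
\]
Because the cubic has only one real root, this $y$ gives the unique real $x=r_3$, and by the uniqueness step it is the root greater than $1$.

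\textbf{Numerical bound.} Finally we check $r_3\ge 1.46557$. Rather than evaluating the radicals, we verify $q(1.46557)<0$ directly. Since $q$ is increasing on $(1,\infty)$, this forces the root to lie above $1.46557$.

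The main obstacle is only the arithmetic for the discriminant and the sign check at $1.46557$. That point is very close to the root, which is about $1.465571$, so the check needs enough precision: at least six correct digits.
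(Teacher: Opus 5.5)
Your proposal is correct and follows the paper's proof. Like the paper, you apply part~2 of Lemma~\ref{lowerbd-lemma} with $p=3$, depress $x^3-x^2-1$ via $x=y+\frac13$ to get $y^3-\frac13 y-\frac{29}{27}=0$, and read off $r_3$ by Cardano's formula. Your extra steps only supply details the paper leaves implicit: uniqueness of the root in $(1,\infty)$, the discriminant $\frac{31}{108}>0$, and certifying $r_3\ge 1.46557$ through the sign check $q(1.46557)<0$ instead of evaluating the radicals.
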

\begin{proof}
By Lemma~\ref{lowerbd-lemma}, we have the equation $x^3-x^{2}-1=0$ for the cases $p=3$. With the transformation $x=y+\frac{1}{3}$,   it removes the quadratic term, and  becomes the Cardano's form:
\begin{eqnarray*}
    y^3-\frac{1}{3}y-\frac{29}{27}=0.
\end{eqnarray*}
We have root $y=\sqrt[3]{\frac{29}{54}+\sqrt{\frac{31}{108}}} +\sqrt[3]{\frac{29}{54}-\sqrt{\frac{31}{108}}}$ to satisfy that  $x$ is real number  greater than $1$.
It has root for $x$:
\begin{eqnarray*}
r_3=\frac{1}{3}+\sqrt[3]{\frac{29}{54}+\sqrt{\frac{31}{108}}} +\sqrt[3]{\frac{29}{54}-\sqrt{\frac{31}{108}}}\ge 1.46557.   
\end{eqnarray*}

\end{proof}

Using the numerical solutions, we have lower bounds when  $p$ goes from $4$ to $16$ below:
\begin{eqnarray*}
    \alpha_4&\ge& 1.38027,
    \alpha_5\ge 1.32471,
    \alpha_6\ge 1.28519,
    \alpha_7\ge 1.25542,
    \alpha_8\ge 1.23205,\\
    \alpha_9&\ge& 1.21314,
    \alpha_{10}\ge 1.19749,
    \alpha_{11}\ge 1.18427,
    \alpha_{12}\ge 1.17295,
    \alpha_{13}\ge 1.16311,\\
    \alpha_{14}&\ge& 1.15449,
    \alpha_{15}\ge 1.14685,
    \alpha_{16}\ge 1.14003.
\end{eqnarray*}

\section{Tight Lower Bounds  for $\alpha_p$ with Geometric $h(.,.)$}

In this section, we derive lower bound for $\alpha_p$ when $h(j,i)=T_0b_p^{ip+j}$ for some $b_p>1$. It matches the upper bound for each integer $p\ge 1$.

\begin{theorem}\label{lower-Geometric-thm}
Let function $h(j, i)=b_p^{ip+j}T_0$ for some $b_p>1$. If Parallel-Framework(.) has $(p,\alpha_p)$-approximation, then $\alpha_p\ge (1+\frac{1}{p})(p+1)^{1/p}$.
\end{theorem}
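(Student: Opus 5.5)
The plan is to choose adversarial targets $T$ lying just above one of the scheduled budgets. This forces the only usable pair $(j,i)$ to overshoot $T$ by a factor close to $b_p$, and the processor that owns that budget must also have paid for all its earlier stages. Since $h(j,i)=b_p^{ip+j}T_0$, the budgets across all processors interleave into the single geometric sequence $b_p^{n}T_0$, $n=ip+j$, with every exponent $n\ge 0$ used exactly once.

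\textbf{Target.} Fix a large $n=tp+j$ and take $T$ to be the smallest integer exceeding $b_p^{n-1}T_0$, so $b_p^{n-1}T_0<T\le b_p^{n-1}T_0+1$.

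\textbf{Which pairs qualify.} Any pair $(k,s)$ with $T\le T_{k,s}$ must have exponent $sp+k\ge n$.
\begin{itemize}
\item For the pair with exponent exactly $n$, the cumulative cost is
\[
T_{j,t}^*=b_p^jT_0\,\frac{b_p^{(t+1)p}-1}{b_p^p-1}=\frac{b_p^{n+p}T_0-b_p^jT_0}{b_p^p-1}.
\]
\item For a pair with exponent $m>n$, we have $T_{k,s}^*\ge T_{k,s}=b_p^mT_0$. Also $T^*_{k,s}$ is increasing in $m$: moving within a row increases it, and by Lemma~\ref{monotonic-lemma} moving down a column increases it. The one subtle case is the wrap-around from $(p-1,s)$ to $(0,s+1)$. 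There $T^*_{0,s+1}=T_0\frac{b_p^{(s+2)p}-1}{b_p^p-1}$ and $T^*_{p-1,s}=b_p^{p-1}T_0\frac{b_p^{(s+1)p}-1}{b_p^p-1}$, and comparing these directly gives $T^*_{0,s+1}>T^*_{p-1,s}$.
\end{itemize}
So every admissible pair costs at least $T^*_{j,t}$, and $(p,\alpha_p)$-approximation forces $\alpha_pT>T^*_{j,t}$.

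\textbf{Limit.} Dividing by $T\le b_p^{n-1}T_0+1$ gives
\[
\alpha_p>\frac{b_p^{n+p}T_0-b_p^jT_0}{(b_p^p-1)(b_p^{n-1}T_0+1)}.
\]
As $n\to\infty$ with $j$ fixed, the right side tends to $\frac{b_p^{p+1}}{b_p^p-1}=f(b_p)$, where $f(x)=\frac{x^{p+1}}{x^p-1}$. Hence $\alpha_p\ge f(b_p)$.

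\textbf{Minimizing over the base.} Since $b_p$ is arbitrary, we need $\inf_{x>1}f(x)$. The derivative computed in the proof of Theorem~\ref{main2b-thm} has numerator $x^p(x^p-(p+1))$. This is negative for $x^p<p+1$ and positive for $x^p>p+1$, so the unique minimum on $(1,\infty)$ is at $x^p=p+1$. There $f=\frac{(p+1)(p+1)^{1/p}}{p}=(1+\frac1p)(p+1)^{1/p}$, which is the claimed bound.

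The main obstacle is the first step: rigorously showing that no pair other than the exponent-$n$ pair achieves a smaller cumulative cost, in particular handling the wrap-around monotonicity cleanly. The rest is a limit argument together with one-variable calculus.
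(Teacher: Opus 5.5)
Your proposal is correct and follows essentially the same route as the paper's proof. Like the paper, you take $T$ just above the budget $b_p^{n-1}T_0$, bound $\alpha_p$ below by $T^*_{j,t}/T$, let $t\to\infty$ to obtain $\alpha_p\ge \frac{b_p^{p+1}}{b_p^p-1}$, and minimize $x^{p+1}/(x^p-1)$ at $x^p=p+1$; your explicit check that $T^*$ increases along the interleaved exponent, including the wrap-around from row $p-1$ to row $0$, together with your integer choice of $T$, makes rigorous a step the paper only gestures at through its monotonicity lemma.
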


\begin{proof} We fix $p$ and $\alpha_p>1$ (by its definition). At phase $i$, processor $j$ uses $T_{j, i}=h(j,i)=b_p^{ip+j}T_0$ to control the number of  of iterations in GD($x_0, T_{j,i}$).

Define
\begin{eqnarray*}
T_{j, i}^*=\sum_{t=0}^{i} b_p^{tp+j}T_0=b_p^jT_0\sum_{t=0}^{i} b_p^{tp}=b_p^jT_0\cdot \frac{b_p^{(i+1)p}-1}{b_p^p-1}. 
\end{eqnarray*}

We note that $T_{j,t}=b_p^{tp+j}T_0$ is the number of steps in the $t$-th iteration. 
Let $T=b_p^{tp+j-1}+1$ with a large $t$. So, $T_{j,t}=b_p^{tp+j}$ is the least with $T\le T_{j,t}$. We have $T\le T_{j,t}<b_pT$. 

By Definition~\ref{App-def} and 
Lemma~\ref{monotonic-lemma}, 
we have 

\begin{eqnarray*}
\alpha_p T\ge &&T_{j,t}^*=b_p^jT_0\cdot \frac{b_p^{(t+1)p}-1}{b_p^p-1}\\
&=& \frac{b_p^{(t+1)p+j}T_0-b_p^jT_0}{b_p^p-1}\\
&=& \frac{b_p^{p+1}(b_p^{tp+j-1}T_0)-b_p^jT_0}{b_p^p-1}\\
&=& \frac{b_p^{p+1}(T-1)-b_p^jT_0}{b_p^p-1}\\
&=& \frac{b_p^{p+1}T-b_p^{p+1}-b_p^jT_0}{b_p^p-1}\\
&\ge&\frac{b_p^{p+1}T}{b_p^p-1}-\frac{b_p^{p+1}+b_p^jT_0}{b_p^p-1}
\end{eqnarray*}

We have

\begin{eqnarray*}
\alpha_p \ge \frac{b_p^{p+1}}{b_p^p-1}-\frac{b_p^jT_0}{T(b_p^p-1)}
\end{eqnarray*}

Let $f(x)=\frac{x^{p+1}}{x^p-1}$. 
Taking derivative, we have 

\begin{eqnarray*}
f(x)'=\frac{(p+1)x^p(x^p-1)-px^{p-1}\cdot x^{p+1}}{(x^p-1)^2} =\frac{x^{2p}-(p+1)x^p}{(x^p-1)^2}.   
\end{eqnarray*}

So, we let $x^p=p+1$ to have minimal $f(x)=\frac{(p+1)(p+1)^{1/p}}{p}=(1+\frac{1}{p})(p+1)^{1/p}$.

Therefore, 
\begin{eqnarray*}
\alpha_p &\ge& (1+\frac{1}{p})(p+1)^{1/p}-\frac{b_p^jT_0}{T(b_p^p-1)}\\
&\ge& (1+\frac{1}{p})(p+1)^{1/p}-\epsilon \ \ \ (for\ large\ T).
\end{eqnarray*}
Since both $p$ and $\alpha_p$ are fixed in the beginning of this proof and $\epsilon$ is arbitrarily close to zero, we have $\alpha\ge (1+\frac{1}{p})(p+1)^{1/p}$.
\end{proof}

\section{Arithmetically Simple Gradient Descent for Nonconvex Optimization}\label{alg-sec}

In this section, we present an arithmetically simple static gradient descent method for nonconvex optimization. The algorithm takes the iteration budget $T$ as input, which determines the total number of gradient descent iterations. The step size is computed using a denominator of the form $2^t$, where the integer $t$ is determined from $T$.

This design makes the algorithm particularly suitable for hardware implementation. Since every denominator is a power of two, division operations can be replaced by binary shift operations, eliminating expensive floating-point division. Moreover, the algorithm avoids square-root computations altogether. These arithmetic simplifications make the proposed method attractive for hardware accelerators and chip implementations.

The convergence analysis in this section also explains the motivation for the proposed parallel framework. Because the convergence guarantee of the static gradient method depends on selecting an appropriate iteration budget $T$, the parallel framework searches for a suitable value of $T$ adaptively while preserving the simplicity of the underlying static algorithm.


\subsection{Notations for Gradient Descent}

 A vector in $\mathbb{R}^m$ is $(a_1, a_2,\ldots, a_m)$ with $a_i\in \mathbb{R}$ for $i=1,2,\cdots, m$. The inner product between two vectors $V=(v_1,v_2,\ldots, v_m)$ and $U=(u_1,u_2,\ldots, u_m)$ is denoted by $\langle U, V\rangle=\sum_{i=1}^m u_iv_i$. The length of a vector $V=(v_1,v_2,\ldots, v_m)$  is denoted by $\Vert V\Vert=\sqrt{v_1^2+v_2^2+\ldots+v_m^2}$. For a differentiable function $F(x_1,x_2,\cdots, x_m):\mathbb{R}^m\rightarrow \mathbb{R}$, its gradient at a point $(x_1,x_2,\cdots, x_m)$ is 
 \begin{eqnarray*}
   &&\bigtriangledown F(x_1,x_2,\cdots, x_m)\\
   &=&\left(\frac{\partial F(x_1,x_2,\cdots, x_m)}{\partial x_1}, \frac{\partial F(x_1,x_2,\cdots, x_m)}{\partial x_2},\ldots, \frac{\partial F(x_1,x_2,\cdots, x_m)}{\partial x_m}\right).    
 \end{eqnarray*}

  In the rest of this paper, let $x^*\in \mathbb{R}^m$ be a point with $F(x^*)=\inf_x\{F(x)\}$ if $\inf_x\{F(x)\}>-\infty$.  The expectation on a random variable $\xi$ is expressed $\Expect_{\xi}(.)$. For example, a stochastic gradient $G(\xi, x)$ for function $F(x)$ may satisfy the condition $\Expect_{\xi}(\xi, x)=\bigtriangledown F(x)$, which is often assumed in many SGD algorithms. 
In this section, we give some theoretical results about the rate of convergence. The following two conditions are often assumed for non-convex optimization.

\begin{itemize}
    \item 
    {\bf $L$-Lipschitz} smoothness: $\Vert \bigtriangledown(F(x))-\bigtriangledown(F(y))\Vert\le L\Vert x-y\Vert$.

\item $F^*=\inf_{x} F(x)>-\infty$

\end{itemize}
Let $C_L^1$ be the class of functions that are $L$-Lipschitz smooth.
The following Lemma~\ref{basic-lemma} , which is often mentioned in existing publications, can be easily proven by $L$-Lipschitz condition and Taylor expansion (See~\cite{Lan2020}). 

\begin{lemma}\label{basic-lemma}
    Let $F(x_1,\cdots, x_d)$ be a function $\mathbb{R}^d\rightarrow \mathbb{R}$ in $C_L^1$, we have $F(x)\le F(y)+(\bigtriangledown F(y), x-y)+\frac{L}{2}\Vert x-y\Vert^2$.
\end{lemma}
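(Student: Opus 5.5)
The plan is the standard one: use the fundamental theorem of calculus along the segment joining $y$ to $x$, then bound the error term with the $L$-Lipschitz condition on the gradient. Set $\phi(\tau)=F(y+\tau(x-y))$ for $\tau\in[0,1]$. Since $F\in C_L^1$, $\phi$ is continuously differentiable with
\[
\phi'(\tau)=\langle \bigtriangledown F(y+\tau(x-y)),\,x-y\rangle .
\]
Hence
\[
F(x)-F(y)=\phi(1)-\phi(0)=\int_0^1\langle \bigtriangledown F(y+\tau(x-y)),\,x-y\rangle\,d\tau .
\]

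Next, add and subtract $\bigtriangledown F(y)$ inside the integral. This gives
\[
F(x)=F(y)+\langle \bigtriangledown F(y),x-y\rangle+\int_0^1\langle \bigtriangledown F(y+\tau(x-y))-\bigtriangledown F(y),\,x-y\rangle\,d\tau .
\]
The remaining integral is the error term $R$.

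To bound $R$, first apply the Cauchy--Schwarz inequality to the integrand. Then apply the Lipschitz bound
\[
\Vert \bigtriangledown F(y+\tau(x-y))-\bigtriangledown F(y)\Vert\le L\tau\Vert x-y\Vert .
\]
Together these give
\[
R\le \int_0^1 L\tau\Vert x-y\Vert^2\,d\tau=\frac{L}{2}\Vert x-y\Vert^2 .
\]
Substituting this bound back yields the claimed inequality.

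The only step needing care is justifying the integral representation, namely that $\phi$ is $C^1$ and the chain rule holds. This follows because $\bigtriangledown F$ is Lipschitz, hence continuous, so $F$ is continuously differentiable. The rest is routine. The pairing $(\cdot,\cdot)$ in the statement is read as the inner product $\langle\cdot,\cdot\rangle$ defined earlier.
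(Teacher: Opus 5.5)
Your proof is correct and follows essentially the route the paper intends. The paper gives no argument of its own: it only says the lemma follows from the $L$-Lipschitz condition and a Taylor expansion, citing Lan's textbook. Your integral representation of $F(x)-F(y)$ along the segment, followed by Cauchy--Schwarz and the bound $\Vert \bigtriangledown F(y+\tau(x-y))-\bigtriangledown F(y)\Vert\le L\tau\Vert x-y\Vert$, is exactly that standard first-order Taylor argument with integral remainder, and your note on continuity of $\bigtriangledown F$ correctly handles the only technical point.
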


The stochastic gradient is controlled by three parameters $\lambda, \sigma_0$, and $\sigma_1$. It is given in Definition~\ref{condition}.

\begin{definition}\label{condition} A $(\lambda,\sigma_0,\sigma_1)$-stochastic gradient $G(\xi,x)$ for $F(x):\mathbb{R}^m\rightarrow\mathbb{R}$ is that $\xi$ is a random variable and $G(\xi, x)$ is an approximation for $\bigtriangledown F(x)$ satisfying the conditions:
 
\begin{enumerate}
    \item\label{condition1-def} $\langle\Expect_{\xi}(G(\xi, x)),\bigtriangledown F(x)\rangle \ge \lambda\Vert \bigtriangledown F(x)\Vert^2$ for some $\lambda\in (0,+\infty)$, and
    \item\label{condition2-def} 
    $\Expect_{\xi}(\Vert \bigtriangledown F(x)-G(\xi, x)\Vert^2)\le\sigma_0^2+\sigma_1^2\Vert \bigtriangledown F(x)\Vert^2$ for some $\sigma_0,\sigma_1\in [0,+\infty)$.
\end{enumerate}

\end{definition}

A standard stochastic model, which is broadly used in stochastic gradient descent,  is the special case with $\lambda=1$ and $\sigma_1=0$. Our stochastic model is more general, and fits the convergence analysis for our  algorithm.

\begin{lemma}\label{stochastic-bound-lemma}
    Assume $F(x)$ and $G(\xi, x)$ satisfy 
        $\Expect_{\xi}(\Vert \bigtriangledown F(x)-G(\xi, x)\Vert^2)\le\sigma_0^2+\sigma_1^2\Vert \bigtriangledown F(x)\Vert^2$ for some $\sigma_0,\sigma_1\in [0,+\infty)$.
    Then $\Expect_{\xi}(\Vert G(\xi, x)\Vert^2)\le 2\sigma_0^2+ (2+2\sigma_1^2)\Vert \bigtriangledown F(x)\Vert^2$.
\end{lemma}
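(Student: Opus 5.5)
The plan is to bound $\Vert G(\xi,x)\Vert^2$ pointwise, for every realization of $\xi$, by twice the squared deviation plus twice the squared true gradient. Taking $\Expect_{\xi}$ then finishes the proof, using the variance hypothesis.

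First I will write $G(\xi,x)=(G(\xi,x)-\bigtriangledown F(x))+\bigtriangledown F(x)$ and expand the inner product $\langle G(\xi,x),G(\xi,x)\rangle$. This gives
\[
\Vert G(\xi,x)\Vert^2=\Vert G(\xi,x)-\bigtriangledown F(x)\Vert^2+\Vert \bigtriangledown F(x)\Vert^2+2\langle G(\xi,x)-\bigtriangledown F(x),\bigtriangledown F(x)\rangle .
\]
The cross term is handled by Cauchy--Schwarz followed by $2ab\le a^2+b^2$, so that $2\langle u,v\rangle\le \Vert u\Vert^2+\Vert v\Vert^2$. This yields the deterministic bound
\[
\Vert G(\xi,x)\Vert^2\le 2\Vert G(\xi,x)-\bigtriangledown F(x)\Vert^2+2\Vert\bigtriangledown F(x)\Vert^2 .
\]
This is just the parallelogram-type inequality $\Vert a+b\Vert^2\le 2\Vert a\Vert^2+2\Vert b\Vert^2$.

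Second, I will take $\Expect_{\xi}$ of both sides. Since $x$ is fixed, $\Vert\bigtriangledown F(x)\Vert^2$ is a constant, so linearity of expectation gives
\[
\Expect_{\xi}(\Vert G(\xi,x)\Vert^2)\le 2\Expect_{\xi}(\Vert G(\xi,x)-\bigtriangledown F(x)\Vert^2)+2\Vert\bigtriangledown F(x)\Vert^2 .
\]
Substituting the hypothesis $\Expect_{\xi}(\Vert \bigtriangledown F(x)-G(\xi,x)\Vert^2)\le \sigma_0^2+\sigma_1^2\Vert\bigtriangledown F(x)\Vert^2$ then gives $2\sigma_0^2+(2+2\sigma_1^2)\Vert\bigtriangledown F(x)\Vert^2$, which is the claimed bound.

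There is no real obstacle. The only point needing care is the elementary cross-term bound; everything else is linearity of expectation. It is worth noting that condition (i) of Definition~\ref{condition} is not needed here, only the variance condition.
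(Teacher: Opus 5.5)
Your proposal is correct and matches the paper's proof step for step. Both decompose $G(\xi,x)=(G(\xi,x)-\bigtriangledown F(x))+\bigtriangledown F(x)$, bound the cross term via Cauchy--Schwarz to obtain $\Vert G(\xi,x)\Vert^2\le 2\Vert G(\xi,x)-\bigtriangledown F(x)\Vert^2+2\Vert\bigtriangledown F(x)\Vert^2$ pointwise, and then apply $\Expect_{\xi}$ together with the variance hypothesis.
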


\begin{proof} We have inequalities
\begin{eqnarray*}
    &&\Vert G(\xi, x)\Vert^2=\langle G(\xi, x),G(\xi, x)\rangle\\
    &=&\langle G(\xi, x)-\bigtriangledown F(x)+\bigtriangledown F(x),G(\xi, x)-\bigtriangledown F(x)+\bigtriangledown F(x)\rangle\\
    &\le& \Vert G(\xi, x)-\bigtriangledown F(x)\Vert^2+\Vert \bigtriangledown F(x)\Vert^2+2\langle G(\xi, x)-\bigtriangledown F(x), \bigtriangledown F(x)\rangle\\
    &\le& \Vert G(\xi, x)-\bigtriangledown F(x)\Vert^2+\Vert \bigtriangledown F(x)\Vert^2+\Vert G(\xi, x)-\bigtriangledown F(x)\Vert^2+ \Vert\bigtriangledown F(x)\Vert^2\\
    &&(by~Cauchy-Schwarz~inequality)\\    
    &=& 2\Vert G(\xi, x)-\bigtriangledown F(x)\Vert^2+2\Vert \bigtriangledown F(x)\Vert^2.
\end{eqnarray*}
Therefore,

\begin{eqnarray*}
    \Expect_{\xi}(\Vert G(\xi, x)\Vert^2)&\le&\Expect( 2\Vert G(\xi, x)-\bigtriangledown F(x)\Vert^2+2\Vert \bigtriangledown F(x)\Vert^2)\\
&=&2\Expect( \Vert G(\xi, x)-\bigtriangledown F(x)\Vert^2)+2\Vert \bigtriangledown F(x)\Vert^2\\
&\le&2(\sigma_0^2+\sigma_1^2\Vert\bigtriangledown F(x)\Vert^2)+2\Vert \bigtriangledown F(x)\Vert^2\\
&=&2\sigma_0^2+(2+2\sigma_1^2)\Vert\bigtriangledown F(x)\Vert^2.
\end{eqnarray*}

\end{proof}

\subsection{A Static Gradient Method}

We give a static gradient descent algorithm in this section. The learning rate is computed based on  one of the parameters.

\begin{definition}
  A gradient descent method is {\it arithmetically simple} if the operations are limited to  $+,-, \times$, and  division $x/y$ with $y=2^t$ for some integer $t$.
\end{definition}

We present a version of SGD that is arithmetically simple. When the stochastic gradient oracle $G(\cdot)$ is treated as a black box, the algorithm requires neither floating-point division nor square-root computations.

\vskip 20pt
{\bf Algorithm} SGD$(G(.,.), \eta,  x_0, t, T)$

Input: 

\begin{itemize}
\item
$G(\xi, x):\mathbb{R}^m\rightarrow \mathbb{R}$ is an stochastic approximation for $\bigtriangledown F(x)$,

    \item $\eta\in (0,+\infty)$, 



    \item $x_0\in \mathbb{R}^m$ is the start point, 

    \item $t$ is an integer to control rate, 

    \item $T$ is for the number of steps 
\end{itemize}

Steps:

\begin{enumerate}[label=\arabic*.]


 \item Let $x_{1}=x_0$



    \item Let $s_t=2^t$
   
    \item Let $j=1$

\item while $i\le T$

   \item $\{$

\item\qquad $g_i=G(\xi_{j},x_{j})$
       
    \item \qquad
     $x_{j+1}=x_{j}-\frac{\eta}{ s_t}\cdot g_i$ 

    \item \qquad  $j=j+1$

\item $\}$


\end{enumerate}

{\bf End of Algorithm}

\subsection{Convergence at $(\lambda,\sigma_0,\sigma_1)$-Stochastic Model}

The convergence of the algorithm at $(\lambda,\sigma_0,\sigma_1)$-Stochastic Model is proven in this section. With 
$t=\ceiling{\frac{\ceiling{\log_2T}}{2}}$, it converges to a stationary point with rate $\Omega\left(\frac{1}{ \sqrt{T}}\right)$.



Lemma~\ref{up-adjustment1-lemma} derives an upper bound by summing the inequalities in Lemma~\ref{basic-lemma} for the gradient method. It then follows that at least one iterate generated by SGD(.) has a gradient whose expected norm is close to zero. As the step size $s_t$ does not change with a given $T$, the convergence analysis turns out to be simple.

\vskip 20pt

\begin{lemma}\label{up-adjustment1-lemma}
Assume $F(x)$ is $L$-Lipschitz  smooth and $G(\xi, x)$ satisfies the condition in Definition~\ref{condition}.
Then
  \begin{eqnarray*}
&&\sum_{j=1}^T\Expect\left(\frac{\eta\lambda}{ s_t}\Vert \bigtriangledown F(x_{j})\Vert^2\right)-\frac{\eta^2 L}{ 2s_t^{2}}\sum_{j=1}^T \Expect (\Vert G(\xi_{j}, x_{j})\Vert^2)\le F(x_1)-F(x^*)
   \end{eqnarray*}   
\end{lemma}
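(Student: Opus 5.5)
The plan is to apply the descent inequality of Lemma~\ref{basic-lemma} to a single step of SGD$(G(.,.),\eta,x_0,t,T)$, take conditional expectations, and telescope over $j=1,\ldots,T$.

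\emph{One step.} Set $y=x_j$ and $x=x_{j+1}=x_j-\frac{\eta}{s_t}G(\xi_j,x_j)$ in Lemma~\ref{basic-lemma}. This gives
\[
F(x_{j+1})\le F(x_j)-\frac{\eta}{s_t}\langle \bigtriangledown F(x_j),G(\xi_j,x_j)\rangle+\frac{\eta^2L}{2s_t^2}\Vert G(\xi_j,x_j)\Vert^2 .
\]

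\emph{Conditional expectation.} Take expectation over $\xi_j$ conditioned on the history $\xi_1,\ldots,\xi_{j-1}$. Given that history, $x_j$ is fixed, because it depends only on earlier samples. Hence the inner-product term becomes $\langle \bigtriangledown F(x_j),\Expect_{\xi_j}G(\xi_j,x_j)\rangle$. Condition~\ref{condition1-def} of Definition~\ref{condition} bounds this from below by $\lambda\Vert\bigtriangledown F(x_j)\Vert^2$. Since it enters with a minus sign, we get
\[
\Expect_{\xi_j}F(x_{j+1})\le F(x_j)-\frac{\eta\lambda}{s_t}\Vert\bigtriangledown F(x_j)\Vert^2+\frac{\eta^2L}{2s_t^2}\Expect_{\xi_j}\Vert G(\xi_j,x_j)\Vert^2 .
\]
Taking total expectation (tower property) turns every term into a full expectation.

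\emph{Telescoping.} Rearrange so that $\frac{\eta\lambda}{s_t}\Expect\Vert\bigtriangledown F(x_j)\Vert^2-\frac{\eta^2L}{2s_t^2}\Expect\Vert G(\xi_j,x_j)\Vert^2\le \Expect F(x_j)-\Expect F(x_{j+1})$. Summing over $j=1,\ldots,T$, the right side collapses to $F(x_1)-\Expect F(x_{T+1})$. Since $x_1=x_0$ is deterministic and $F(x_{T+1})\ge \inf_x F(x)=F(x^*)$, this is at most $F(x_1)-F(x^*)$. Pulling the constant $\frac{\eta^2L}{2s_t^2}$ out of the second sum gives exactly the stated inequality.

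\emph{Main obstacle.} The only delicate point is the second step. Condition~\ref{condition1-def} is stated for $\Expect_\xi G(\xi,x)$ at a fixed $x$, so it cannot be applied pathwise. It must be applied after conditioning, which uses that $\xi_j$ is drawn independently of the past iterates. I would state this independence assumption explicitly, since the algorithm draws a fresh $\xi_j$ at each step.
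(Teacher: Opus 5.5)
Your proposal is correct and follows the paper's proof: the one-step descent inequality from Lemma~\ref{basic-lemma}, Condition~\ref{condition1-def} of Definition~\ref{condition}, and telescoping down to $F(x_1)-F(x^*)$. The one difference is in your favor. The paper applies Condition~\ref{condition1-def} pathwise to the random inner product $\langle \bigtriangledown F(x_j), G(\xi_j,x_j)\rangle$ before taking any expectation, which is not justified. You instead condition on the past first and use that $\xi_j$ is a fresh sample, which is the rigorous way to get the same inequality.
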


\begin{proof}
   As $F(x)$ is $L$-Lipschitz smooth, by Lemma~\ref{basic-lemma}, we have 
   \begin{eqnarray*}
   F(x_{j+1})&\le& F(x_{j})+\langle\bigtriangledown F(x_{j}), x_{j+1}-x_{j}\rangle+\frac{L}{ 2}\Vert x_{j+1}-x_{j}\Vert^2\\
   &=& F(x_{j})-\frac{\eta}{s_t }\langle \bigtriangledown F(x_{j}), G(\xi_j,x_{j})\rangle+\frac{L}{2}\Vert x_{j+1}-x_{j}\Vert^2\\
   &\le & F(x_{j})-\frac{\eta}{ s_t}\langle \bigtriangledown F(x_{j}), G(\xi_{j},x_{j})\rangle+\frac{\eta^2 L}{ 2s_t^2}\Vert G(\xi_{j},x_{j})\rangle\Vert^2\\
   &\le & F(x_{j})-\frac{\eta\lambda}{ s_t}\cdot \Vert\bigtriangledown F(x_{j})\Vert^2+\frac{\eta^2 L}{2s_t^2}\Vert G(\xi_j,x_{j})\Vert^2\\
   &&\ \ \ (by\ Condition~\ref{condition1-def}\ in\  Definition~\ref{condition})\\
   \end{eqnarray*}

Thus,
   \begin{eqnarray*}
&&\left(\frac{\eta\lambda}{s_t}(\Vert\bigtriangledown F(x_{j})\Vert^2\right)-\frac{\eta^2 L}{ 2s_t^{2}}\Vert G(\xi_{j},x_{j})\Vert^2\le F(x_{j})-F(x_{j+1}). \end{eqnarray*}

We have    \begin{eqnarray*}
&&\sum_{j=1}^T\Expect\left(\frac{\eta\lambda}{ s_t}\Vert \bigtriangledown F(x_{j})\Vert^2\right)-\sum_{j=1}^T \Expect (\frac{\eta^2 L}{ 2s_t^{2}}(\Vert G(\xi_{j}, x_{j})\Vert^2)\le F(x_1)-F(x^*).
   \end{eqnarray*}   
Thus,
   \begin{eqnarray*}
&&\sum_{j=1}^T\Expect\left(\frac{\eta\lambda}{ s_t}\Vert \bigtriangledown F(x_{j})\Vert^2\right)-\frac{\eta^2 L}{2s_t^{2}}\sum_{j=1}^T  \Expect (\Vert G(\xi_{j}, x_{j})\Vert^2)\le F(x_1)-F(x^*).
   \end{eqnarray*}   
\end{proof}

Lemma~\ref{up-adjustment-lemma} shows that one of the iterates generated by SGD(.) has a gradient whose expected norm is close to zero. Consequently, SGD(.) converges to a stationary point.

\begin{lemma}\label{up-adjustment-lemma}
Assume $F(x)$ is $L$-Lipschitz-smooth and $G(\xi, x)$ satisfies the condition in Definition~\ref{condition}. Assume that $s_t\in[\sqrt{T}, 4\sqrt{T}]$ and $T$ satisfy the conditions:

\begin{eqnarray}
\frac{\eta L(1+\sigma_1^2)}{s_t \lambda}&\le& \frac{1}{2}\label{first0-ineqn}
\end{eqnarray}
Then $\min_i\Expect(\Vert \bigtriangledown F(x_i)\Vert^2)\le \frac{U(\eta,\sigma_0, \lambda, L)}{\sqrt{T}}$, 
where 
\begin{eqnarray}
U(\eta,\sigma_0, \lambda, L)=\left(\frac{4}{\eta \lambda}(F(x_0)-F(x^*))+\frac{2\eta\sigma_0^2 L}{ \lambda}\right).\label{def-U-eqn}  
\end{eqnarray}
\end{lemma}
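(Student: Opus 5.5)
The plan is to start from Lemma~\ref{up-adjustment1-lemma}, which already sums the smoothness descent inequality of Lemma~\ref{basic-lemma} over $j=1,\dots,T$ and telescopes it:
\[
\sum_{j=1}^T \frac{\eta\lambda}{s_t}\Expect\Vert\bigtriangledown F(x_j)\Vert^2-\frac{\eta^2L}{2s_t^2}\sum_{j=1}^T\Expect\Vert G(\xi_j,x_j)\Vert^2\le F(x_1)-F(x^*).
\]
Next I would bound the second moment of the stochastic gradient with Lemma~\ref{stochastic-bound-lemma}. This is applied conditionally on $x_j$ and then with the tower property, giving $\Expect\Vert G(\xi_j,x_j)\Vert^2\le 2\sigma_0^2+(2+2\sigma_1^2)\Expect\Vert\bigtriangledown F(x_j)\Vert^2$.

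Substituting this bound, the coefficient of $\Expect\Vert\bigtriangledown F(x_j)\Vert^2$ becomes
\[
\frac{\eta\lambda}{s_t}\Bigl(1-\frac{\eta L(1+\sigma_1^2)}{s_t\lambda}\Bigr).
\]
Hypothesis (\ref{first0-ineqn}) says the bracket is at least $\frac12$. The noise term contributes $T\eta^2\sigma_0^2L/s_t^2$. Together these give
\[
\frac{\eta\lambda}{2s_t}\sum_{j=1}^T\Expect\Vert\bigtriangledown F(x_j)\Vert^2\le F(x_1)-F(x^*)+\frac{T\eta^2\sigma_0^2L}{s_t^2}.
\]

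I would then lower bound the sum by $T\min_j\Expect\Vert\bigtriangledown F(x_j)\Vert^2$ and divide through, obtaining
\[
\min_j\Expect\Vert\bigtriangledown F(x_j)\Vert^2\le\frac{2s_t}{T\eta\lambda}\bigl(F(x_1)-F(x^*)\bigr)+\frac{2\eta\sigma_0^2L}{s_t\lambda}.
\]
The final step uses $s_t\in[\sqrt T,4\sqrt T]$. The upper end $s_t\le4\sqrt T$ controls the first term by $\frac{8}{\sqrt T\eta\lambda}(F(x_1)-F(x^*))$. The lower end $s_t\ge\sqrt T$ controls the second term by $\frac{2\eta\sigma_0^2L}{\sqrt T\lambda}$. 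Since $x_1=x_0$, we may replace $F(x_1)$ by $F(x_0)$.

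The main obstacle is matching the constant $4$ in $U$, because the route above naturally produces $8$ in the first term. I would first recheck whether the factor $\frac12$ in Lemma~\ref{up-adjustment1-lemma}, combined with the factor $2$ from Lemma~\ref{stochastic-bound-lemma}, can be used more tightly. For example, I would test whether (\ref{first0-ineqn}) leaves slack, so that the bracket is bounded below by something larger than $\frac12$ in the relevant regime. If that fails, the honest conclusion is the bound with constant $8$, or equivalently $U$ stated with $4$ under the tighter assumption $s_t\le2\sqrt T$. That assumption does hold for the $s_t=2^{\lceil\lceil\log_2T\rceil/2\rceil}$ choice used later only up to a factor $2\sqrt2$. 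The qualitative $O(1/\sqrt T)$ conclusion is unaffected by this constant.
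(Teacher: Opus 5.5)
Your argument is the paper's proof step for step: Lemma~\ref{up-adjustment1-lemma}, then Lemma~\ref{stochastic-bound-lemma}, then (\ref{first0-ineqn}) to bound the bracket below by $\frac12$, then minimum $\le$ average, then $s_t\in[\sqrt T,4\sqrt T]$. The constant $8$ you obtain is the correct outcome of that argument.

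The paper reaches $4$ only through an arithmetic slip. It starts from $\frac{\eta\lambda T}{2s_t}\min_j\mathbb{E}\Vert\bigtriangledown F(x_j)\Vert^2\le F(x_1)-F(x^*)+\frac{T\eta^2\sigma_0^2L}{s_t^2}$ and divides by $\frac{\eta\lambda T}{2s_t}$. In front of $F(x_1)-F(x^*)$ it writes $\frac{s_t}{T\eta\lambda}$ instead of $\frac{2s_t}{T\eta\lambda}$, while correctly keeping the factor $2$ in the noise term.

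So do not look for slack to recover $4$: with $s_t$ allowed up to $4\sqrt T$, the statement is false. Here is a counterexample.

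Setup:
\begin{itemize}
\item Take $m=1$ and $G(\xi,x)=F'(x)$, so $\lambda=1$ and $\sigma_0=\sigma_1=0$.
\item Take $x_0=0$, $T=16$, $s_t=16=4\sqrt T$ and $\eta=8/L$. Then (\ref{first0-ineqn}) holds with equality, and the step is $\eta/s_t=1/(2L)$.
\item Fix $g>0$ and put $d=g/(2L)$.
\end{itemize}

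Define $F'$ piecewise:
\begin{itemize}
\item $F'=-g$ on $(-\infty,0]$;
\item $F'(jd+s)=-g+L\min(s,d-s)$ for $0\le s\le d$ and $j=0,\dots,T-1$ (these are the tents);
\item $F'(Td+s)=\min(-g+Ls,0)$ for $s\ge0$ (the final ramp).
\end{itemize}

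Then $F'$ is $L$-Lipschitz. $F$ is nonincreasing and constant on $[Td+g/L,\infty)$, so $x^*$ exists. The iterates are $x_j=(j-1)d$, with $|F'(x_j)|=g$ for every $j=1,\dots,T+1$.

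Each tent lowers $F$ by $gd-Ld^2/4=\frac{7g^2}{16L}$, and the final ramp lowers it by $\frac{g^2}{2L}$. Hence $F(x_0)-F(x^*)=\frac{(7T+8)g^2}{16L}$, and
\[
\frac{U(\eta,0,1,L)}{\sqrt T}=\frac{4\,(F(x_0)-F(x^*))}{\eta\sqrt T}=\frac{(7T+8)g^2}{8T}=\frac{15}{16}\,g^2<g^2=\min_{j}|F'(x_j)|^2 .
\]

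The correct lemma is therefore the one you derived: constant $8$, or constant $4$ under the extra assumption $s_t\le2\sqrt T$.

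One correction to your last remark. The paper's choice $t=\lceil a/2\rceil$ with $a=\lceil\log_2T\rceil$ satisfies $s_t<2\sqrt T$ outright, not just up to a factor $2\sqrt2$. Indeed, $\lceil a/2\rceil\le (a+1)/2$ gives $2^t\le\sqrt{2\cdot2^a}<\sqrt{4T}$. The paper's weaker estimate $2\sqrt{2T}$ comes from the cruder bound $t\le a/2+1$. So Theorem~\ref{main-thm}, which uses only that specific $s_t$, does hold with the constant $4$.
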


\begin{proof} By Lemma~\ref{up-adjustment1-lemma}, we have
   \begin{eqnarray*}
&&\sum_{j=1}^T\Expect\left(\frac{\eta(\lambda)}{s_t}\Vert \bigtriangledown F(x_{j})\Vert^2\right)-\frac{\eta^2 L}{ 2s_t^{2}}\sum_{j=1}^T \Expect (\Vert G(\xi_{j}, x_{j})\Vert^2)\le F(x_1)-F(x^*)
   \end{eqnarray*}   

By Lemma~\ref{stochastic-bound-lemma}, we have
   \begin{eqnarray*}
&&\sum_{j=1}^T\Expect\left(\frac{\eta \lambda}{ s_t}\Vert \bigtriangledown F(x_{j})\Vert^2\right)-\sum_{j=1}^T \frac{\eta^2 L}{ 2s_t^{2}}\Expect(2\sigma_0^2+ (2+2\sigma_1^2)\Vert \bigtriangledown F(x)\Vert^2)\le F(x_1)-F(x^*)
   \end{eqnarray*}

   \begin{eqnarray*}
&&\sum_{j=1}^T\frac{\eta\lambda}{s_t}\left(1-\frac{\eta L(1+\sigma_1^2)}{s_t \lambda}\right)\Expect(\Vert \bigtriangledown F(x_{j})\Vert^2)-\sum_{j=1}^T \frac{\eta^2 \sigma_0^2 L}{s_t^{2}}
\le F(x_1)-F(x^*)
   \end{eqnarray*}

   \begin{eqnarray*}
&&\sum_{j=1}^T\left(\frac{\eta\lambda}{ 2s_t}\Expect(\Vert \bigtriangledown F(x_{j})\Vert^2-\frac{\eta^2 \sigma_0^2 L}{s_t^{2}}\right)\le F(x_1)-F(x^*)\ \ \ (by~inequality~(\ref{first0-ineqn}))
   \end{eqnarray*}

 We have

   \begin{eqnarray*}
&&\sum_{j=1}^T \frac{\eta\lambda}{ 2s_t}\Expect(\Vert \bigtriangledown F(x_{j})\Vert^2)- \frac{\sigma_0^2\eta^2L T}{s_t^{2}}\le F(x_1)-F(x^*)
   \end{eqnarray*}

   \begin{eqnarray*}
\frac{\eta\lambda}{ 2s_t}\cdot T\min_{1\le j\le T} \Expect(\Vert \bigtriangledown F(x_{j})\Vert^2 )
&\le& \sum_{j=1}^T\frac{\eta\lambda}{2s_t}\Expect(\Vert \bigtriangledown F(x_{j})\Vert^2)\\
&\le& (F(x_1)-F(x^*))+\frac{\sigma_0^2\eta^2L T }{s_t^{2}}.
   \end{eqnarray*} 

We have

   \begin{eqnarray*}
&&\min_{1\le j\le T} \Expect(\Vert \bigtriangledown F(x_{j})\Vert^2 )
\le \frac{s_t}{T\eta \lambda}\left(F(x_1)-F(x^*)\right)+\frac{2\eta\sigma_0^2 L }{ s_t\lambda}\\
&\le& \frac{4}{\sqrt{T}\eta \lambda}(F(x_1)-F(x^*))+\frac{2\eta\sigma_0^2 L }{ \sqrt{T}\lambda}\\
&\le& \frac{1}{\sqrt{T}}\left(\frac{4}{ \eta \lambda}(F(x_1)-F(x^*))+\frac{2\eta\sigma_0^2 L }{\lambda}\right)\\
&=&\frac{1}{\sqrt{T}}\left(\frac{4}{\eta \lambda}(F(x_0)-F(x^*))+\frac{2\eta\sigma_0^2 L }{\lambda}\right).
\label{final-bound}
   \end{eqnarray*}    
\end{proof}

Assume $F(.)$ and it stochastic gradient  $G(\xi,x)$ satisfy the conditions in Definition~\ref{condition}. 
Theorem~\ref{main-thm} shows that the gradient descent algorithm SGD(.) converges to a stationary point at rate $\Omega\left(\frac{1}{\sqrt{T}}\right)$.

\begin{theorem}\label{main-thm}
Suppose $F(.)$ is in $\mathbb{C}_L^1$ and $\inf_x F(x)>-\infty$. Function $G(\xi,x)$ satisfies the conditions in Definition~\ref{condition}. Assume that integers $T$ and $t$ satisfy the conditions
\begin{eqnarray}
T&\ge& \left(\frac{2\eta L(1+\sigma_1^2)}{  \lambda}\right)^2\label{condition-T-ineqn}\\
t&=&\ceiling{\ceiling{\log_2T}/2}\label{condition-t-ineqn2}.    
\end{eqnarray}
 Then the algorithm SGD$(G(.,.), \eta,  x_0, t, T)$ is arithmetically simple and has   
 \begin{eqnarray*}
 \min_{1\le i\le T}\Expect(\Vert \bigtriangledown F(x_i)\Vert^2)\le \frac{U(\eta,\sigma_0, \lambda, L)}{ \sqrt{T}},     
 \end{eqnarray*}
 where $U(.)$ is given at equation (\ref{def-U-eqn}).
\end{theorem}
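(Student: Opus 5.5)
The plan is to reduce Theorem~\ref{main-thm} to Lemma~\ref{up-adjustment-lemma}. That lemma already gives the bound $\min_i\Expect(\Vert\bigtriangledown F(x_i)\Vert^2)\le U(\eta,\sigma_0,\lambda,L)/\sqrt{T}$ under two hypotheses:
\begin{itemize}
\item $s_t\in[\sqrt{T},4\sqrt{T}]$;
\item inequality~(\ref{first0-ineqn}).
\end{itemize}
So only two things need checking. First, the choice $t=\ceiling{\ceiling{\log_2T}/2}$ from (\ref{condition-t-ineqn2}) places $s_t=2^t$ in the required window. Second, the lower bound (\ref{condition-T-ineqn}) on $T$ forces (\ref{first0-ineqn}). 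Arithmetic simplicity is read directly off the algorithm SGD$(G(.,.),\eta,x_0,t,T)$.

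For the first step, set $a=\ceiling{\log_2T}$. This is computable from the bit length of $T$, so no logarithm or division is needed, and it satisfies $T\le 2^a<2T$. Then $t=\ceiling{a/2}$ lies in $[a/2,\,a/2+1)$. From this,
\[
\sqrt{T}\le 2^{a/2}\le 2^t< 2\cdot 2^{a/2}< 2\sqrt{2T}<4\sqrt{T}.
\]
Hence $s_t\in[\sqrt{T},4\sqrt{T}]$.

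For the second step, combine $s_t\ge\sqrt{T}$ with (\ref{condition-T-ineqn}) to get $s_t\ge 2\eta L(1+\sigma_1^2)/\lambda$. Rearranging gives $\eta L(1+\sigma_1^2)/(s_t\lambda)\le 1/2$, which is exactly (\ref{first0-ineqn}). Lemma~\ref{up-adjustment-lemma} then yields the stated bound with $U$ as in (\ref{def-U-eqn}).

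For arithmetic simplicity, note that each iteration only forms $x_j-\frac{\eta}{2^t}g_j$. This uses subtraction, multiplication by $\eta$, and division by the power of two $2^t$, which is a binary shift. Computing $t$ uses only integer bit-length, halving and rounding.

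No step is a real obstacle. The only point needing care is the endpoint bookkeeping in $2^t\in[\sqrt{T},4\sqrt{T}]$, namely keeping the two ceilings from costing more than a factor of $2\sqrt{2}<4$. The heavy lifting was already done in Lemmas~\ref{up-adjustment1-lemma} and~\ref{up-adjustment-lemma}.
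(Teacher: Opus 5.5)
Your reduction is the paper's proof step for step: show $2^t\in[\sqrt{T},4\sqrt{T}]$, deduce (\ref{first0-ineqn}) from (\ref{condition-T-ineqn}), and invoke Lemma~\ref{up-adjustment-lemma}. The last step inherits a genuine error, because Lemma~\ref{up-adjustment-lemma} is false under its stated hypothesis $s_t\in[\sqrt{T},4\sqrt{T}]$.

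In its proof, the inequality $\frac{\eta\lambda}{2s_t}\,T\min_j\mathbb{E}(\Vert\bigtriangledown F(x_j)\Vert^2)\le F(x_1)-F(x^*)+\frac{\sigma_0^2\eta^2LT}{s_t^2}$ actually yields
\[
\min_{1\le j\le T}\mathbb{E}(\Vert\bigtriangledown F(x_j)\Vert^2)\le\frac{2s_t}{T\eta\lambda}\bigl(F(x_1)-F(x^*)\bigr)+\frac{2\eta\sigma_0^2L}{s_t\lambda},
\]
not $\frac{s_t}{T\eta\lambda}(\cdots)$ as written. So $s_t\le 4\sqrt{T}$ only gives a leading constant $8/(\eta\lambda)$.

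This is not mere slack in the proof; here is a counterexample to the lemma.
\begin{itemize}
\item Parameters: $T=4$, $t=3$ (so $s_t=8=4\sqrt{T}$), $\eta=4$, $L=\lambda=1$, $\sigma_0=\sigma_1=0$, $G=\bigtriangledown F$.
\item The function: one-dimensional $F$ whose derivative is continuous, piecewise linear and $1$-Lipschitz.
\item $F'=1$ on $[0,\infty)$ and at $-\frac12,-1,-\frac32$.
\item Between consecutive points of $0,-\frac12,-1,-\frac32$, $F'$ dips linearly to $\frac34$ at the midpoint.
\item On $[-\frac52,-\frac32]$, $F'$ ramps linearly from $0$ up to $1$, and $F'=0$ below $-\frac52$.
\end{itemize}
The iterates are $x_i=-\frac{i-1}{2}$, and (\ref{first0-ineqn}) holds with equality. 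Then $\min_{1\le i\le 4}|F'(x_i)|^2=1$. But $F(x_0)-F(x^*)=\frac{29}{16}$, which gives $U/\sqrt{T}=\frac{29}{32}<1$.

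The theorem itself is still true. The fix lies in exactly the endpoint bookkeeping you flagged, but in the other direction: the factor lost to the two ceilings must be at most $2$, not $4$.

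Since $2^{a-1}<T$ and $t=\lceil a/2\rceil\le\frac{a+1}{2}$, you get $s_t=2^t\le\sqrt{2\cdot 2^{a}}=2\sqrt{2^{a-1}}<2\sqrt{T}$. Your estimate $2^t<2\cdot 2^{a/2}$ overcharges the outer ceiling, which costs at most $\sqrt{2}$.

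With $\sqrt{T}\le s_t<2\sqrt{T}$, the corrected inequality gives $\frac{2s_t}{T\eta\lambda}<\frac{4}{\eta\lambda\sqrt{T}}$ and $\frac{2\eta\sigma_0^2L}{s_t\lambda}\le\frac{2\eta\sigma_0^2L}{\lambda\sqrt{T}}$. Using $x_1=x_0$, this is exactly $U/\sqrt{T}$.

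So do not cite Lemma~\ref{up-adjustment-lemma} as stated. Instead, rederive its final step with the factor $2$ restored, and verify the sharper window $s_t\in[\sqrt{T},2\sqrt{T})$ for this choice of $t$.
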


\begin{proof}
Let $T$ be the number of steps to run. We select $a=\ceiling{\log_2 T}$, which is the number of bits if $T$ is in binary format (for example, number $7$ has binary format $111$, and $\ceiling{\log_2 7}=3$). We have $T\le 2^a\le 2T$. Let $t=\ceiling{\frac{a}{ 2}}\le \frac{a}{ 2}+1$. We have $\sqrt{T}\le 2^\frac{a}{ 2}\le 2^t\le 2\sqrt{2^a}\le 2\sqrt{2T}< 4\sqrt{T}$. Thus, $s_t=2^t\in [\sqrt{T}, 4\sqrt{T}]$.
With the condition $T\ge \left(\frac{2\eta L(1+\sigma_1^2)}{  \lambda}\right)^2$, we have
$s_t\ge \sqrt{T}\ge \left(\frac{2\eta L(1+\sigma_1^2)}{  \lambda}\right)$. So, inequality (\ref{first0-ineqn}) is satisfied.

Run 
GD$(G(.), \eta, x_0, t, T)$. We have $\min_{1\le i\le T}\Expect(\Vert \bigtriangledown F(x_i)\Vert^2)\le \frac{U(\eta,\sigma_0, \lambda, L)}{\sqrt{T}}$ by Lemma~\ref{up-adjustment-lemma}.


    
\end{proof}

\vskip 20pt
{\bf Algorithm} Static1-SGD$(   x_0,T)$

Input: 
\begin{itemize}
    \item $x_0$ is the start point

    \item $T$ is the number of iterations
\end{itemize}

Steps:

\begin{enumerate}[label=\arabic*.]

\item 
Assign to $t$ as equation (\ref{condition-t-ineqn2}) 
\item
SGD$(G(.,.), \eta,  x_0, t, T)$

\end{enumerate}

{\bf End of Algorithm}

\vskip 20pt

\begin{corollary}\label{SGD-cor1} Let $\delta\in (0,1)$.
Suppose $F(.)$ is in $\mathbb{C}_L^1$ and $\inf_x F(x)>-\infty$. Function $G(\xi,x)$ satisfies the conditions in Definition~\ref{condition}. Assume $T$ satisfies (\ref{condition-T-ineqn}). Then with probability at least $1-\delta$, the algorithm Static1-SGD$(G(.,.), \eta,  x_0, t, T)$ is arithmetically simple and  has  $\min_i (\Vert \bigtriangledown F(x_i)\Vert^2)\le \frac{U(\eta,\sigma_0, \lambda, L)}{ \delta\sqrt{T}}$.
\end{corollary}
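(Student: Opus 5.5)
The plan is to derive Corollary~\ref{SGD-cor1} directly from Theorem~\ref{main-thm} by a single application of Markov's inequality to a nonnegative random variable built from the iterates.

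First, Static1-SGD$(x_0,T)$ sets $t=\ceiling{\ceiling{\log_2T}/2}$ exactly as in (\ref{condition-t-ineqn2}) and then runs SGD$(G(.,.),\eta,x_0,t,T)$. Together with the assumption (\ref{condition-T-ineqn}) on $T$, this places us in the hypotheses of Theorem~\ref{main-thm}. Two conclusions follow:
\begin{itemize}
\item the algorithm is arithmetically simple, since computing $a=\ceiling{\log_2 T}$ is a bit-length count, $t=\ceiling{a/2}$ is a shift, and the only division is by $s_t=2^t$;
\item $\min_{1\le i\le T}\Expect(\Vert\bigtriangledown F(x_i)\Vert^2)\le U/\sqrt{T}$, where $U=U(\eta,\sigma_0,\lambda,L)$.
\end{itemize}

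Next, define the random variable $X=\min_{1\le i\le T}\Vert\bigtriangledown F(x_i)\Vert^2\ge 0$. Markov's inequality will give $\Prob(X\ge \Expect(X)/\delta)\le\delta$, and hence $X<\Expect(X)/\delta$ with probability at least $1-\delta$. What remains is to bound $\Expect(X)$ by $U/\sqrt{T}$.

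This bound is the main (mild) obstacle, because Theorem~\ref{main-thm} controls $\min_i\Expect(\cdot)$ rather than $\Expect(\min_i\cdot)$. I would handle it in one of two ways:
\begin{itemize}
\item Use $\Expect(\min_i Y_i)\le\min_i\Expect(Y_i)$, which holds because $\min_i Y_i\le Y_{i_0}$ pointwise for any fixed index $i_0$. Taking $i_0$ to be the minimizer of $\Expect(Y_i)$ gives $\Expect(X)\le U/\sqrt{T}$ directly.
\item Alternatively, observe that the proof of Lemma~\ref{up-adjustment-lemma} actually bounds the average $\frac1T\sum_j\Expect\Vert\bigtriangledown F(x_j)\Vert^2$, and the minimum is at most the average.
\end{itemize}

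Either route gives $\Expect(X)\le U/\sqrt{T}$. Combining this with the Markov step yields $X\le U/(\delta\sqrt{T})$ with probability at least $1-\delta$, which proves the corollary.
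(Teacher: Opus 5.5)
Your proposal is correct and follows the same route as the paper, which simply invokes Theorem~\ref{main-thm} together with Markov's inequality. Theorem~\ref{main-thm} bounds $\min_i\mathbb{E}(\cdot)$, while Markov is applied to $\min_i(\cdot)$; your explicit justification that $\mathbb{E}(\min_i Y_i)\le\min_i\mathbb{E}(Y_i)$ bridges the two and fills in a step the paper leaves implicit.
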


\begin{proof}
It follows Theorem~\ref{main-thm} and Markov inequality $\Prob(X\ge \frac{\Expect(X)}{\delta})\le \delta$. The proof of Theorem~\ref{main-thm} also shows that $t$ is computed via a arithmetically simple way.
    
\end{proof}

\subsection{Faster Convergence in $(\lambda,0, \sigma_1)$ Stochastic Model}

In this section we show a faster adaptive gradient descent analysis in $(\lambda,0, \sigma_1)$ stochastic Model. It is convergence rate is almost linear, and  faster than the general $(\lambda,\sigma_0, \sigma_1)$ stochastic Model.

\begin{definition}\label{condition2} A $(\lambda, 0,\sigma_1)$-stochastic gradient $G(\xi,x)$ for $F(x):\mathbb{R}^m\rightarrow\mathbb{R}$ is that $\xi$ is a random variable and $G(\xi, x)$ is an approximation for $\bigtriangledown F(x)$ satisfying the conditions:
 
\begin{enumerate}
    \item $\langle\Expect_{\xi}(G(\xi, x)),\bigtriangledown F(x)\rangle \ge \lambda\Vert \bigtriangledown F(x)\Vert^2$ for some $\lambda\in (0,+\infty)$, and
    \item\label{condition2b-def} 
    $\Expect_{\xi}(\Vert \bigtriangledown F(x)-G(\xi, x)\Vert^2)\le\sigma_1^2\Vert \bigtriangledown F(x)\Vert^2$ for some $\sigma_1\in [0,+\infty)$.
\end{enumerate}
   
\end{definition}

\begin{lemma}\label{up-adjustment-lemma2}
Assume $F(x)$ is $L$-smooth and $G(\xi, x)$ satisfies the condition in Definition~\ref{condition2}. 
Assume $T$ and $s_t$ satisfy the following conditions:

\begin{eqnarray}
\frac{2\eta (2+2\sigma_1^2) L}{ s_t\lambda}&\le& \frac{1}{ 2}\label{second-lemma-ineqn}
\end{eqnarray}
Then
\begin{eqnarray*}
&&\min_{1\le j\le T} \Expect(\Vert \bigtriangledown F(x_{j})\Vert^2 )
\le \frac{2s_t}{T\eta\lambda}(F(x_0)-F(x^*)).
   \end{eqnarray*}  
\end{lemma}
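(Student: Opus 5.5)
We will follow the same template as Lemma~\ref{up-adjustment-lemma}, now using the noiseless variance condition of Definition~\ref{condition2}. The starting point is the telescoped descent inequality of Lemma~\ref{up-adjustment1-lemma}, which only uses $L$-smoothness and Condition~1 (shared by both definitions). It gives
\[
\sum_{j=1}^T\frac{\eta\lambda}{s_t}\Expect\left(\Vert\bigtriangledown F(x_j)\Vert^2\right)-\frac{\eta^2L}{2s_t^2}\sum_{j=1}^T\Expect\left(\Vert G(\xi_j,x_j)\Vert^2\right)\le F(x_1)-F(x^*).
\]
The coefficient $\frac{\eta^2L}{2s_t^2}$ may be replaced by the larger $\frac{2\eta^2L}{s_t^2}$, since this only weakens the inequality. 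We do this to match the constant in (\ref{second-lemma-ineqn}).

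Next we bound the second moment of the stochastic gradient. Lemma~\ref{stochastic-bound-lemma} with $\sigma_0=0$ yields $\Expect_\xi\Vert G(\xi,x)\Vert^2\le(2+2\sigma_1^2)\Vert\bigtriangledown F(x)\Vert^2$. We apply it conditionally on $x_j$ and then take total expectation (tower property), so that
\[
\Expect\Vert G(\xi_j,x_j)\Vert^2\le(2+2\sigma_1^2)\,\Expect\Vert\bigtriangledown F(x_j)\Vert^2.
\]
Substituting and collecting terms, each summand becomes
\[
\frac{\eta\lambda}{s_t}\left(1-\frac{2\eta(2+2\sigma_1^2)L}{s_t\lambda}\right)\Expect\Vert\bigtriangledown F(x_j)\Vert^2 .
\]
By (\ref{second-lemma-ineqn}) the bracket is at least $\frac12$, so
\[
\sum_{j=1}^T\frac{\eta\lambda}{2s_t}\,\Expect\Vert\bigtriangledown F(x_j)\Vert^2\le F(x_1)-F(x^*).
\]

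Finally we bound the sum from below by $T\cdot\frac{\eta\lambda}{2s_t}\min_j\Expect\Vert\bigtriangledown F(x_j)\Vert^2$ and divide through, which gives the bound $\frac{2s_t}{T\eta\lambda}(F(x_1)-F(x^*))$. Since $x_1=x_0$, this is exactly the claimed bound.

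The only delicate point is the conditioning step: $x_j$ depends on $\xi_1,\ldots,\xi_{j-1}$, and we need $\xi_j$ to be independent of that history so the pointwise bounds of Definition~\ref{condition2} pass through the expectation. We will state this independence explicitly, as is implicit in Lemma~\ref{up-adjustment1-lemma}. Everything else is direct algebra.
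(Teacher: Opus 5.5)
Your proposal is correct and follows the paper's proof essentially step for step. Both start from Lemma~\ref{up-adjustment1-lemma} with the coefficient enlarged to $\frac{2\eta^2L}{s_t^2}$, apply Lemma~\ref{stochastic-bound-lemma} with $\sigma_0=0$, use (\ref{second-lemma-ineqn}) to bound the bracket below by $\frac12$, and pass to the minimum using $x_1=x_0$. You also state two points the paper leaves implicit: that enlarging the coefficient only weakens the inequality, and that $\xi_j$ must be independent of the history so the conditional bounds carry through to total expectation.
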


\begin{proof}
By Lemma~\ref{up-adjustment1-lemma}, we have    \begin{eqnarray*}
&&\sum_{j=1}^T\Expect\left(\frac{\eta\lambda}{ s_t}\Vert \bigtriangledown F(x_{j})\Vert^2\right)-\frac{2\eta^2 L}{ s_t^{2}}\sum_{j=1}^T \Expect (\Vert G(\xi_{j}, x_{j})\Vert^2)\le F(x_1)-F(x^*).
   \end{eqnarray*} 
By Lemma~\ref{stochastic-bound-lemma}, we have 
   \begin{eqnarray*}
&&\sum_{j=1}^T\Expect\left(\frac{\eta\lambda}{ s_t}\Vert \bigtriangledown F(x_{j})\Vert^2\right)-\sum_{j=1}^T \frac{2\eta^2 L}{s_t^{2}}((2+2\sigma_1^2)\Vert \bigtriangledown F(x_{j})\Vert^2)\le F(x_1)-F(x^*).
   \end{eqnarray*}

   \begin{eqnarray*}
&&\sum_{j=1}^T\left(\frac{\eta\lambda}{s_t}-\frac{2\eta^2 (2+2\sigma_1^2) L}{ s_t^{2}}\right)\Expect(\Vert \bigtriangledown F(x_{j})\Vert^2)
\le F(x_1)-F(x^*).
   \end{eqnarray*} 

   \begin{eqnarray*}
&&\sum_{j=1}^T\frac{\eta\lambda}{s_t}\left(1-\frac{2\eta (2+2\sigma_1^2) L}{ s_t\lambda}\right)\Expect(\Vert \bigtriangledown F(x_{j})\Vert^2)
\le F(x_1)-F(x^*).
   \end{eqnarray*}

By inequality~(\ref{second-lemma-ineqn}), we have 
   \begin{eqnarray*}
&&\sum_{j=1}^T\left(\frac{\eta\lambda}{ 2s_t}\Expect(\Vert \bigtriangledown F(x_{j})\Vert^2\right)\le F(x_1)-F(x^*).
   \end{eqnarray*}

 We have

   \begin{eqnarray*}
T\cdot \left(\frac{\eta\lambda}{2s_t}\right)\min_{j} \Expect(\Vert \bigtriangledown F(x_{j})\Vert^2 )\le \sum_{j=1}^T(\frac{\eta}{ 2s_t})\Expect(\Vert \bigtriangledown F(x_{j})\Vert^2)\le (F(x_1)-F(x^*)).
   \end{eqnarray*} 

This brings inequality

   \begin{eqnarray*}
&&\min_{1\le j\le T} \Expect(\Vert \bigtriangledown F(x_{j})\Vert^2 )
\le \frac{2s_t}{ T\eta\lambda}(F(x_1)-F(x^*))=\frac{2s_t}{ T\eta\lambda}(F(x_0)-F(x^*)).\label{final2-bound}
   \end{eqnarray*}    
\end{proof}

Assume $F(.)$ and it stochastic gradient  $G(\xi,x)$ satisfy the conditions in Definition~\ref{condition2}. 
Theorem~\ref{main2-thm} shows that the gradient descent algorithm SGD(.) converges to a stationary point at rate $\Omega\left(\frac{1}{T^{1-\beta_0}}\right)$ for any $\beta_0\in (0,1)$. The parameter $t$ depends on parameter $\beta_0$.

\begin{theorem}\label{main2-thm}Let $\beta_0\in (0,1)$.
Suppose $F(.)$ is in $\mathbb{C}_L^1$ and $\inf_x F(x)>-\infty$. Function $G(\xi,x)$ satisfies the conditions in Definition~\ref{condition2}.  Assume 
\begin{eqnarray}
T&\ge& \left(\frac{2\eta L(1+\sigma_1^2)}{  \lambda}\right)^\frac{1}{ \beta_0}\label{condition-T2-ineqn}\\ t&=&\ceiling{\ceiling{\log_2T}\cdot \beta_0}\label{condition-t2-ineqn}.    
\end{eqnarray}
 Then the algorithm SGD$(G(.,.), \eta,  x_0, t, T)$ is arithmetically simple and  has 
 
 \begin{eqnarray}
\min_{1\le j\le T} \Expect(\Vert \bigtriangledown F(x_{j})\Vert^2 )
\le \frac{V(\eta,\lambda,L)}{T^{1-\beta_0}},\label{final3-bound}
   \end{eqnarray}   
   
 where $V(\eta,\lambda,L)=\frac{2}{\eta\lambda}(F(x_0)-F(x^*))$.
\end{theorem}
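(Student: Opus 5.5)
The plan is to follow the same route as Theorem~\ref{main-thm}: first pin $s_t=2^t$ between explicit powers of $T$, then verify the step-size hypothesis of the descent inequality, then sum. One warning up front: with the $t$ given by (\ref{condition-t2-ineqn}), this route produces the stated constant only up to a bounded multiplicative factor.

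\emph{Step 1 (range of $s_t$).} Put $a=\ceiling{\log_2 T}$, so $T\le 2^a<2T$. Put $t=\ceiling{a\beta_0}$, so $a\beta_0\le t<a\beta_0+1$. Hence
$$T^{\beta_0}\le s_t=2^t<2\cdot(2T)^{\beta_0}=2^{1+\beta_0}T^{\beta_0}.$$
The division by $2^t$ is a shift, so the method is arithmetically simple, exactly as in Theorem~\ref{main-thm}.

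\emph{Step 2 (step-size hypothesis).} I would not feed (\ref{second-lemma-ineqn}) as stated into the argument. That inequality asks for $s_t\ge 8\eta L(1+\sigma_1^2)/\lambda$, whereas (\ref{condition-T2-ineqn}) only gives $s_t\ge T^{\beta_0}\ge 2\eta L(1+\sigma_1^2)/\lambda$. Instead I would go back to Lemma~\ref{up-adjustment1-lemma}, whose quadratic term carries the coefficient $\frac{\eta^2L}{2s_t^2}$ with no extra factor $2$. Combining it with Lemma~\ref{stochastic-bound-lemma} for $\sigma_0=0$, i.e.\ $\Expect\Vert G\Vert^2\le(2+2\sigma_1^2)\Vert\nabla F\Vert^2$, gives
$$\sum_{j=1}^T\frac{\eta\lambda}{s_t}\Bigl(1-\frac{\eta L(1+\sigma_1^2)}{s_t\lambda}\Bigr)\Expect\Vert\nabla F(x_j)\Vert^2\le F(x_1)-F(x^*).$$
Under (\ref{condition-T2-ineqn}) the bracket is at least $1/2$. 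Bounding the sum below by $T$ times its minimum term then yields
$$\min_j\Expect\Vert\nabla F(x_j)\Vert^2\le\frac{2s_t}{T\eta\lambda}\bigl(F(x_0)-F(x^*)\bigr)=\frac{s_t}{T^{\beta_0}}\cdot\frac{V(\eta,\lambda,L)}{T^{1-\beta_0}}.$$
This is the conclusion of Lemma~\ref{up-adjustment-lemma2}, now obtained under the hypothesis actually available.

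\emph{Step 3 (the constant; main obstacle).} The displayed bound equals $V/T^{1-\beta_0}$ exactly when $s_t\le T^{\beta_0}$. Step~1 shows the opposite: $s_t\ge T^{\beta_0}$, with equality only when $T$ and $a\beta_0$ are such that no rounding occurs. So for the $t$ prescribed by (\ref{condition-t2-ineqn}), the argument gives $\min_j\Expect\Vert\nabla F(x_j)\Vert^2\le 2^{1+\beta_0}V/T^{1-\beta_0}<4V/T^{1-\beta_0}$, not the literal constant $V$.

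To close this gap I would first try to sharpen the bracket. Because $s_t$ exceeds $2\eta L(1+\sigma_1^2)/\lambda$ by the factor $s_t/T^{\beta_0}$, the bracket is at least $1-\frac{T^{\beta_0}}{2s_t}$. This recovers part of the loss but not all of it: the resulting bound is $\frac{2s_t^2}{2s_t-T^{\beta_0}}\cdot\frac{F(x_0)-F(x^*)}{T\eta\lambda}$, which is still $\ge V/T^{1-\beta_0}$.

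If the sharpening fails, I would state the result honestly as $\min_j\Expect\Vert\nabla F(x_j)\Vert^2\le c\,V/T^{1-\beta_0}$ with $c=2^{1+\beta_0}$. Equivalently, one keeps the literal constant $V$ by choosing $t=\floor{\beta_0\log_2T}$, so that $s_t\le T^{\beta_0}$, and strengthening (\ref{condition-T2-ineqn}) by a factor $2$, so that the bracket is still at least $1/2$. Only this second version matches the statement verbatim. I expect reconciling the rounding direction of $t$ with the exact constant $V$ to be the real difficulty, since everything else is a direct instantiation of Lemmas~\ref{up-adjustment1-lemma} and~\ref{stochastic-bound-lemma}.
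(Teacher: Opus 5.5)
Your Steps~1 and~2 follow the paper's own route: bound $s_t$ between powers of $T$, verify a step-size condition, and apply the summed descent inequality. At both points you flag, the paper's proof is wrong and yours is right.

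\begin{itemize}
\item The paper asserts that $s_t\ge T^{\beta_0}\ge 2\eta L(1+\sigma_1^2)/\lambda$ gives (\ref{second-lemma-ineqn}). That inequality in fact needs $s_t\ge 8\eta L(1+\sigma_1^2)/\lambda$. Going back to Lemma~\ref{up-adjustment1-lemma}, whose quadratic coefficient is $\eta^2L/(2s_t^2)$, is the correct repair.
\item The paper then reads off (\ref{final3-bound}) directly from Lemma~\ref{up-adjustment-lemma2}. This silently replaces $s_t$ by $T^{\beta_0}$, right after having shown only $s_t\le 4T^{\beta_0}$. Its argument therefore gives at best $4V/T^{1-\beta_0}$. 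In Theorem~\ref{main-thm} the analogous factor $4$ was absorbed into $U$; here it was dropped.
\end{itemize}

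Do not pursue the sharpening. The statement with the literal constant $V$ is false, so no argument can close the gap. Here is a counterexample.
\begin{itemize}
\item Take $m=1$ and the exact oracle $G(\xi,x)=F'(x)$, so $\lambda=1$ and $\sigma_1=0$.
\item Take the Huber function $F(x)=\frac L2x^2$ for $|x|\le g/L$ and $F(x)=g|x|-\frac{g^2}{2L}$ otherwise. It lies in $C^1_L$ with $F^*=0$.
\item Let $\beta_0=0.61$ and $T=17$. Then $\lceil\log_2 T\rceil=5$, $t=\lceil 3.05\rceil=4$, and $s_t=16=T-1$.
\item Choose $\eta$ with $2\eta L=17^{0.61}$, so (\ref{condition-T2-ineqn}) holds with equality.
\item Start from $x_0=g/L+\eta g$. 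The iterates move left by $\eta g/16$ per step, so $x_1,\dots,x_{17}$ all lie in $[g/L,x_0]$ and $|F'(x_j)|=g$ for every $j\le 17$.
\item Meanwhile $F(x_0)-F^*=\frac{g^2}{2L}+\eta g^2$.
\end{itemize}
Hence
\[
\frac{V}{T^{1-\beta_0}}=\frac{2}{\eta}\Bigl(\frac{g^2}{2L}+\eta g^2\Bigr)17^{-0.39}=\frac{2\,(1+17^{0.61})}{17}\,g^2<g^2=\min_{1\le j\le 17}|F'(x_j)|^2 .
\]
The strict inequality holds because $17^{0.61}<17^{2/3}<7.5$, using $17^2<7.5^3$. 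The factor $2$ built into $V$ cannot absorb the rounding ratio $s_t/T^{\beta_0}$ that the ceiling in (\ref{condition-t2-ineqn}) permits. That ratio can approach $2^{1+\beta_0}$; here it is about $2.84$.

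So the provable statement is the one you derived, with constant $2^{1+\beta_0}V$ (or the paper's cruder $4V$). Your floor-rounding variant is also correct, but it is a different theorem. It changes both $t$ and the hypothesis on $T$, so only its constant matches the stated result, not the statement itself.
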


\begin{proof}Let $T$ be the number of steps to run. We select $a=\ceiling{\log_2 T}$. We have $T\le 2^a\le 2T$. Let $t=\ceiling{a\beta_0}\le {a\beta_0}+1$. We have $T^{\beta_0}\le 2^{a\beta_0}\le 2^t\le 2^{a\beta_0+1}=2\cdot (2^a)^{\beta_0}\le 2\cdot (2T)^{\beta_0}\le 4 T^{\beta_0}$. Thus, $s_t=2^t\in [T^{\beta_0}, 4T^{\beta_0}]$.
With the condition $T\ge \left(\frac{2\eta L(1+\sigma_1^2)}{  \lambda}\right)^\frac{1}{ \beta_0}$, we have
$s_t=2^t\ge T^{\beta_0}\ge \left(\frac{2\eta L(1+\sigma_1^2)}{  \lambda}\right)$. So, inequality (\ref{second-lemma-ineqn}) is satisfied.

Run the algorithm with SGD$(G(.), \eta,  x_0, t, T)$. By Lemma~\ref{up-adjustment-lemma2}, we get inequality (\ref{final3-bound}).

\end{proof}

\vskip 20pt
{\bf Algorithm} Static2-SGD$(   x_0,T)$

Input: 
\begin{itemize}
    \item $x_0$ is the start point

    \item $T$ is the number of iterations
\end{itemize}

Steps:

\begin{enumerate}[label=\arabic*.]

\item 
Assign to $t$ as equation (\ref{condition-t2-ineqn}) 
\item
SGD$(G(.,.), \eta,  x_0, t, T)$

\end{enumerate}

{\bf End of Algorithm}

\vskip 20pt

\begin{corollary}\label{SGD-cor}
Let $\delta\in (0,1)$. Let $\beta_0\in (0,1)$.
Suppose $F(.)$ is in $\mathbb{C}_L^1$ and $\inf_x F(x)>-\infty$. Function $G(\xi,x)$ satisfies the conditions in Definition~\ref{condition2}.  Assume $T$ satisfies condition (\ref{condition-T2-ineqn}).
Then with probability at least $1-\delta$, the algorithm Static2-SGD$(G(.,.), \eta,  x_0, t, T)$ is arithmetically simple and  has \begin{eqnarray*}
&&\min_{1\le j\le T} (\Vert \bigtriangledown F(x_{j})\Vert^2 )
\le \frac{V(\eta,\lambda,L)}{ \delta T^{1-\beta_0}}.\label{final2b-bound}
   \end{eqnarray*}  
\end{corollary}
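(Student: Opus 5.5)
The plan is to derive the statement from Theorem~\ref{main2-thm} together with Markov's inequality, in exactly the way Corollary~\ref{SGD-cor1} was obtained from Theorem~\ref{main-thm}.

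First, I will observe that Static2-SGD$(x_0,T)$ is literally the call SGD$(G(.,.),\eta,x_0,t,T)$ with $t=\ceiling{\ceiling{\log_2T}\cdot\beta_0}$ as in (\ref{condition-t2-ineqn}). Since $T$ is assumed to satisfy (\ref{condition-T2-ineqn}) and $G$ satisfies Definition~\ref{condition2}, Theorem~\ref{main2-thm} applies verbatim. It yields
\[
\min_{1\le j\le T}\Expect(\Vert \bigtriangledown F(x_j)\Vert^2)\le \frac{V(\eta,\lambda,L)}{T^{1-\beta_0}}.
\]
Arithmetic simplicity is inherited. The step size is $\eta/2^t$, so the update is a binary shift. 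The exponent $t$ is obtained from the bit length $a=\ceiling{\log_2 T}$ of $T$ and a multiplication by $\beta_0$ followed by rounding up, so no general division or square root is needed, as the proof of Theorem~\ref{main2-thm} already notes.

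Second, I will pass from the expectation bound to a high-probability bound. Let $j_0$ be an index attaining the minimum of $\Expect(\Vert \bigtriangledown F(x_j)\Vert^2)$, and set $X=\Vert \bigtriangledown F(x_{j_0})\Vert^2\ge 0$. Then $\min_j \Vert \bigtriangledown F(x_j)\Vert^2\le X$ pointwise. By Markov's inequality,
\[
\Prob\bigl(X\ge \Expect(X)/\delta\bigr)\le\delta,
\]
so with probability at least $1-\delta$ we get
\[
\min_j\Vert\bigtriangledown F(x_j)\Vert^2\le X<\frac{\Expect(X)}{\delta}\le \frac{V(\eta,\lambda,L)}{\delta T^{1-\beta_0}}.
\]

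The only delicate point is the interchange between the minimum and the expectation. The theorem bounds $\min_j\Expect(\cdot)$, whereas the corollary needs a bound on the random quantity $\min_j(\cdot)$. Fixing the deterministic index $j_0$ before applying Markov's inequality resolves this. An alternative is to note that $\Expect(\min_j Y_j)\le\min_j\Expect(Y_j)$ and apply Markov's inequality directly to $\min_j Y_j$. Everything else is a direct citation of earlier results.
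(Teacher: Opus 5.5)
Your reduction is the paper's own proof: invoke Theorem~\ref{main2-thm}, then apply Markov's inequality. Your step of fixing the deterministic index $j_0$ that minimizes $\mathbb{E}\Vert\nabla F(x_j)\Vert^2$ correctly handles the passage from $\min_j\mathbb{E}$ to $\min_j$, which the paper leaves implicit. The genuine problem is the result you cite ``verbatim''. Theorem~\ref{main2-thm} does not hold with the stated $V(\eta,\lambda,L)=\frac{2}{\eta\lambda}(F(x_0)-F(x^*))$. So the corollary as written is false, and neither your argument nor the paper's can establish it.

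\textbf{Where the theorem's proof fails.} It obtains $\min_j\mathbb{E}\Vert\nabla F(x_j)\Vert^2\le\frac{2s_t}{T\eta\lambda}(F(x_0)-F(x^*))$ from Lemma~\ref{up-adjustment-lemma2}. It then implicitly replaces $s_t$ by $T^{\beta_0}$, but only $s_t\le 4T^{\beta_0}$ is available, and that gives $4V$, not $V$.

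A second, repairable slip: (\ref{condition-T2-ineqn}) only yields $\frac{\eta L(1+\sigma_1^2)}{s_t\lambda}\le\frac12$. Hypothesis (\ref{second-lemma-ineqn}) of Lemma~\ref{up-adjustment-lemma2} is four times stronger. The fix is to redo the computation of Lemma~\ref{up-adjustment-lemma} with $\sigma_0=0$, using the true coefficient $\frac{\eta^2L}{2s_t^2}$ from Lemma~\ref{up-adjustment1-lemma}.

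\textbf{The factor $4$ is real.} Take:
\begin{itemize}
\item $G=\nabla F$, so $\lambda=1$ and $\sigma_1=0$, and $\eta=1$, $\beta_0=0.9$, $T=257$;
\item hence $t=\lceil 9\cdot 0.9\rceil=9$ and $s_t=512\approx 3.47\,T^{0.9}$;
\item $L=73$, so $(2\eta L)^{1/0.9}\approx 254\le T$ and (\ref{condition-T2-ineqn}) holds;
\item the Huber function $F(x)=\frac{L}{2}x^2$ for $|x|\le 1/L$ and $F(x)=|x|-\frac{1}{2L}$ otherwise, started at $x_0=0.52$.
\end{itemize}
Every iterate $x_j=0.52-(j-1)/512$ with $1\le j\le 257$ satisfies $x_j\ge 0.02>1/L$. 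Hence $\min_j|F'(x_j)|^2=1$ deterministically. On the other hand, $V/T^{1-\beta_0}\approx 1.026/1.742\approx 0.59$.

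For $\delta=0.9$, the corollary asserts $\min_j|F'(x_j)|^2\le V/(\delta T^{1-\beta_0})\approx 0.65$ with probability at least $0.1$. That event has probability $0$.

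If $V$ is replaced by $\frac{8}{\eta\lambda}(F(x_0)-F(x^*))$, and the repair above is made, your Markov argument goes through unchanged.
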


\begin{proof}
 It follows Theorem~\ref{main2-thm} and Markov inequality $\Prob(X\ge \frac{\Expect(X)}{\delta})\le \delta$.  
 The proof of Theorem~\ref{main2-thm} also shows that $t$ is computed via a arithmetically simple way.
\end{proof}






\subsection{Why Do Static Gradient Methods Need a Parallel Framework?}

Theorems~\ref{main-thm} and~\ref{main2-thm} require Conditions~(\ref{condition-T-ineqn}) and~(\ref{condition-T2-ineqn}), respectively, to guarantee convergence. In practice, however, the parameters $L$, $\eta$, and $\sigma_1$ are typically unknown or difficult to estimate accurately. Consequently, selecting an appropriate iteration budget $T$ in advance is a challenging task. Our parallel framework addresses this difficulty by allowing multiple processors to search over a geometric sequence of candidate values $T_{j,i}$ simultaneously until one of them satisfies the required convergence conditions.

To illustrate the motivation, suppose a single processor tests the iteration budgets
\[
1,2,,2^2,,\ldots,,2^m,,\ldots,
\]
and assume that the smallest satisfactory choice is $T=2^m+1$. Before reaching the first candidate that is at least $T$, namely $2^{m+1}$, the processor must execute
\[
1+2+\cdots+2^m=2^{m+1}-1=2T-3
\]
iterations. Thus, a substantial amount of computation is wasted before identifying a suitable iteration budget. Our parallel framework significantly reduces this overhead by distributing the search across multiple processors. Theorem~\ref{lower-Geometric-thm} establishes a nearly tight lower bound on the unavoidable gap between the target iteration budget $T$ and the cumulative number of iterations $T_{j,i}^*$ executed before reaching it.



\section{Better Adaptivity via the Parallel Framework}

In this section, we apply the parallel framework to static gradient descent, thereby improving the adaptivity of the gradient descent algorithm. Both conditions, (\ref{condition-T-ineqn}) and (\ref{condition-T2-ineqn}), rely on choosing the parameter $T$ to be sufficiently large.

\begin{definition}
    Let $GD(x_0, T)$ be a gradient descent method for a function $F(x)$. The output of $GD(x_0, T)$ is the list  $Z=\langle x_1, x_2,\ldots,x_{\floor{T}}\rangle$  generated in its $\floor{T}$ iterations. It is denoted by $Z=GD(x_0, T)$.
\end{definition}


\begin{definition}
    Let GD$(x_0, T)$ be a gradient descent method for a function $F(x)$. A list of elements $\langle x_1,x_2,\ldots, x_{\floor{T_{j,i}}}\rangle$ is from Parallel-Framework(GD(.),.) at $\langle j,i\rangle$ if $\langle x_1,\ldots, x_{\floor{T_{j,i}}}\rangle$ is the output  of  GD$(x_0, T_{j, i})$.
\end{definition}

\begin{proposition}\label{embeding-prop}
 Let GD($x_0, T)$ be a gradient.
 Assume that
 
 Parallel-Framework$(GD(.),.)$ has $(p,\alpha_p)$-approximation.  Then for any $T\ge T_0$,
 Parallel-Framework$(GD(.))$ executes
 $GD(x_0, T_{j,i})$ satisfying $T\le T_{j,i}\le T_{j,i}^*\le \alpha_p T$.  
\end{proposition}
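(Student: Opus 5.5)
This proposition essentially restates Definition~\ref{App-def}. The approach is to unfold that definition and then read off what the Parallel-Framework actually executes. Fix an arbitrary integer $T\ge T_0$. Since Parallel-Framework$(GD(.),.)$ has a $(p,\alpha_p)$-approximation, Definition~\ref{App-def} gives a processor $j<p$ and a stage $i$ with $T\le T_{j,i}\le T_{j,i}^*<\alpha_pT$, where $T_{j,i}^*=\sum_{t=0}^{i}T_{j,t}$. The strict inequality immediately yields the weaker $T_{j,i}^*\le\alpha_pT$ that appears in the statement.

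Next I will connect this pair $(j,i)$ to an actual execution. By the description of the algorithm, processor $j$ runs stages $0,1,\ldots,i$ in order. At stage $t$ it sets $T_{j,t}=h(j,t)$ and calls $GD(x_0,T_{j,t})$. Since each processor loops forever and the stages are run sequentially, processor $j$ eventually reaches stage $i$ and executes $GD(x_0,T_{j,i})$. Before completing this stage it has performed exactly $\sum_{t=0}^{i}T_{j,t}=T_{j,i}^*$ iterations. This justifies reading $T_{j,i}^*$ as the cumulative work, as the statement intends.

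When $h$ is the specific geometric choice $h(j,i)=b_p^{ip+j}T_0$ with $b_p=(p+1)^{1/p}$, I will instead invoke Theorem~\ref{main2b-thm}. That theorem supplies the hypothesis concretely, with $\alpha_p=(1+\frac1p)(p+1)^{1/p}$, so the conclusion holds for that scheduling function as well.

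I expect no real obstacle here. The only point needing care is that the framework's loop really does reach stage $i$ on processor $j$. That follows because every $T_{j,t}$ is finite, so each earlier stage terminates.
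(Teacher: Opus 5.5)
Your proposal is correct and follows essentially the same route as the paper. The paper's proof just unfolds Definition~\ref{App-def}, weakening the strict inequality to $T_{j,i}^*\le\alpha_pT$, and cites Theorem~\ref{main2b-thm} for the concrete geometric schedule; your extra argument that processor $j$ actually reaches stage $i$ is a harmless elaboration the paper leaves implicit (strictly, the work performed is $\sum_{t=0}^{i}\lfloor T_{j,t}\rfloor\le T_{j,i}^*$ rather than exactly $T_{j,i}^*$ when budgets are non-integers, which only helps).
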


\begin{proof} 
By the condition of $(p, \alpha_p)$-approximation, we have $T\le T_{j,i}\le T_{j,i}^*\le \alpha_p T$. 
It follows from Theorem~\ref{main2b-thm}.
\end{proof}



We embed the first static gradient descent to Parallel framework and have  Theorem~\ref{embed-thm1} about its convergence. It shows that some processor $j$ generates a list of points at a stage $i$  converging to a stationary points with a high probability.

\begin{theorem}\label{embed-thm1} 
Suppose $F(.)$ is in $\mathbb{C}_L^1$ and $\inf_x F(x)>-\infty$. Function $G(\xi,x)$ satisfies the conditions in Definition~\ref{condition2}.  
Assume that $T$ satisfies (\ref{condition-T-ineqn}).
Then with probability at least $1-\delta$, Parallel-Framework(Static1-SGD(.,.),  $ x_0, h(.,.), T_0,p)$ generates $\langle x_1, x_2,\ldots, x_{\floor{T_{j,i}}}\rangle$ at $\langle j, i\rangle$

\begin{eqnarray*}
&&\min_{1\le j\le \floor{T_{j,i}}} (\Vert \bigtriangledown F(x_{j})\Vert^2 )
\le \frac{U(\eta,\sigma_0, \lambda, L)}{\delta \sqrt{T}}.\label{final2d-bound}
   \end{eqnarray*}   after running $T_{j,i}^*$ iterations with $T\le T_{j,i}\le T_{j,i}^*\le \alpha_p T$.
\end{theorem}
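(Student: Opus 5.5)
The plan is to combine the covering property of the parallel framework with the single-run guarantee for Static1-SGD. I would carry it out in three steps.

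\emph{Step 1: locate the run.} Fix a target $T\ge T_0$ satisfying (\ref{condition-T-ineqn}). By Proposition~\ref{embeding-prop}, whose approximation guarantee comes from Theorem~\ref{main2b-thm} with $h(j,i)=b_p^{ip+j}T_0$ and $b_p=(p+1)^{1/p}$, there are a processor $j$ and a stage $i$ with $T\le T_{j,i}\le T_{j,i}^*\le\alpha_pT$. At that stage, processor $j$ calls Static1-SGD$(x_0,T_{j,i})$ from the fresh start point $x_0$. That call produces the list $\langle x_1,\ldots,x_{\floor{T_{j,i}}}\rangle$ at $\langle j,i\rangle$, and it has been reached after $T_{j,i}^*$ total iterations on that processor.

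\emph{Step 2: apply the single-run bound.} The key observation is that condition (\ref{condition-T-ineqn}) is a lower bound on the iteration count only. Since $T_{j,i}\ge T$, the budget $T_{j,i}$ also satisfies (\ref{condition-T-ineqn}), after passing to $\floor{T_{j,i}}\ge T$, which is harmless because $T$ is an integer. Static1-SGD sets $t$ by (\ref{condition-t-ineqn2}) using $T_{j,i}$, so Theorem~\ref{main-thm} and Corollary~\ref{SGD-cor1} apply to that run. They give, with probability at least $1-\delta$,
\[
\min_{k}\Vert\bigtriangledown F(x_k)\Vert^2\le \frac{U(\eta,\sigma_0,\lambda,L)}{\delta\sqrt{T_{j,i}}}\le \frac{U(\eta,\sigma_0,\lambda,L)}{\delta\sqrt{T}},
\]
where the last step uses $T_{j,i}\ge T$ and $U\ge 0$. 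The hypothesis of the statement refers to Definition~\ref{condition2}, the $(\lambda,0,\sigma_1)$ model. That model is the special case $\sigma_0=0$ of Definition~\ref{condition}, so Corollary~\ref{SGD-cor1} applies as stated, with $U$ evaluated at $\sigma_0=0$, and the bound only gets smaller.

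\emph{Step 3: probability.} The probability statement concerns only the single run at $\langle j,i\rangle$. The index pair is determined deterministically by $T$ and $h$, not by the random samples, so no union bound over processors or stages is needed. The Markov step inside Corollary~\ref{SGD-cor1} suffices.

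I expect the main obstacle to be bookkeeping rather than any deep estimate. Three points need care: confirming that the step-size exponent $t$ is recomputed from $T_{j,i}$, so that $s_t\in[\sqrt{T_{j,i}},4\sqrt{T_{j,i}}]\supseteq$ the range required by Lemma~\ref{up-adjustment-lemma}; handling non-integer $T_{j,i}$ through $\floor{\cdot}$; and checking that the monotonicity $1/\sqrt{T_{j,i}}\le1/\sqrt{T}$ points in the right direction.
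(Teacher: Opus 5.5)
Your proposal is correct and follows the paper's own argument. Proposition~\ref{embeding-prop} supplies a stage $\langle j,i\rangle$ with $T\le T_{j,i}\le T_{j,i}^*\le\alpha_pT$, and Corollary~\ref{SGD-cor1} is applied to that single run; your extra checks make explicit what the paper's two-line proof leaves implicit: that (\ref{condition-T-ineqn}) passes to $T_{j,i}\ge T$, that $1/\sqrt{T_{j,i}}\le 1/\sqrt{T}$, that Definition~\ref{condition2} is the case $\sigma_0=0$ of Definition~\ref{condition}, and that $\langle j,i\rangle$ depends only on $T$.

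One small fix to your bookkeeping remark: ``$\supseteq$'' points the wrong way. What Lemma~\ref{up-adjustment-lemma} needs is $s_t\in[\sqrt{N},4\sqrt{N}]$ for the true iteration count $N=\lfloor T_{j,i}\rfloor$, and this holds because $\sqrt{N}\le\sqrt{T_{j,i}}\le s_t<2\sqrt{T_{j,i}}\le 2\sqrt{2N}$.
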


\begin{proof}By Proposition~\ref{embeding-prop}, Parallel-Framework(Static1-SGD(.,.),  $ x_0, h(.,.), T_0,p)$ executes Static1-SGD$(x_0, T_{j,i})$ that generates $\langle x_1, x_2,\ldots, x_{\floor{T_{j,i}}}\rangle$ at $\langle j, i\rangle$ and has $T\le T_{j,i}\le T_{j,i}^*\le \alpha_p T$. 
It follows from Corollary~\ref{SGD-cor1}.
    
\end{proof}

We embed the gradient descent Static2-SGD(.) to Parallel framework in Theorem~\ref{embed-thm2}. It has a faster convergence.

\begin{theorem}\label{embed-thm2} Let $\delta\in (0,1)$. Let $\beta_0\in (0,1)$.
Suppose $F(.)$ is in $\mathbb{C}_L^1$ and $\inf_x F(x)>-\infty$. Function $G(\xi,x)$ satisfies the conditions in Definition~\ref{condition2}.  
Assume that $T$ satisfies  (\ref{condition-T2-ineqn}).
Then with probability at least $1-\delta$, Parallel-Framework(Static2-SGD(.,.),  $ x_0, h(.,.), T_0,p)$ generates $\langle x_1, x_2,\ldots, x_{\floor{T_{j,i}}}\rangle$ at $\langle j, i\rangle$

\begin{eqnarray*}
&&\min_{1\le j\le \floor{T_{j,i}}} (\Vert \bigtriangledown F(x_{j})\Vert^2 )
\le \frac{V(\eta,\lambda,L)}{\delta T^{1-\beta_0}}.\label{final2c-bound}
   \end{eqnarray*}   after running $T_{j,i}^*$ iterations at process $j$  with $T\le T_{j,i}\le T_{j,i}^*\le \alpha_p T$.
\end{theorem}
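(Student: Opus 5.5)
The proof is a composition of two earlier results, mirroring the argument for Theorem~\ref{embed-thm1}. First I will use Proposition~\ref{embeding-prop}, with the scheduling function $h(j,i)=b_p^{ip+j}T_0$ and $b_p=(p+1)^{1/p}$, whose $(p,\alpha_p)$-approximation property is Theorem~\ref{main2b-thm}. For the given $T\ge T_0$ satisfying (\ref{condition-T2-ineqn}), this supplies a processor $j$ and a stage $i$ with
\[
T\le T_{j,i}\le T_{j,i}^*\le \alpha_p T .
\]
By the definition of Parallel-Framework, processor $j$ calls Static2-SGD$(x_0,T_{j,i})$ at stage $i$. It therefore produces the list $\langle x_1,\ldots,x_{\floor{T_{j,i}}}\rangle$ at $\langle j,i\rangle$, after processor $j$ has executed $T_{j,i}^*$ iterations in total.

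Second, I will check that the budget $T_{j,i}$ itself meets the hypothesis of Corollary~\ref{SGD-cor}. The right-hand side of (\ref{condition-T2-ineqn}) does not depend on $T$, and $T_{j,i}\ge T$, so $T_{j,i}$ also satisfies (\ref{condition-T2-ineqn}). There is one technicality: Theorem~\ref{main2-thm} is stated for integer $T$, while $T_{j,i}$ may be non-integer. I would handle this by running $\floor{T_{j,i}}$ iterations, or by assuming $h$ is rounded up to an integer. In either case $t$ is computed from the budget by (\ref{condition-t2-ineqn}), and $s_t\in[T_{j,i}^{\beta_0},4T_{j,i}^{\beta_0}]$ as in the proof of Theorem~\ref{main2-thm}.

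Corollary~\ref{SGD-cor} then gives, with probability at least $1-\delta$,
\[
\min_k \Vert\bigtriangledown F(x_k)\Vert^2\le \frac{V(\eta,\lambda,L)}{\delta\, T_{j,i}^{1-\beta_0}}.
\]
Since $T_{j,i}\ge T$ and $1-\beta_0>0$, we have $T_{j,i}^{1-\beta_0}\ge T^{1-\beta_0}$, which yields the claimed bound with $T$ in the denominator.

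The main obstacle is bookkeeping rather than analysis. The randomness is that of the single run at $\langle j,i\rangle$, so the Markov bound applies to that run alone and no union bound over processors is needed. We also need $F(x_1)=F(x_0)$ in $V$, which holds because every stage restarts from $x_0$. Finally, the integrality issue for $T_{j,i}$ has to be addressed as described above.
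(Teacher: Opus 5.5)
Your reduction is the paper's own proof. Proposition~\ref{embeding-prop}, with the schedule of Theorem~\ref{main2b-thm}, gives $\langle j,i\rangle$ with $T\le T_{j,i}\le T_{j,i}^*\le\alpha_pT$, and Corollary~\ref{SGD-cor} is then applied to the single run Static2-SGD$(x_0,T_{j,i})$. The checks you add, which the paper leaves implicit, are correct. $T_{j,i}$ inherits (\ref{condition-T2-ineqn}) because its right-hand side does not involve $T$. The inequality $T_{j,i}^{1-\beta_0}\ge T^{1-\beta_0}$ converts the bound back to $T$. Since $\langle j,i\rangle$ is fixed by $T$ alone, Markov's inequality concerns one run and no union bound is needed. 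Finally, $x_1=x_0$.

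The problem is the black box you cite: Corollary~\ref{SGD-cor}, via Theorem~\ref{main2-thm}, does not hold with the constant $V$. Lemma~\ref{up-adjustment-lemma2} gives $\frac{2s_t}{T\eta\lambda}(F(x_0)-F(x^*))$. With $s_t\in[T^{\beta_0},4T^{\beta_0}]$, the range you quote yourself, this yields only $4V/T^{1-\beta_0}$. Reaching $V/T^{1-\beta_0}$ would need $s_t\le T^{\beta_0}$, whereas in fact $s_t\ge T^{\beta_0}$. There is a second, fixable slip: (\ref{condition-T2-ineqn}) only gives $s_t\ge 2\eta L(1+\sigma_1^2)/\lambda$, a factor $4$ short of (\ref{second-lemma-ineqn}). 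Keeping the coefficient $\frac{\eta^2L}{2s_t^2}$ from Lemma~\ref{up-adjustment1-lemma} repairs that part.

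The missing factor in the constant is real, so the statement as written is false. Take $p=1$ (so $b_1=2$, $\alpha_1=4$), $T_0=129$, $T=225$, $\beta_0=\delta=0.9$, $\eta=\lambda=1$, $\sigma_1=0$ and $G=\bigtriangledown F$. Let $F$ be the Huber function $F(x)=\frac{65}{2}x^2$ for $|x|\le\frac{1}{65}$ and $F(x)=|x|-\frac{1}{130}$ otherwise. Then $L=65$ and $225\ge 130^{1/0.9}\approx 223.3$, so the hypotheses hold. The only pair with $T\le T_{j,i}\le T_{j,i}^*\le 900$ is $\langle 0,1\rangle$, since $T_{0,0}=129$, $T_{0,1}=258$, $T_{0,1}^*=387$ and $T_{0,2}^*=903$. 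Static2-SGD$(x_0,258)$ uses $s_t=2^9=512$. Starting from $x_0=\frac{257}{512}+\frac{1}{65}$, all $258$ iterates satisfy $x_k\ge\frac{1}{65}$, hence $\Vert\bigtriangledown F(x_k)\Vert^2=1$. By contrast, $\frac{V}{\delta T^{1-\beta_0}}=\frac{2(257/512+1/130)}{0.9\cdot 225^{0.1}}\approx 0.66$, so the claimed event has probability $0$. Your argument, like the paper's, goes through verbatim once $V$ is replaced by $4V$.
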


\begin{proof} By Proposition~\ref{embeding-prop}, Parallel-Framework(Static2-SGD(.,.),  $ x_0, h(.,.), T_0,p)$ executes Static2-SGD$(x_0, T_{j,i})$ that generates $\langle x_1, x_2,\ldots, x_{\floor{T_{j,i}}}\rangle$ at $\langle j, i\rangle$ and has $T\le T_{j,i}\le T_{j,i}^*\le \alpha_p T$.
It follows from Corollary~\ref{SGD-cor}.
    
\end{proof}

The adaptivity is achieved by embedding static gradient descent into the parallel framework. When $T$ is sufficiently large, conditions (\ref{condition-T-ineqn}) and (\ref{condition-T2-ineqn}) are satisfied. The parameters $t$ and $s_t$ are chosen independently of $L$, $\eta$, $\lambda$, $\sigma_0$, and $\sigma_1$. The parallel framework employs a geometric sequence, determined by the number $p$ of processors, to search for an appropriate value of $T$. The step size $s_t$ is then adjusted according to the selected value of $T$.

\section{Avoiding Restarting from Scratch }\label{parallle2-sec}

In this section, we present a refined parallel framework that avoids restarting from scratch at each new stage of a processor. Instead of always using the same initial point $x_0$, the next stage starts from an improved point $x_0^*$ obtained from the $p$ processors, where
\[
F(x_0^*)\le F(x_0).
\]
In this way, the framework exploits the partial progress made during previous stages and reuses it to accelerate convergence in subsequent stages across all processors.

This refinement requires evaluating the objective function $F(\cdot)$ and introduces communication among processors to identify the best current iterate. In this section, we briefly describe this extension and discuss its potential advantages.


\subsection{A Refined Parallel Framework}

In this subsection, we describe a refined parallel framework for gradient descent. For an objective function of the form
\[
F(x)=\sum_{i=1}^{k}f_i(x)^2,
\]
evaluating the objective function $F(x)$ may take significantly longer than computing a stochastic gradient $\bigtriangledown f_i(x)$ with a random $i\in \{1,\ldots, k\}$. 


We also have a parameter $\delta$ to control how many steps for iterations and how many steps to find some $x_i$ with $F(x_i)<F(x_0^*)$.

\vskip 20pt
{\bf Algorithm} Parallel2-Framework$(GD(.,.),  x_0, h(.,.), T_0, p, \delta 
)$

Input: 

\begin{enumerate}[label=\arabic*.]
\item
GD$(x_0, T)$ is a gradient descent method with start point $x_0$, and $T$ iterations.

    \item $T_0$ is the least number of steps to execute.

    \item $h(j, i)\ge T_0: \mathbb{N}\times \mathbb{N}\rightarrow \mathbb{N}$ is a function to assign the number of iterations when calling a gradient descent method GD(.).

    \item $h(T):\mathbb{N}\rightarrow \mathbb{N}$ is a function to determine how many steps will be used to run the selection function. For example, $h(T)=\floor{T/10}$.


    \item $x_0\in \mathbb{R}^m$ is the start point.

    \item $p$ is the number of processors.

    \item $\delta\in(0,1)$.

\item $\{$

\item Let $x_0^*=x_0$ be shared by all processors.

\item Processor $j$ ($j=0,1,\ldots, p-1$):

\begin{enumerate}

   \item Let $i=1$

\item Repeat

   \item $\{$ 

     \item \qquad $T_{j,i}=h(j,i)$

          \item \qquad $t=\delta\cdot T_{j,i}$

\item \qquad $Z=$GD$(x_0^*, T_{j,i}-t)$

\item \qquad Select$(Z,  x_0^*, y_0^*, F(.), t 
                     )$


    \item \qquad Let $i=i+1$

\item $\}$


\end{enumerate}

\item 
$\}$

\end{enumerate}

{\bf End of Algorithm}

\vskip 20pt

\begin{proposition}\label{embeding2-prop}
 Assume that GD($x_0, T)$ is a gradient descent method.
 Assume that Parallel2-Framework$(GD(.))$ has $(p,\alpha_p)$-approximation.  Then 
 Parallel2-Framework$(GD(.))$ executes
 $GD(x_0, (1-\delta)T_{j,i})$ satisfying $T\le (1-\delta)T_{j,i}\le T_{j,i}\le T_{j,i}^*\le \frac{\alpha_p T}{1-\delta}$.  
\end{proposition}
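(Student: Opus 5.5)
The plan is to apply the $(p,\alpha_p)$-approximation property to a rescaled target. Let $T$ be the iteration count we actually want GD to run, with $T\ge (1-\delta)T_0$. Set $T'=\frac{T}{1-\delta}$; then $T'\ge T_0$. Note that $T'$ need not be an integer, while Definition~\ref{App-def} quantifies over integers $T\ge T_0$. I would handle this by applying the definition to $\lceil T'\rceil$. This gives a processor $j$ and a stage $i$ with
\[
\lceil T'\rceil\le T_{j,i}\le T_{j,i}^*<\alpha_p\lceil T'\rceil .
\]

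First I would unpack what Parallel2-Framework runs at $\langle j,i\rangle$. At that stage it sets $t=\delta T_{j,i}$ and calls GD on the current shared point $x_0^*$ with budget $T_{j,i}-t=(1-\delta)T_{j,i}$. The remaining $t$ steps are reserved for Select$(\cdot)$. So the GD call inside stage $\langle j,i\rangle$ has budget $(1-\delta)T_{j,i}$, and the total work of processor $j$ through stage $i$ is still $T_{j,i}^*$.

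Second, the chain of inequalities follows directly. The left end is
\[
(1-\delta)T_{j,i}\ge (1-\delta)T'=T .
\]
The middle inequality $(1-\delta)T_{j,i}\le T_{j,i}$ holds because $\delta\in(0,1)$, and $T_{j,i}\le T_{j,i}^*$ holds trivially since $T_{j,i}^*$ is a sum of nonnegative budgets. For the right end, $T_{j,i}^*<\alpha_p\lceil T'\rceil$, and $\alpha_p T'=\frac{\alpha_pT}{1-\delta}$.

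The main obstacle is the rounding from $T'$ to $\lceil T'\rceil$, which costs an additive $\alpha_p$ on the right. I would first try to avoid it by interpreting the approximation property for real $T\ge T_0$. For the geometric schedule of Theorem~\ref{main2b-thm} this is harmless, since that proof never uses integrality of $T$. Otherwise I would state the bound as $\frac{\alpha_p(T+1)}{1-\delta}$, which is asymptotically the same.

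A second subtlety is that the statement writes $GD(x_0,\cdot)$, while the framework actually starts each stage from $x_0^*$. The counting argument does not depend on the starting point, so I would simply note that the budget guarantee holds regardless of which start point is used.
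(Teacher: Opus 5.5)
Your proposal is correct and follows the paper's proof: set $T'=T/(1-\delta)$, invoke the $(p,\alpha_p)$-approximation at $T'$, and multiply the resulting chain by $1-\delta$. Your remarks on needing $T\ge(1-\delta)T_0$ and on $T'$ possibly being non-integral address points the paper's proof skips; the paper applies the definition directly to $T'$, which is exactly your real-valued reading.
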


\begin{proof} Let $T'=\frac{T}{1-\delta}$. 
By the condition of $(p, \alpha_p)$-approximation, 
there is $T_{j,i}$ with  $T'\le T_{j,i}\le T_{j,i}^*\le \alpha_p T'$. 
This implies $GD(x_0, (1-\delta)T_{j,i})$ satisfying $T\le (1-\delta)T_{j,i}\le T_{j,i}\le T_{j,i}^*\le \frac{\alpha_p T}{1-\delta}$.
\end{proof}


We require a selection function, denoted by Select(.), that chooses one $x_i$ from the sequence $x_1,x_2,\ldots,x_T$ generated by executing GD$(x_0,T)$. The parameter $t$ specifies the maximum number of iterates that Select(.) is allowed to access. 
The following principles may be used to design Select(.):
\begin{itemize}
\item It runs at most the time of   $t$ iterations in $GD(.)$.
\item If an $x_i$ satisfying $F(x_i)<F(x_0^*)$ is found after examining a subset of $Z$, then $x_0^*$ is updated to $x_i$.
\item There is a mutual exclusion to update $x_0^*$ among the $p$ processors.
\end{itemize}

There is no sufficient information to establish quantitative guarantees for the function
$\mathrm{Select}(\cdot)$. Therefore, we leave the implementation and analysis
of $\mathrm{Select}(\cdot)$ as a topic for future research.

\section{Conclusions and Future Developments}

In this paper, we develop a parallel framework that transforms static gradient descent methods into adaptive ones through parallel execution. Given a target number of iterations $T$ that may satisfy the desired convergence conditions, the $p$ processors in the framework search for a suitable parameter $T_{j,i}$ according to a carefully designed geometric sequence. The objective is to minimize the approximation factor $\alpha_p$ while ensuring that $
T\le T_{j,i}\le T_{j,i}^*\le \alpha_p T.
$

Several research directions remain open. First, it will be valuable to identify additional static gradient methods that can be incorporated into this framework. Second, after the number of iterations $T$ is determined, more effective strategies for selecting the corresponding learning rate should be investigated. Third, the arithmetically simple gradient descent methods proposed in this paper eliminate division and square-root operations by replacing them with binary shift operations. Drawing on the author's experience as an FPGA hardware engineer in the computer industry, we believe that this design is more suitable for hardware implementation and chip design. Developing even more efficient gradient descent algorithms for specialized hardware accelerators is therefore an interesting direction for future research.

Another interesting open problem is to close the gap between the current upper bound of $4$ (Corollary~\ref{main2b-cor}) and the lower bound of $2$ (Theorem~\ref{lower-1-2-theorem}) for the approximation factor $\alpha_1$ in a $(1,\alpha_1)$-approximation. Progress on narrowing this gap for the single-processor case may also provide new insights into closing the corresponding gap for $\alpha_p$ when $p>1$.


\section{Acknowledgments}
This author is grateful to Jose Nunez for proofreading an earlier version of this paper and for his helpful comments, which improved the presentation of the paper.

\bibliographystyle{abbrv}
\bibliography{bib}

%% file: bib.bib
@string{Springer = "Springer-Verlag"}

@article{r,
author={Sweeney, Latanya},
title={$k$-anonymity: a model for protecting privacy},
journal={International Journal on Uncertainity Fuzziness Knowledge-Based Systems},
volume={10},
number={5},
pages={557-570},
year={2002}}

@article{WardWuBottou19,
author="R Ward and X Wu and L Bottou",
title="Adagrad stepsizes: Sharp convergence over nonconvex landscapes",
journal="Journal of Machine Learning Research",
volume="21 (219)",
year="2020",
pages="1-30",
}

@inproceedings{XieWuWard2020,
author="Yuege Xie and Xiaoxia Wu and Rachel Ward",
title="Linear convergence of adaptive stochastic gradient descent",
booktitle="Proceedings of the Twenty Third International Conference on Artificial Intelligence and Statistics",
year="2020",
pages="PMLR 108:1475-1485",
}

@article{DuchiHazanSinger2011,
author="John Duchi and Elad Hazan and Yoram Singer",
title="Adaptive Subgradient Methods for Online Learning and Stochastic Optimization",
journal="The Journal of Machine Learning Research",
volume="12",
year="2011",
pages="2121–2159",
}

@article{NemirovskiJuditskyLanShapiro2009,
author="A. Nemirovski and A. Juditsky and G. Lan and A. Shapiro",
title="Robust Stochastic Approximation Approach to Stochastic Programming",
journal="SIAM Journal on Optimization",
volume="19",
year="2009",
pages="1574-1609",
}

@article{DBLP:journals/corr/abs-1212-5701,
  author       = {Matthew D. Zeiler},
  title        = {{ADADELTA:} An Adaptive Learning Rate Method},
  journal      = {CoRR},
  volume       = {abs/1212.5701},
  year         = {2012},
  url          = {http://arxiv.org/abs/1212.5701},
  eprinttype    = {arXiv},
  eprint       = {1212.5701},
  bibsource    = {dblp computer science bibliography, https://dblp.org}
}

@article{DBLP:journals/corr/abs-1002-4908,
  author       = {H. Brendan McMahan and
                  Matthew J. Streeter},
  title        = {Adaptive Bound Optimization for Online Convex Optimization},
  journal      = {CoRR},
  volume       = {abs/1002.4908},
  year         = {2010},
  url          = {http://arxiv.org/abs/1002.4908},
  eprinttype    = {arXiv},
  eprint       = {1002.4908},
  bibsource    = {dblp computer science bibliography, https://dblp.org}
}

@article{BottouCurtisNocedal2018,
author="Léon Bottou, Frank E. Curtis, Jorge Nocedal",
title="Optimization Methods for Large-Scale Machine Learning",
journal="SIAM Reviews",
volume="60(2)",
year="2018",
pages="223–311",
}

@unpublished{KingmaBa2015,
author ="Diederik P. Kingma and Jimmy Ba",
title = "Adam: A Method for Stochastic Optimization",
year = "2015",
note = "arXiv:1412.6980",
}

@inproceedings{OrabonaPal2015,
author="Francesco Orabona and D´avid P´al",
title="Scale-Free Algorithms for Online Linear Optimization",
booktitle="Algorithmic Learning Theory. ALT 2015. Lecture Notes in Computer Science, vol 9355.",
year="2015",
pages="287–301",
}

@article{DBLP:journals/corr/abs-1711-01761,
  author       = {Alexandre D{\'{e}}fossez and
                  Francis R. Bach},
  title        = {AdaBatch: Efficient Gradient Aggregation Rules for Sequential and
                  Parallel Stochastic Gradient Methods},
  journal      = {CoRR},
  volume       = {abs/1711.01761},
  year         = {2017},
  url          = {http://arxiv.org/abs/1711.01761},
  eprinttype    = {arXiv},
  eprint       = {1711.01761},
  bibsource    = {dblp computer science bibliography, https://dblp.org}
}

@inproceedings{TanMaDaiQian2016,
author="Conghui Tan and Shiqian Ma and Yu-Hong Dai and Yuqiu Qian",
title="Barzilai-Borwein Step Size for Stochastic Gradient Descent",
booktitle="30th Conference on Neural Information Processing Systems (NIPS 2016), Barcelona, Spain",
year="2016",
pages="685-693",
}

@inproceedings{DBLP:conf/ijcai/ChenZTYCG20,
  author       = {Jinghui Chen and
                  Dongruo Zhou and
                  Yiqi Tang and
                  Ziyan Yang and
                  Yuan Cao and
                  Quanquan Gu},
  editor       = {Christian Bessiere},
  title        = {Closing the Generalization Gap of Adaptive Gradient Methods in Training
                  Deep Neural Networks},
  booktitle    = {Proceedings of the Twenty-Ninth International Joint Conference on
                  Artificial Intelligence, {IJCAI} 2020},
  pages        = {3267--3275},
  publisher    = {ijcai.org},
  year         = {2020},
  url          = {https://doi.org/10.24963/ijcai.2020/452},
  doi          = {10.24963/IJCAI.2020/452},
  bibsource    = {dblp computer science bibliography, https://dblp.org}
}

@inproceedings{DBLP:conf/iclr/ReddiKK18,
  author       = {Sashank J. Reddi and
                  Satyen Kale and
                  Sanjiv Kumar},
  title        = {On the Convergence of Adam and Beyond},
  booktitle    = {6th International Conference on Learning Representations, {ICLR} 2018,
                  Vancouver, BC, Canada, April 30 - May 3, 2018, Conference Track Proceedings},
  publisher    = {OpenReview.net},
  year         = {2018},
  url          = {https://openreview.net/forum?id=ryQu7f-RZ},
  bibsource    = {dblp computer science bibliography, https://dblp.org}
}

@inproceedings{DBLP:conf/icml/MukkamalaH17,
  author       = {Mahesh Chandra Mukkamala and
                  Matthias Hein},
  editor       = {Doina Precup and
                  Yee Whye Teh},
  title        = {Variants of RMSProp and Adagrad with Logarithmic Regret Bounds},
  booktitle    = {Proceedings of the 34th International Conference on Machine Learning,
                  {ICML} 2017, Sydney, NSW, Australia, 6-11 August 2017},
  series       = {Proceedings of Machine Learning Research},
  volume       = {70},
  pages        = {2545--2553},
  publisher    = {{PMLR}},
  year         = {2017},
  url          = {http://proceedings.mlr.press/v70/mukkamala17a.html},
  bibsource    = {dblp computer science bibliography, https://dblp.org}
}

@article{GhadimiAndLan13,
  author       = {Saeed Ghadimi and
                  Guanghui Lan},
  title        = {Stochastic First- and Zeroth-Order Methods for Nonconvex Stochastic
                  Programming},
  journal      = {{SIAM} J. Optim.},
  volume       = {23},
  number       = {4},
  pages        = {2341--2368},
  year         = {2013},
  url          = {https://doi.org/10.1137/120880811},
  doi          = {10.1137/120880811},
  bibsource    = {dblp computer science bibliography, https://dblp.org}
}

@article{BottouCurtisNocedal18,
  author       = {L{\'{e}}on Bottou and
                  Frank E. Curtis and
                  Jorge Nocedal},
  title        = {Optimization Methods for Large-Scale Machine Learning},
  journal      = {{SIAM} Rev.},
  volume       = {60},
  number       = {2},
  pages        = {223--311},
  year         = {2018},
  url          = {https://doi.org/10.1137/16M1080173},
  doi          = {10.1137/16M1080173},
  bibsource    = {dblp computer science bibliography, https://dblp.org}
}

@article{RobbinsAndMonro51,
author="Herbert Robbins and Sutton Monro",
title="A Stochastic Approximation Method",
journal="Annuals of Mathematical Statistics",
volume="22(3)",
year="1951",
pages="400-407",
}

@inproceedings{FawRoutCaramanisShakkottai23,
  author       = {Matthew Faw and
                  Litu Rout and
                  Constantine Caramanis and
                  Sanjay Shakkottai},
  editor       = {Gergely Neu and
                  Lorenzo Rosasco},
  title        = {Beyond Uniform Smoothness: {A} Stopped Analysis of Adaptive {SGD}},
  booktitle    = {The Thirty Sixth Annual Conference on Learning Theory, {COLT} 2023,
                  12-15 July 2023, Bangalore, India},
  series       = {Proceedings of Machine Learning Research},
  volume       = {195},
  pages        = {89--160},
  publisher    = {{PMLR}},
  year         = {2023},
  url          = {https://proceedings.mlr.press/v195/faw23a.html},
  bibsource    = {dblp computer science bibliography, https://dblp.org}
}

@inproceedings{WangZhangMaChen023,
  author       = {Bohan Wang and
                  Huishuai Zhang and
                  Zhiming Ma and
                  Wei Chen},
  editor       = {Gergely Neu and
                  Lorenzo Rosasco},
  title        = {Convergence of AdaGrad for Non-convex Objectives: Simple Proofs and
                  Relaxed Assumptions},
  booktitle    = {The Thirty Sixth Annual Conference on Learning Theory, {COLT} 2023,
                  12-15 July 2023, Bangalore, India},
  series       = {Proceedings of Machine Learning Research},
  volume       = {195},
  pages        = {161--190},
  publisher    = {{PMLR}},
  year         = {2023},
  url          = {https://proceedings.mlr.press/v195/wang23a.html},
  bibsource    = {dblp computer science bibliography, https://dblp.org}
}

@article{ArjevaniCDFSW23,
  author       = {Yossi Arjevani and
                  Yair Carmon and
                  John C. Duchi and
                  Dylan J. Foster and
                  Nathan Srebro and
                  Blake E. Woodworth},
  title        = {Lower bounds for non-convex stochastic optimization},
  journal      = {Math. Program.},
  volume       = {199},
  number       = {1},
  pages        = {165--214},
  year         = {2023},
  url          = {https://doi.org/10.1007/s10107-022-01822-7},
  doi          = {10.1007/S10107-022-01822-7},
  bibsource    = {dblp computer science bibliography, https://dblp.org}
}

@book{BertsekasTsitsiklis89,
author = "D. P. Bertsekas and  J. N. Tsitsiklis",
title = "Parallel and Distributed Computation: Numerical Methods",
publisher = "Prentice Hall",
address = "",
year = "1989",
ISBN = "ISBN-13: 9780136487005"
}

@inproceedings{reddi2018convergence,
  title={On the Convergence of Adam and Beyond},
  author={Reddi, Sashank J. and Kale, Satyen and Kumar, Sanjiv},
  booktitle={International Conference on Learning Representations (ICLR)},
  year={2018}
}

@article{iacob2025mtdao,
  title={MT-DAO: Multi-Timescale Distributed Adaptive Optimizers with Local Updates},
  author={Iacob, Alex and Jovanovic, Andrej and Safaryan, Mher and Kurmanji, Meghdad and Sani, Lorenzo and Horvath, Samuel and Shen, William F. and Qiu, Xinchi and Lane, Nicholas D.},
  journal={arXiv preprint arXiv:2510.05361},
  year={2025}
}

@inproceedings{recht2011hogwild,
  title={Hogwild!: A Lock-Free Approach to Parallelizing Stochastic Gradient Descent},
  author={Recht, Benjamin and Re, Christopher and Wright, Stephen and Niu, Feng},
  booktitle={Advances in Neural Information Processing Systems},
  volume={24},
  year={2011}
}

@article{li2014scaling,
  title={Scaling Distributed Machine Learning with the Parameter Server},
  author={Li, Mu and Andersen, David G. and Park, Jun Woo and Smola, Alexander and Ahmed, Amr and others},
  journal={USENIX Symposium on Operating Systems Design and Implementation},
  year={2014}
}

@article{dean2012large,
  title={Large Scale Distributed Deep Networks},
  author={Dean, Jeffrey and Corrado, Greg and Monga, Rajat and Chen, Kai and Devin, Matthieu and others},
  journal={Advances in Neural Information Processing Systems},
  volume={25},
  year={2012}
}

@article{cutkosky2018distributed,
  title={Distributed Stochastic Optimization via Adaptive SGD},
  author={Cutkosky, Ashok and Busa-Fekete, Róbert},
  journal={arXiv preprint arXiv:1802.05811},
  year={2018}
}

@inproceedings{Loshchilov2019Decoupled,
  title={Decoupled Weight Decay Regularization},
  author={Loshchilov, Ilya and Hutter, Frank},
  booktitle={International Conference on Learning Representations (ICLR)},
  year={2019}
}

@book{Lan2020,
author = "Guanghui Lan",
title = "First-order and Stochastic Optimization Methods for Machine Learning",
publisher = "Springer",
year = "2020",
ISBN = "978-3-030-39568-1"
}
